\newcolumntype{Y}{>{\RaggedRight\arraybackslash}X}
\newcolumntype{L}[1]{>{\RaggedRight\arraybackslash}p{#1}}
\newcolumntype{C}[1]{>{\Centering\arraybackslash}p{#1}}
\newcommand{\R}{\mathbb{R}}          % real numbers
\newcommand{\E}{\mathbb{E}}          % expectation
\newcommand{\cV}{\mathcal{V}}        % prior space
\DeclareMathOperator*{\argmin}{arg\,min}
\newcommand{\Var}{\mathrm{Var}}
\theoremstyle{plain}
\newtheorem{theorem}{Theorem}
\newtheorem{corollary}{Corollary}
\newtheorem{proposition}{Proposition}
\newtheorem{lemma}{Lemma}
\newtheorem{assumption}{Assumption}
\newtheorem{definition}{Definition}
\newtheorem{remark}{Remark}
\title{\Large Learning When to Trust LLM Priors: \\ A Validated Framework for Semantic Prior Integration}
\author{%
  Erica Zhang$^{\ast}\,^{1}$ \ Naomi Sagan$^{\ast}\,^{1}$ \ Danny Tse$\,\!^{2}$ \ Fangzhao Zhang$\,\!^{1}$ \
  Mert Pilanci$^\dagger\,^{1}$ \ Jose Blanchet$^\dagger\,^{1}$
}
\begin{document}

\maketitle

\begingroup
\renewcommand\thefootnote{}
\footnotetext{
$^{\ast}$ Equal contribution
$^{\dagger}$ Equal advising.
Correspondence to: \texttt{\{yz4232, nsagan\}@stanford.edu>}. 
}
\endgroup
\vspace{2em}
\begin{abstract}
Large language models (LLMs) encode rich semantic knowledge that can be useful for supervised learning, but their outputs are unreliable as statistical priors: they may be noisy, misspecified, or hallucinated. Existing LLM-informed learning methods either trust such signals directly, leaving predictions vulnerable to unreliable LLM guidance, or restrict semantic integration to a single model class. We introduce \emph{Statsformer}, a validated framework for learning when to trust LLM-derived semantic priors in supervised statistical learning. Statsformer maps LLM-derived feature scores into a family of learner-specific prior-injection mechanisms across a heterogeneous library of linear and nonlinear predictors. It then uses out-of-fold validation to adaptively calibrate the influence of each prior-informed learner, allowing useful semantic information to improve prediction while attenuating weak, misspecified, or adversarial priors. This yields a guardrailed statistical learning system with an oracle-style guarantee: up to statistical error, the final predictor performs no worse than the best convex combination of its in-library candidates, including prior-free learners. Across diverse prediction tasks, informative LLM priors improve performance, while unreliable priors are automatically downweighted. These results position \emph{Statsformer} as a reliability-oriented approach to LLM-informed statistical learning: rather than trusting LLM knowledge directly, it validates semantic priors against data before allowing them to influence the final predictor.

% An open-source implementation of Statsformer is available at \href{https://github.com/pilancilab/statsformer}{https://github.com/pilancilab/statsformer}.

% while uninformative or misspecified LLM guidance is automatically downweighted, mitigating the impact of hallucinations across a diverse range of prediction tasks.
\end{abstract}

\section{Introduction}
Modern supervised learning often operates in data-scarce, high-dimensional domains with rich 
feature-level semantics, such as genomics and quantitative finance 
\citep{liu2016ski, fang2020priorfinance}. 
Purely data-driven estimators can overfit and select features unstably 
\citep{cantor2024knowledge}, while stabilizing domain priors, such as biological pathways or 
market-structure assumptions, traditionally require laborious human curation and scale poorly 
across modern prediction tasks.

Large language models (LLMs) offer a new way to obtain such domain 
knowledge at scale. 
Trained on broad scientific and natural-language corpora, they can produce feature-relevance 
scores, textual rationales, and other semantic signals, and a rapidly growing body of work 
explores their use as automated knowledge sources in biomedicine, drug discovery, and clinical 
prediction 
\citep{zheng2025llmdrugdiscovery,liu2024llmsmolecularbiology,he2024surveybiomedicalllms,zhang2025llm-lasso}. 
These outputs constitute a novel form of \emph{domain prior}: machine-generated, grounded in broad 
empirical and linguistic regularities, and available at a scale where human expert priors are 
costly to obtain. 
Yet LLM-derived priors are not ordinary expert priors. 
They may be noisy or hallucinated 
\citep{yao2024llmlieshallucinationsbugs,Huang_2024,tian2023justask}, 
and these failure modes are often opaque to the user. This opacity is especially concerning in 
the high-stakes scientific domains where strong priors are also most valuable. 
The central challenge is therefore not only how to \emph{inject} semantic priors into a model, 
but how to learn \emph{when those priors deserve influence}: how can foundation-model knowledge 
be used without assuming that the foundation model is reliable?

Recent work addresses this question through increasingly LLM-centric learning systems. 
Early approaches such as Language-Interfaced Fine-Tuning (LIFT) \citep{dinh2022lift} use 
fine-tuned LLMs directly as tabular predictors. 
Subsequent ML-agent and AutoML-agent systems extend this idea to full learning pipelines, where 
LLMs propose architectures, engineer features, tune hyperparameters, and refine decisions through 
feedback 
\citep{liu2025mlagentreinforcingllmagents,hong2024datainterpreterllmagent,han2024largelanguagemodelsautomatically,tang2024mlbenchevaluatinglargelanguage,zheng2023gpt4performneuralarchitecture,trirat2025automlagentmultiagentllmframework}. 
Related AutoML work similarly inserts natural-language reasoning into pipeline search and 
optimization, including feature engineering and hyperparameter selection 
\citep{tornede2024automlagelargelanguage,xu2024largelanguagemodelssynergize,hollmann2023largelanguagemodelsautomated,li2024exploringlargelanguagemodels,liu2024largelanguagemodelsenhance,zhang2024mlcopilotunleashingpowerlarge}. 
These systems are flexible and powerful, but they place the LLM in the role of an active 
decision-maker. 
As a result, model quality depends on the reliability of the LLM's choices, with safeguards 
provided mainly through prompting, fine-tuning, or costly feedback loops.

Our work takes a different stance. 
Rather than building full learning pipelines around the LLM, we focus on the statistical learning 
stage itself and treat LLM-derived semantics as candidate priors whose influence must be validated 
by data. 
This reframes LLM-informed learning as a problem of \emph{statistical calibration of semantic 
trust}: the influence of an LLM-derived prior should be determined by empirical evidence rather 
than procedural heuristics or assumed fidelity. This perspective builds on a long statistical tradition. 
Bayesian priors, weighted regularization, and hierarchical penalties all incorporate prior 
knowledge while recognizing that priors of uncertain quality should not be trusted blindly 
\citep{gelman2013bayesian,tibshirani1996regression,zou2006adaptive}. 
Most directly, our approach follows ensemble aggregation theory and the super-learner framework, 
which combine heterogeneous learners through cross-validated weights and provide oracle-style 
performance guarantees relative to the best candidate, up to statistical error 
\citep{dalalyan2012sharp,rigollet2012sparse,vanderlaan2007superlearner}. 
For LLM-informed learning, this principle becomes essential: semantic priors should be used when 
informative, discounted when unreliable, and integrated in a way that remains compatible with 
diverse statistical learners.

Existing attempts to integrate language-model priors into statistical learning fall short of this 
goal in either reliability or scope.
Methods such as LM-Priors \citep{choi2022lmpriorspretrainedlanguagemodels} and LLM-Select 
\citep{jeong2024llmselectfeatureselectionlarge} inject LLM relevance scores directly into 
downstream models without validating their fidelity, leaving them prone to hallucinated or 
adversarial signals. 
More principled formulations such as LLM-Lasso \citep{zhang2025llm-lasso} calibrate LLM-derived 
penalty weights against data, but are confined to a single linear learner and focus primarily on 
feature selection. 
As a result, current approaches offer either unvalidated flexibility or validated but narrow 
scope and none provides a general, data-corroborated treatment of LLM semantics across diverse 
supervised learners with formal performance guarantees.

In this paper, we introduce \emph{Statsformer}, a validated framework for incorporating 
LLM-derived semantic priors into supervised statistical learning. 
Statsformer maps LLM-derived feature scores into learner-specific prior-injection mechanisms 
across heterogeneous linear and nonlinear predictors, then combines prior-informed and prior-free 
candidates through out-of-fold stacking. 
Thus, semantic priors propose inductive biases, while validation data determine whether to trust, 
attenuate, or ignore them. Our key contributions are as follows:
\begin{itemize}[leftmargin=*, nosep]
  \item \textit{\textbf{Validated semantic-prior integration across model classes.}}
  \emph{Statsformer} is a model-agnostic aggregation framework that integrates LLM-derived 
  feature priors into supervised learning across diverse base learners and prior-injection 
  mechanisms, unified through a monotone prior-injection family. 
  It requires only a single LLM query, with no fine-tuning, iterative prompting, or inference-time 
  LLM calls, making it efficient and scalable across language-model choices.

  \vspace{0.5em}
  \item \emph{\textbf{Provable safety under unreliable priors.}}
  Using classical convex aggregation theory, we show that Statsformer competes with any convex 
  combination of candidate learners, including prior-free learners, up to statistical error. 
  To our knowledge, this is the first model-agnostic validated framework for LLM-derived feature 
  priors with such a downstream performance floor, ensuring graceful degradation when the LLM 
  signal is weak, misspecified, or adversarial.

  \vspace{0.5em}
  \item \emph{\textbf{Stress-tested robustness and quantified upside.}}
We evaluate both the safety floor and potential upside of the framework. 
Adversarial-prior experiments show that misleading LLM signals are downweighted toward 
prior-free performance, while oracle-prior simulations quantify the headroom available from 
stronger semantic priors beyond the no-prior baseline.
\end{itemize}

\section{Preliminaries: Semantic Priors and Ensemble Learning}\label{sec:prelim}
\vspace{-0.5em}
\paragraph{Supervised learning with semantic priors.}
We consider supervised learning from labeled samples 
$\mathcal{D}=\{(x_i,y_i)\}_{i=1}^n$, where $x_i\in\mathbb{R}^p$ and 
$y_i\in\mathcal{Y}$.\footnote{In classification, $\mathcal{Y} \subset \mathbb{N}$ and 
$|\mathcal{Y}| < \infty$. In regression, $\mathcal{Y} \subset \mathbb{R}$.} 
The goal is to learn a predictor $f_\theta:\mathbb{R}^p\to\mathcal{Y}$ that minimizes 
the population risk 
$R(f_\theta)\coloneqq \mathbb{E}[\ell(f_\theta(X),Y)]$, approximated by the 
empirical risk 
\begin{align}\label{eq:erm}
\hat R_n(f_\theta) \coloneqq \frac{1}{n}\sum_{i=1}^n \ell(f_\theta(x_i),y_i),
\end{align}
optionally with a regularization term $\lambda\,\Omega(\theta)$ 
\citep{tibshirani1996lasso,zou2005elastic}. 
We denote LLM-derived feature relevance by a \emph{semantic prior vector} 
$V=(v_1,\ldots,v_p)$, where $v_j$ represents the semantic relevance assigned to feature $j$.
\vspace{-0.8em}
\paragraph{Ensemble learning and stacking.}
An ensemble combines a finite collection of predictors $\{\hat f_j\}_{j=1}^J$ through an 
aggregation rule $F$:
$
\hat f(x) \coloneqq F(\hat f_1(x), \ldots, \hat f_J(x)).
$

\emph{Homogeneous ensembles} use a common learning algorithm and create diversity through 
randomization, as in bagging 
\citep{breiman1996bagging,breiman2001randomforest} and boosting 
\citep{freund1996experiments,friedman2001gbm}. 
\emph{Heterogeneous ensembles}, such as stacking 
\citep{wolpert1992stacked,leblanc1996combining,vanderlaan2007super}, combine learners with 
different algorithms and inductive biases, with the aggregation rule fit using out-of-fold 
predictions. 
Statsformer builds on this heterogeneous regime: its candidate learners span both prior-informed 
and prior-free variants across linear and nonlinear model classes, and their aggregation weights 
are learned from out-of-fold validation performance.

\section{The Statsformer Framework}\label{sec:framework}
\vspace{-0.5em}
\begin{wrapfigure}{r}{0.45\textwidth}
\vspace{-\intextsep}  % optional: tightens vertical spacing
\begin{minipage}{\linewidth}
\begin{algorithm}[H]
\caption{Statsformer}
\label{alg:statsformer}
\begin{algorithmic}[1]
\footnotesize
\REQUIRE Data $\mathcal{D} :=\{x_i,y_i\}_{i=1}^n$; feature prior $V$; base learners $\mathcal{B}=\{b_m\}_{m=1}^M$; hyperparameter grids per base learner $\{\Theta_m\}$; number of folds $K$.
\STATE \textbf{OOF stage:}
\FOR{$k=1$ \textbf{to} $K$}
    \FOR{$m=1$ \textbf{to} $M$}
        \FOR{$\theta\in\Theta_m$}
            \STATE {Fit $b_m$ with adapter \\ $A_m(b_m;\mathcal{D}_{(-k)};\tau_\alpha(V))$}
            \STATE Store out-of-fold predictions.
        \ENDFOR
    \ENDFOR
\ENDFOR
\STATE \textbf{Meta stage:} Fit weights $\hat{\pi} \in \Delta$ on OOF predictions.
\STATE \textbf{Refit:} Train base learners on full data and output
\[
\hat f_{\mathrm{SF}}(x)=\sum_{l}\hat\pi_l \hat f_l(x).
\]
\end{algorithmic}
\end{algorithm}
\end{minipage}
\vspace{-1em}
\end{wrapfigure}
\emph{Statsformer} integrates LLM-derived feature priors into supervised learning via a principle we term \emph{validated prior integration}.
In this framework, semantic priors supplied by a foundation model act as an \emph{inductive bias} for supervised learning models, while empirical risk validation determines how strongly we should rely on that bias.
Specifically, we consider an ensemble of statistical learning methods (e.g., Lasso, XGBoost), referred to as \emph{base learners}. A foundation model provides an external feature-level prior
$
V = (v_1,\dots,v_p) \in [0,\infty)^p .
$
For each base learner, the prior $V$ is injected into the learning process (e.g., via feature weights or weighted regularization). The strength of the prior is controlled by a monotone transformation of $V$ parameterized by $\alpha \ge 0$: when $\alpha = 0$, the prior is uniform, and larger values of $\alpha$ impose stronger prior influence. We consider $\alpha \in \mathscr{A}$, a finite set that includes $\alpha=0$.

We then construct a stacking ensemble over all base learners and prior strengths $\alpha \in \mathscr{A}$, fit via cross-validation. The final predictor is a convex combination of the ensemble components, with data-driven weights that calibrate the influence of semantic priors relative to prior-free learners.

This explicit separation between \emph{prior injection} and \emph{prior validation} is central to Statsformer. Semantic priors shape the geometry of individual learners, while their downstream impact is governed by out-of-fold predictive performance. Consequently, the framework exploits informative priors when they align with the data and automatically attenuates or ignores unreliable ones, enabling a principled integration of foundation-model knowledge into statistical learning.
% \begin{algorithm}[t]
% \caption{Statsformer}
% \label{alg:statsformer}
% \begin{algorithmic}[1]
% \footnotesize
% \REQUIRE Data $\mathcal{D} :=\{x_i,y_i\}_{i=1}^n$; feature prior $V$; base learners $\mathcal{B}=\{b_m\}_{m=1}^M$; hyperparameter grids per base learner $\{\Theta_m\}$; number of folds $K$.
% \STATE \textbf{OOF stage:}
% \FOR{$k=1$ \textbf{to} $K$}
%     \FOR{$m=1$ \textbf{to} $M$}
%         \FOR{$\theta\in\Theta_m$}
%             \STATE {Fit $b_m$ with adapter $A_m(b_m;\mathcal{D}_{(-k)};\tau_\alpha(V))$}
%             % on folds $\neq k$.
%             %and, when applicable, weights $w_i(\beta)$.}
%             \STATE Store out-of-fold predictions.
%         \ENDFOR
%     \ENDFOR
% \ENDFOR
% \STATE \textbf{Meta stage:} Fit weights $\hat{\pi} \in \Delta$ on OOF predictions.
% \STATE \textbf{Refit:} Train base learners on full data and output
% \[
% \hat f_{\mathrm{SF}}(x)=\sum_{l}\hat\pi_l \hat f_l(x).
% \]
% \end{algorithmic}
% \end{algorithm}
\subsection{Prior Injection via Monotone Transformations.}\label{subsec:monotone-transformations}
Typically, methods of prior injection into a base learner expect \emph{feature penalties} $\{w_j\}_{j=1}^p$, or \emph{feature weights} $\{s_j\}$.
The further $\{w_j\}$ or $\{s_j\}$ is from a uniform vector, the stronger the inductive bias placed on the learner.

\textbf{Monotone map.}
To modulate the influence of an external prior $V$, Statsformer defines a \emph{monotone, temperature-controlled}\footnote{We provide theoretical justification for monotonicity in the prior injection family in Appendix~\ref{appdx:monotone_map_families}.} family of maps
\[
\tau_\alpha : [0,\infty) \to (0,\infty), \qquad \alpha \ge 0,
\]
satisfying the null condition $\tau_0(v) = 1$ for all $v$. The map $\tau_\alpha$ serves as a generic transformation of prior scores; different base learners may adopt different instantiations of this transformation.

For each feature $j$, the prior score $v_j$ is transformed via $\tau_\alpha(v_j)$ before being injected into the learning procedure. Common instantiations include
\[
w_j(\alpha) = (v_j + \epsilon)^{-\alpha}, \qquad
s_j(\alpha) = v_j^{\alpha},
\]
where $\epsilon > 0$ is a small constant for numerical stability. Here, $w_j(\alpha)$ and $s_j(\alpha)$ represent weight-type and scale-type realizations of the same underlying map $\tau_\alpha$.

\textbf{Injection mechanisms.}
Each base learner $b_m\in\mathcal{B}$ is paired with an \emph{adapter}\footnote{Adapters specify 
how priors are incorporated into learners; guidance on their selection is provided in 
Appendix~\ref{appdx:base_learner}.}, which determines how the transformed prior values 
$\{\tau_\alpha(v_j)\}_{j=1}^p$ enter training. 
Although the concrete injection mechanism is model-specific, Statsformer imposes a common 
principle: the prior enters only through a monotone rescaling at a single, well-defined interface 
of the learning objective. 
This abstraction is model-agnostic and supports diverse learner architectures. 
In practice, it yields three broadly applicable instantiations (see 
Appendix~\ref{appdx:prior-injection} for details):

\begin{enumerate}
    \item \textit{Penalty-based injection:} for generalized linear models, the prior rescales feature-specific regularization strengths, modifying the geometry of the penalty, i.e., the regularizer is $\lambda \sum_{i=1}^p w_i(\alpha) \phi(\theta_i)$, where $\phi$ is a coordinate-wise regularization function.
    \item \textit{Feature-reweighting injection:} for learners whose feature importance is sensitive to scaling, the prior rescales input features prior to training (e.g., replacing input feature $x_{i,j}$ with $x_{i,j} s_j(\alpha)$), thereby amplifying or attenuating their influence throughout optimization.
    For tree-based methods, this same principle manifests through feature sampling proportions.
    \item \textit{Instance-weight injection:} for learners that accept observation weights, the prior induces sample weights that emphasize training examples in which prior-identified features are active.
    Specifically, each term in the empirical risk of \eqref{eq:erm} is scaled by sample weight $\rho_i(\beta)$, where $\beta \in [0, 1]$ modulates the strength of the weighting ($\beta=0$ means uniform weighting). This is a stable first-order approximation to an exponential-tilt solution (Appendix \ref{subsec:kl-instance}).
\end{enumerate} 

We denote the injection mechanism by $A_m(b_m,\mathcal{D};\tau_\alpha(V))$, and when instance-weight injection applies we write $A_m(b_m,\mathcal{D},\beta;\tau_\alpha(V))$. %to make the dependence on $\beta$ explicit.
In all cases, the prior-free learner is recovered at $(\alpha,\beta)=(0,0)$.
Thus, each learner family contains a prior-free baseline.
% In all cases, the original prior-free learner is recovered when  $\alpha=0$, $\beta=0$.
% As a result, each learner family contains a prior-free baseline, providing a built-in ``no-worse-than-null'' safeguard up to statistical error when models are selected via out-of-fold validation.

\subsection{Prior Validation With Out-Of-Fold (OOF) Stacking}
\label{subsec:oof} % note that this also generalizes with non-neg only

Statsformer calibrates the influence of semantic priors through an out-of-fold (OOF) stacking procedure that jointly validates prior strength, model identity, and ensemble weights.
For each base learner $b_m \in \mathcal{B}$, let $\Theta_m$ denote a finite hyperparameter grid\footnote{To reduce computational overhead, some hyperparameters may be, in practice, selected separately from the OOF stacking.} \emph{augmented} with prior-related parameters $\alpha \in \mathscr{A}$ and, when applicable, instance-weight parameters $\beta \in
\mathscr{B}$.
The null configuration the prior-free learner, $(\alpha,\beta)=(0,0)$, is always included in $\Theta_m$.

We perform this stacking via $K$-fold cross-validation.
For each fold $k \in \{1,\dots,K\}$ and each learner
configuration $(m,\theta) \in \{1,\dots,M\}\times\Theta_m$, we fit $b_m$ on the
training folds $\{1,\dots,K\}\setminus\{k\}$ using the adapter $A_m(\cdot;\tau_\alpha(V))$ (with, when applicable, instance weights $\rho_i(\beta)$).
Let superscript $(-k)$ denote training with fold $k$ 
removed, i.e., the predictor fit on $\mathcal{D}\setminus \mathcal{D}_k$. This yields a fitted model
$f_{m,\theta}^{(-k)}$.
We then form out-of-fold (OOF) predictions
$
z_{i,(m,\theta)} = f_{m,\theta}^{(-k)}(x_i)$, where $i \in \text{fold } k.
$

Collecting all OOF predictions produces a design matrix
$
Z \in \mathbb{R}^{n \times L}$ with 
$L = \sum_{m=1}^M |\Theta_m|$,
where each column corresponds to a distinct learner configuration (i.e., base learner and value of $(\alpha, \beta)$).
For notational convenience, we index configurations by a single index
$l \in \{1,\dots,L\}$, with the correspondence $l \;\longleftrightarrow\; (m(l),\theta(l))$
and write $z_l \in \mathbb{R}^n$ for the OOF prediction vector associated with
configuration $l$.

We aggregate these predictions by fitting a simplex-constrained
meta-learner.
For regression tasks, we use non-negative least squares; for classification,
we operate on model logits and apply $\ell_2$-regularized logistic regression.
Formally, the aggregation weights are obtained by solving
\begin{align}
\hat{\pi}
=
\argmin_{\pi \in \Delta^{L-1}}
\hat{R}_n\!\left(
\sum_{l=1}^L \pi_l z_l,\,
Y
\right),
\label{eq:meta-learner}
\end{align}
where $\hat{R}_n$ is defined as in Equation \ref{eq:erm}.
After estimating $\hat{\pi}$, each learner configuration indexed by $l \in \{1,...,L\}$ is refitted on the
full dataset, resulting in the fitted model $\hat f_l.$
The final Statsformer predictor is\footnote{For multiclass classification, we train the out-of-fold stacking meta-learner using a one-versus-rest decomposition, fitting one binary meta-model per class.}
\begin{align}
\hat f_{\mathrm{SF}}(x)
=
\sum_{l=1}^L \hat{\pi}_l \,\hat f_l(x).
\label{eq:final-predictor}
\end{align}

% with $c$ classes, each base learner outputs $c$-dimensional
% logits.
% The OOF predictions are organized into class-specific matrices
% $\{Z^{(j)} \in \mathbb{R}^{n \times L}\}_{j=1}^c$, and aggregation is performed in a
% one-vs-rest fashion.
% Specifically, for each class $j$, letting $y_i^{(j)}=\mathbf{1}\{y_i=j\}$, we solve
% \[
% \hat{\pi}^{(j)}
% =
% \argmin_{\pi \in \Delta^{L-1}}
% \hat{R}_n\!\left(
% \sum_{l=1}^L \pi_l z^{(j)}_l,\,
% Y^{(j)}
% \right),
% \]
% and predict via
% \[
% \hat f_{\mathrm{SF}}(x)
% =
% \argmax_{1 \le j \le c}
% \sum_{l=1}^L \hat{\pi}^{(j)}_l \hat f_l^{(j)}(x).
% \]

OOF aggregation provides a built-in guardrail: since $(\alpha,\beta)=(0,0)$ is included in the dictionary, Statsformer assigns weight to prior-modulated learners only when they improve out-of-fold risk relative to the baseline.
%In practice, we additionally temperature-clip the prior transformation $\tau_\alpha$ and regularize the meta-learner to prevent over-concentration on any single priorized source.
Algorithm~\ref{alg:statsformer} summarizes the procedure, with theoretical guarantees in Section~\ref{sec:theory} and computational analysis in Appendix~\ref{app:comp_analysis}.

Appendix~\ref{appdx:empirical_compute} reports the empirical computational cost of Statsformer.
While Statsformer incurs overhead relative to the base methods, this overhead is modest in practice, and, especially in high-dimensional settings, lower than that of common AutoML systems.

\section{The Statistical Floor: Oracle Guarantee}\label{sec:theory}
\vspace{-0.5em}
Statsformer performs convex aggregation over a finite, data-dependent dictionary of predictors produced by (i) prior injection and (ii) out-of-fold stacking.
This section proves oracle inequalities for the deployed (refit) aggregate and explains the resulting robustness to misspecified priors.

Let $\mathcal{F}=\{f_l\}_{l=1}^L$ denote a finite collection of candidate models obtained from the base library $\mathcal{B}$ by enumerating hyperparameters and prior-strength parameters. We partition the data into $K$ folds, indexed by $\{I_k\}_{k=1}^K$ with $|I_k|=n_k$. For each configuration $l$ and fold $k$, let $f_l^{(-k)}$ be the predictor trained on all data excluding fold $k$. As before,
$\hat f_l$ denotes the corresponding predictor refit on the full sample.
For any aggregation weight vector $\pi\in\Delta^{L-1}$, define the cross-validated empirical risk
\begin{equation}
\hat R_{\mathrm{CV}}(\pi)
\coloneqq
\frac{1}{n}\sum_{i=1}^n
\ell\!\left(\sum_{l=1}^L \pi_l f_l^{(-k(i))}(x_i),\, y_i\right),
\label{eq:cv-risk}
\end{equation}
where $k(i)$ is the fold containing index $i$. Statsformer selects aggregation weights $\hat\pi \in \arg\min_{\pi\in\Delta^{L-1}} \hat R_{\mathrm{CV}}(\pi).$
The predictor uses refit base models $\hat f_l$ with population risk\footnote{Throughout this section, expectations are conditional on $\mathcal{D}$ and taken over an independent test point $(X,Y) \sim P$.}

\begin{equation}
R_{\mathrm{refit}}(\pi)
\coloneqq
\mathbb{E}\!\left[\ell\!\left(\sum_{l=1}^L \pi_l \hat f_l(X),\, Y\right)\right].
\label{eq:refit-risk}
\end{equation}
To connect the empirical CV objective to population performance, we introduce an analysis surrogate
based on OOF predictors, i.e., the cross-fitted population risk:
\begin{equation}
R_{\mathrm{CF}}(\pi)
\coloneqq
\sum_{k=1}^K \frac{n_k}{n}\,
\mathbb{E}\!\left[\ell\!\left(\sum_{l=1}^L \pi_l f_l^{(-k)}(X),\, Y\right)\right].
\label{eq:cf-risk}
\end{equation}
Equivalently, define the randomized cross-fitted aggregate $\tilde f_{\mathrm{SF}}$ that, for a given
input $X$, draws $k$ with probability $n_k/n$ and outputs $\sum_{l=1}^L \hat\pi_l f_l^{(-k)}(X)$.
By construction,
\begin{equation}
R(\tilde f_{\mathrm{SF}})=R_{\mathrm{CF}}(\hat\pi)
\qquad\text{(see Lemma~\ref{lemma:risk_id}).}
\label{eq:risk-id}
\end{equation} 
This identity lets us first bound $R_{\mathrm{CF}}(\hat\pi)$ and then transfer the guarantee to the
deployed refit predictor via a small refit gap. We are now ready to state our main results.

\begin{theorem}[Oracle Guarantees for Validated Prior Integration]
\label{thm:convex-aggregation}
Assume $\ell(\cdot, y)$ is convex, $L_\ell$-Lipschitz, and $\ell_{\mathrm{max}} \coloneqq \sup_{y \in \mathcal Y} |\ell(0,y)|$ is bounded.
Furthermore, assume that all cross-fitted predictors satisfy $\|f_l^{(-k)}(x)\|\le B$ for all $k, l, x$. Then, for any $\delta\in(0,1)$, with probability at least $1-\delta$,
\begin{equation}
R_{\mathrm{CF}}(\hat\pi) \le \inf_{\pi\in\Delta^{L-1}} R_{\mathrm{CF}}(\pi) + C L_\ell B\sqrt{\frac{K\log(\frac{LK}{\delta})}{n}},
\label{eq:oracle-convex-cf}
\end{equation}
where $C>0$ is a universal constant.\footnote{In practice $K$ is fixed and small, so the $K$-dependent factor is $O(1)$ and the rate is $O\!\big(L_\ell B\sqrt{\log(L/\delta)/n}\big)$.}
%\footnote{The constant $C$ absorbs factors depending on the number of folds $K$. Since $K$ is a fixed hyperparameter, we treat this as a constant factor relative to $n$.}
\end{theorem}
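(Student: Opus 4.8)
Start from the observation that it suffices to prove the inequality at the level of the cross-fitted population risk $R_{\mathrm{CF}}$: by the identity $R(\tilde f_{\mathrm{SF}})=R_{\mathrm{CF}}(\hat\pi)$ (Lemma~\ref{lemma:risk_id}) this already controls the randomized cross-fitted aggregate, and transferring to the deployed refit predictor is handled by a separate refit-gap argument. Fix $\pi^\star$ attaining $\inf_{\pi\in\Delta^{L-1}}R_{\mathrm{CF}}(\pi)$ (it exists by compactness of the simplex and continuity of $\ell(\cdot,y)$), and use the textbook empirical-risk-minimization decomposition
\[
R_{\mathrm{CF}}(\hat\pi)-R_{\mathrm{CF}}(\pi^\star)
=\big[R_{\mathrm{CF}}(\hat\pi)-\hat R_{\mathrm{CV}}(\hat\pi)\big]
+\big[\hat R_{\mathrm{CV}}(\hat\pi)-\hat R_{\mathrm{CV}}(\pi^\star)\big]
+\big[\hat R_{\mathrm{CV}}(\pi^\star)-R_{\mathrm{CF}}(\pi^\star)\big].
\]
The middle bracket is nonpositive because $\hat\pi$ minimizes $\hat R_{\mathrm{CV}}$ over $\Delta^{L-1}$. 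In the remaining two brackets I would split each per-sample loss as $\ell\!\big(\sum_l\pi_l f_l^{(-k)}(x),y\big)=\ell(0,y)+\phi_y\!\big(\sum_l\pi_l f_l^{(-k)}(x)\big)$ with $\phi_y(t)\coloneqq\ell(t,y)-\ell(0,y)$; the $\ell(0,y)$ pieces do not depend on $\pi$ and enter the first and third brackets with opposite signs, so they cancel exactly. Grouping the residual terms by fold reduces the task to bounding, for each fold $k$, the empirical process $\sup_{\pi\in\Delta^{L-1}}\big|(\hat{\mathbb E}_k-\mathbb E)\big[\phi_Y\big(\sum_l\pi_l f_l^{(-k)}(X)\big)\big]\big|$, where $\hat{\mathbb E}_k$ denotes the average over the $n_k$ points of fold $k$. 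Because $\phi_y$ is $L_\ell$-Lipschitz with $\phi_y(0)=0$ and $\|f_l^{(-k)}(x)\|\le B$, every member of this class is bounded in absolute value by $L_\ell B$ — this is precisely what forces the constant to be $L_\ell B$ rather than $L_\ell B+\ell_{\mathrm{max}}$.

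Next I would exploit the cross-fitting structure. Conditioning on $\mathcal D_{(-k)}$ (the data outside fold $k$), the predictors $f_l^{(-k)}$ are deterministic and the fold-$k$ observations are i.i.d.\ draws from $P$ independent of them, so the quantity above is a genuine empirical process over the fixed class $\{\,x\mapsto\phi_y(\sum_l\pi_l f_l^{(-k)}(x))\,\}_{\pi\in\Delta^{L-1}}$. I would bound its expected supremum by: (i) symmetrization; (ii) the Ledoux--Talagrand contraction inequality applied with the coordinatewise $L_\ell$-Lipschitz maps $\phi_{y_i}$ (each satisfying $\phi_{y_i}(0)=0$), which strips the loss at cost $O(L_\ell)$ and leaves the convex hull $\{\sum_l\pi_l f_l^{(-k)}\}_{\pi}$; (iii) the fact that the empirical Rademacher complexity of a convex hull equals that of its $L$ generators; and (iv) Massart's finite-class lemma on the $L$ evaluation vectors $(f_l^{(-k)}(x_i))_{i\in I_k}$, each of Euclidean norm at most $B\sqrt{n_k}$. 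This yields $\mathbb E\big[\sup_\pi|(\hat{\mathbb E}_k-\mathbb E)[\cdots]|\,\big|\,\mathcal D_{(-k)}\big]\le C'L_\ell B\sqrt{\log L/n_k}$, uniformly over realizations of $\mathcal D_{(-k)}$.

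To upgrade to a high-probability bound I would invoke bounded differences (McDiarmid) conditionally on $\mathcal D_{(-k)}$: altering one fold-$k$ observation changes $\hat{\mathbb E}_k[g]$ by at most $2L_\ell B/n_k$ for every $g$ in the class, hence moves the supremum by at most $2L_\ell B/n_k$. So with conditional probability at least $1-\delta/K$, $\sup_\pi|(\hat{\mathbb E}_k-\mathbb E)[\cdots]|\le C''L_\ell B\sqrt{\log(LK/\delta)/n_k}$; integrating over $\mathcal D_{(-k)}$ makes this unconditional, and a union bound over the $K$ folds produces an event of probability $\ge 1-\delta$ on which every fold estimate holds. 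Plugging back into the decomposition gives $R_{\mathrm{CF}}(\hat\pi)-\inf_\pi R_{\mathrm{CF}}(\pi)\le 2\sum_k\frac{n_k}{n}\sup_\pi|(\hat{\mathbb E}_k-\mathbb E)[\cdots]|$, and the weighted sum is handled by Cauchy--Schwarz, $\sum_k\frac{n_k}{n}\sqrt{1/n_k}=\frac1n\sum_k\sqrt{n_k}\le\sqrt{K/n}$, which converts the per-fold $1/\sqrt{n_k}$ rates into the claimed $\sqrt{K/n}$ and yields $R_{\mathrm{CF}}(\hat\pi)\le\inf_\pi R_{\mathrm{CF}}(\pi)+CL_\ell B\sqrt{K\log(LK/\delta)/n}$.

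I expect the main obstacle to be the measure-theoretic bookkeeping around cross-fitting: one must keep track that $R_{\mathrm{CF}}$ is itself a function of $\mathcal D$, that $f_l^{(-k)}$ is $\mathcal D_{(-k)}$-measurable (so that fold $k$ is genuinely held out and the symmetrization/contraction steps apply to a fixed class), and that the conditional high-probability statements integrate correctly before the union bound over folds. A secondary technical point is the vector-valued output regime relevant for multiclass logits, where $\|f_l^{(-k)}(x)\|\le B$ is a norm bound: the contraction and Massart steps then need a vector-contraction inequality or a coordinatewise reduction, affecting only the universal constant $C$. The remaining ingredients — the ERM decomposition, the $\ell(0,y)$ cancellation, and convex-hull invariance of Rademacher complexity — are routine.
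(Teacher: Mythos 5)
Your proof is correct and follows the same overall route as the paper's: the ERM oracle decomposition $R_{\mathrm{CF}}(\hat\pi)-R_{\mathrm{CF}}(\pi^\star)=[R_{\mathrm{CF}}(\hat\pi)-\hat R_{\mathrm{CV}}(\hat\pi)]+[\hat R_{\mathrm{CV}}(\hat\pi)-\hat R_{\mathrm{CV}}(\pi^\star)]+[\hat R_{\mathrm{CV}}(\pi^\star)-R_{\mathrm{CF}}(\pi^\star)]$ with the middle bracket killed by optimality, per-fold control by symmetrization plus Lipschitz contraction, convex-hull invariance of Rademacher complexity followed by Massart's finite-class lemma, a union bound across the $K$ folds, and a weighted fold average giving the $\sqrt{K/n}$ scaling. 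Your use of McDiarmid conditionally on $\mathcal D_{(-k)}$ is the same mechanism the paper invokes by citing the standard Rademacher generalization theorem, so these are not genuinely different arguments.

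There is, however, one substantive refinement in your version that is worth recording. By centering the loss as $\phi_y(t)=\ell(t,y)-\ell(0,y)$, the $\ell(0,y)$ pieces cancel exactly between the first and third brackets (since they do not depend on $\pi$ and the brackets have opposite population-minus-empirical signs), so the empirical-process class is uniformly bounded by $L_\ell B$. The paper works instead with the uncentered loss class, derives a bound involving the envelope $M=L_\ell B+\ell_{\max}$, and then declares that $M$ can be ``absorbed into a universal constant $C_1$'' so that $\varepsilon\le C_1 L_\ell B\sqrt{K\log(LK/\delta)/n}$. Strictly speaking this absorption is only legitimate when $\ell_{\max}\lesssim L_\ell B$, which is not implied by the theorem's hypotheses (they only require $\ell_{\max}<\infty$). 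Your centering makes the stated $CL_\ell B\sqrt{K\log(LK/\delta)/n}$ bound correct exactly as written, with a genuinely universal $C$. A secondary cosmetic improvement is your Cauchy--Schwarz bound $\sum_k\frac{n_k}{n}\frac{1}{\sqrt{n_k}}=\frac{1}{n}\sum_k\sqrt{n_k}\le\sqrt{K/n}$, which holds unconditionally and replaces the paper's appeal to approximately balanced folds $n_k\approx n/K$.
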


\begin{proof}
    See Appendix \ref{appdx-subsec:oracle}.
\end{proof}

We now extend this result to the refit Statsformer predictor used in deployment.

\begin{corollary}[Oracle Guarantee for the Refit Statsformer Predictor]
\label{cor:refit}
Assume the conditions of Theorem~\ref{thm:convex-aggregation} hold. Define the refit gap $\Delta_{\mathrm{refit}} := \sup_{\pi\in\Delta^{L-1}} |R_{\mathrm{refit}}(\pi) - R_{\mathrm{CF}}(\pi)|$ and assume $\Delta_{\mathrm{refit}}\le \varepsilon_n$. Then, with probability $\geq1-\delta$,
\begin{equation}
R_{\mathrm{refit}}(\hat\pi)\le\inf_{\pi\in\Delta^{L-1}} R_{\mathrm{refit}}(\pi) + C L_\ell B \sqrt{\frac{K\log(\frac{LK}{\delta})}{n}} + 2\varepsilon_n.
\label{eq:oracle-refit}
\end{equation}
\end{corollary}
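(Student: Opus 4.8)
The plan is to obtain Corollary~\ref{cor:refit} as a short deterministic consequence of Theorem~\ref{thm:convex-aggregation} by sandwiching $R_{\mathrm{refit}}$ between $R_{\mathrm{CF}}\pm\varepsilon_n$, using that the hypothesis $\Delta_{\mathrm{refit}}\le\varepsilon_n$ controls the gap \emph{uniformly} over the simplex. First I would bound the risk at the selected weights $\hat\pi$: since $\Delta_{\mathrm{refit}}$ is a supremum over $\pi\in\Delta^{L-1}$,
\[
R_{\mathrm{refit}}(\hat\pi)\;\le\;R_{\mathrm{CF}}(\hat\pi)+\Delta_{\mathrm{refit}}\;\le\;R_{\mathrm{CF}}(\hat\pi)+\varepsilon_n .
\]

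Next, working on the probability-$(1-\delta)$ event furnished by Theorem~\ref{thm:convex-aggregation}, I would replace $R_{\mathrm{CF}}(\hat\pi)$ by $\inf_{\pi}R_{\mathrm{CF}}(\pi)+C L_\ell B\sqrt{K\log(LK/\delta)/n}$, and then translate the cross-fitted infimum back to the refit one: for every $\pi$ we have $R_{\mathrm{CF}}(\pi)\le R_{\mathrm{refit}}(\pi)+\Delta_{\mathrm{refit}}\le R_{\mathrm{refit}}(\pi)+\varepsilon_n$, so taking infima gives $\inf_\pi R_{\mathrm{CF}}(\pi)\le\inf_\pi R_{\mathrm{refit}}(\pi)+\varepsilon_n$. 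Chaining these three estimates on the same event yields \eqref{eq:oracle-refit}, with the factor $2$ in front of $\varepsilon_n$ arising precisely because the refit gap is charged once on the achieved-risk side and once on the oracle side.

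I expect no analytic obstacle inside the corollary itself: it is a two-sided triangle-inequality argument stacked on the high-probability event of Theorem~\ref{thm:convex-aggregation}, introducing no new randomness. The only genuine subtlety is conceptual and is deliberately absorbed into the hypothesis: the statement is informative only insofar as $\Delta_{\mathrm{refit}}\le\varepsilon_n$ holds, i.e., refitting each configuration on all $n$ points rather than on $n-n_k$ points perturbs its population risk by at most $\varepsilon_n$, uniformly over convex combinations. Making that quantitative would be the hard part and would require stability-type control on the base learners (e.g., uniform algorithmic stability, or a Lipschitz-in-sample-size bound together with $n_k/n=\Theta(1/K)$); the present corollary sidesteps this by positing $\varepsilon_n$ directly.
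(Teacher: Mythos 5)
Your proposal is correct and follows essentially the same route as the paper: the two triangle-inequality bridges $R_{\mathrm{refit}}(\hat\pi)\le R_{\mathrm{CF}}(\hat\pi)+\Delta_{\mathrm{refit}}$ and $\inf_\pi R_{\mathrm{CF}}(\pi)\le \inf_\pi R_{\mathrm{refit}}(\pi)+\Delta_{\mathrm{refit}}$, chained through the high-probability event of Theorem~\ref{thm:convex-aggregation}, are exactly the paper's decomposition. Your closing remark that quantifying $\Delta_{\mathrm{refit}}\le\varepsilon_n$ is the substantive task deferred to stability/consistency arguments also matches the paper's treatment in Appendix~\ref{appdx:refit-suff}.
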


\begin{proof}
    See Appendix \ref{appdx:refit-proof}. Sufficient conditions ensuring $\varepsilon_n \to 0$ are discussed in Appendix~\ref{appdx:refit-suff}.
\end{proof}

The oracle bound formalizes Statsformer's \emph{statistical floor}: 
Theorem~\ref{thm:convex-aggregation} shows that the learned aggregate competes with the best 
convex combination of dictionary predictors, up to statistical error. 
Thus, informative priors can improve performance, while faulty priors are downweighted and cannot 
harm performance beyond this estimation error.\footnote{When boundedness or Lipschitz conditions 
are relaxed, e.g., squared-error regression with sub-Gaussian noise, analogous model-selection 
oracle guarantees still hold; see Appendix~\ref{subsec:extension}.}

\section{Experiments}\label{sec:experiment}
\vspace{-0.5em}
\begin{wraptable}{r}{0.6\textwidth}
\vspace{-\intextsep}
\centering
\scriptsize
\begin{tabular}{lccc}
\toprule
Dataset & Features & Instances & Reference \\
\midrule
Breast Cancer & 1,000 & 1,545 & \cite{stiglic2010gemler} \\
Bank Marketing & 16 & 1,000 & \cite{bank_marketing_222} \\
ETP & 1,000 & 189 & \cite{liu2017genomic} \\
Credit & 20 & 1,000 & \cite{uci_german_credit} \\
Internet Ads & 1,555 & 1,000 & \cite{internet_advertisements_51} \\
Lung Cancer & 1,000 & 1,017 & \cite{weinstein2013cancer} \\
Nomao & 120 & 1,000 & \cite{uci_nomao} \\
Superconductivity & 81 & 1,000 & \cite{uci_superconductivity} \\
\bottomrule
\end{tabular}
\caption{\scriptsize Summary of datasets used in our experiments. Features indicate post-processing counts; instances reflect subsampling when applied. See Appendix~\ref{appdx:datasets} for dataset processing specifics. All datasets are binary classification, except Superconductivity, which is regression.}
\label{tab:datasets}
\end{wraptable}
Section~\ref{result_main} compares Statsformer with strong tabular baselines, including 
AutoML systems; Section~\ref{sec:exp-floor} stress-tests its safety floor under weak or 
adversarial priors; and Section~\ref{sec:exp-ceiling} quantifies the headroom available from 
informative priors beyond the no-prior baseline.
% \begin{table}[h!]
% \centering
% \footnotesize
% \begin{tabular}{lccc}
% \toprule
% Dataset & Features & Instances & Reference \\
% \midrule
% Breast Cancer & 1,000 & 1,545 & \cite{stiglic2010gemler} \\
% Bank Marketing & 16 & 1,000 & \cite{bank_marketing_222} \\
% ETP & 1,000 & 189 & \cite{liu2017genomic} \\
% Credit & 20 & 1,000 & \cite{uci_german_credit} \\
% Internet Ads & 1,555 & 1,000 & \cite{internet_advertisements_51} \\
% Lung Cancer & 1,000 & 1,017 & \cite{weinstein2013cancer} \\
% Nomao & 120 & 1,000 & \cite{uci_nomao} \\
% Superconductivity & 81 & 1,000 & \cite{uci_superconductivity} \\
% \bottomrule
% \end{tabular}
% \caption{\scriptsize Summary of datasets used in our experiments. Features indicate post-processing counts; instances reflect subsampling when applied.
% See Appendix~\ref{appdx:datasets} for dataset processing specifics.
% All datasets are binary classification, except Superconductivity, which is regression.}
% \label{tab:datasets}
% \end{table}
We evaluate Statsformer on tabular datasets with semantically meaningful feature names, varying 
training-set size to test how semantic priors help across data regimes. 
\vspace{-2em}
\subsection{Implementation Details}\label{sec:implementation-details}
\vspace{-0.5em}
We summarize the Statsformer implementation, datasets, and baselines. Full details are in Appendix~\ref{appdx:experimental-details}.

\textbf{Datasets and Data Splitting.}
Our experiments span a diverse set of benchmark datasets across domains including gene expression, marketing, finance, and web analytics, with emphasis on high-dimensional, low-sample regimes.
Dataset details are provided 
\begin{figure*}[t]
    \centering
    \includegraphics[width=1\linewidth]{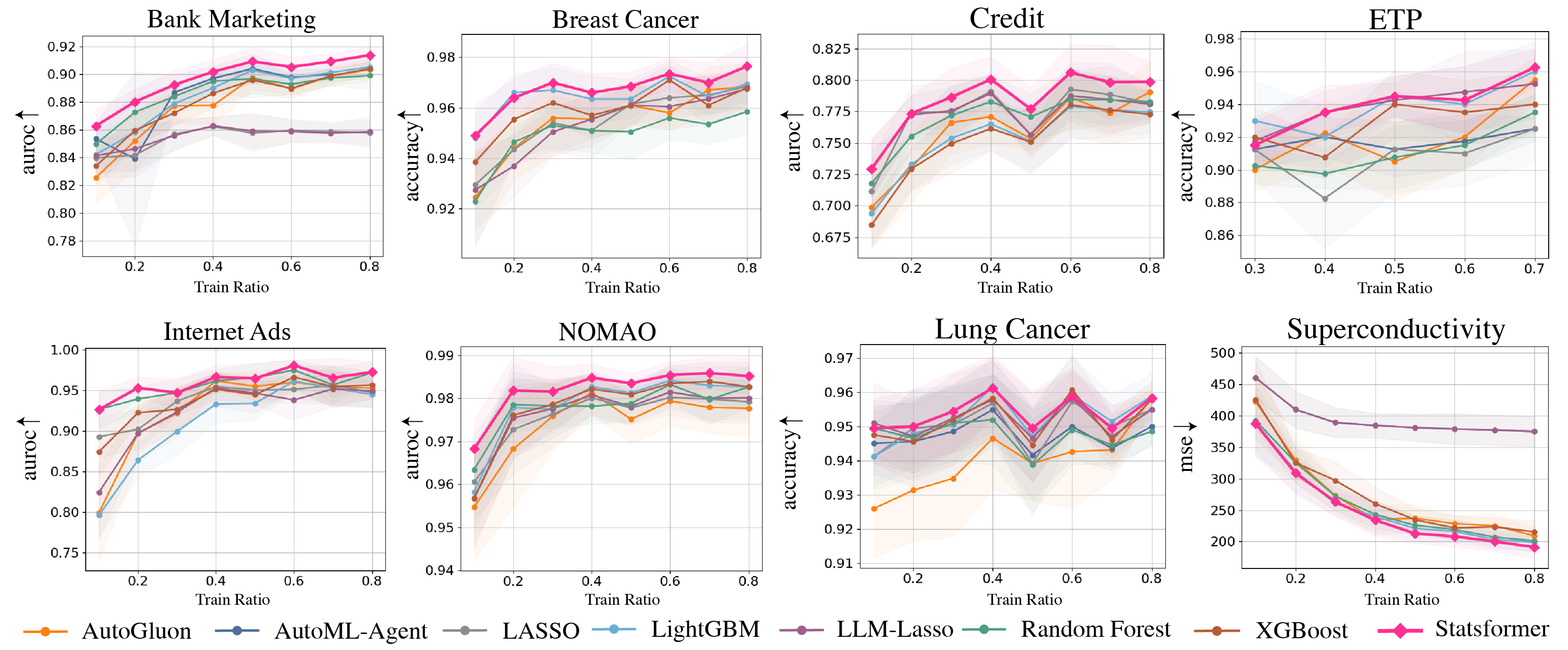}
    \caption{\scriptsize Statsformer performance on a variety of datasets, compared to a variety of baseline methods.
    Note that, due to computational constraints, we only included the AutoML-Agent baseline in Bank Marketing, ETP, and Lung Cancer (see Table~\ref{tab:computation} in the Appendix for a more detailed computational comparison).
    For all datasets, we plot either accuracy or AUROC, where higher is better, except Superconductivity, where we plot mean squared error (lower is better).
    For each training ratio, we plot the mean of the selected metrics 10 different train-test splits (selected via stratified splitting), as well as 95\% confidence intervals.
    Due to the low-sample and imbalanced nature of ETP, we limit the training sizes to be in between $0.3$ and $0.7$ to allow sufficient positive samples in each training and test split.}
    \label{fig:baseline-comparison}
    \vspace{-1.5em}
\end{figure*}
in Appendix~\ref{appdx:datasets}, with a summary in Table~\ref{tab:datasets}. We evaluate performance across training set sizes by subsampling the training data and averaging results over $10$ random splits per training ratio.
% To study performance as a function of training set size, we subsample the training data, sweeping the training ratio over ${0.1, 0.2, \ldots, 0.8}$ while fixing the test ratio at $0.2$. For each ratio, we generate 10 random splits (stratified for classification) and report performance averaged across splits (accuracy and/or AUROC for classification, MSE for regression).

\textbf{Statsformer Implementation.}
We use four strong base learners that admit feature-level prior injection via the adapters in Section~\ref{sec:framework}: Lasso \citep{tibshirani1996lasso} (penalty factors), XGBoost \citep{chen2016xgboost} (feature sampling weights), random forests \citep{breiman2001randomforest} (feature sampling and instance weights), and kernel SVMs \citep{Zhang2011WeightedSVM} (feature scaling).
Details of prior injection and per-learner gains are given in Appendix~\ref{appdx:base_learner}.

% We select four base learners that achieve strong overall performance and admit feature importance through the adapters described in Section~\ref{sec:framework}. Specifically, we consider Lasso \citep{tibshirani1996lasso} with penalty factors; XGBoost \citep{chen2016xgboost} with feature sampling weights, random forests \citep{breiman2001randomforest} with both feature sampling and instance weights; and kernel SVMs \citep{Zhang2011WeightedSVM} with feature-wise scaling. Appendix~\ref{appdx:base_learner} provides details of prior injection for each base learner and corresponding performance improvements.

\begin{figure*}[htbp]
    \centering
    \begin{minipage}[t]{\textwidth}
        \centering
        \includegraphics[width=0.85\linewidth]{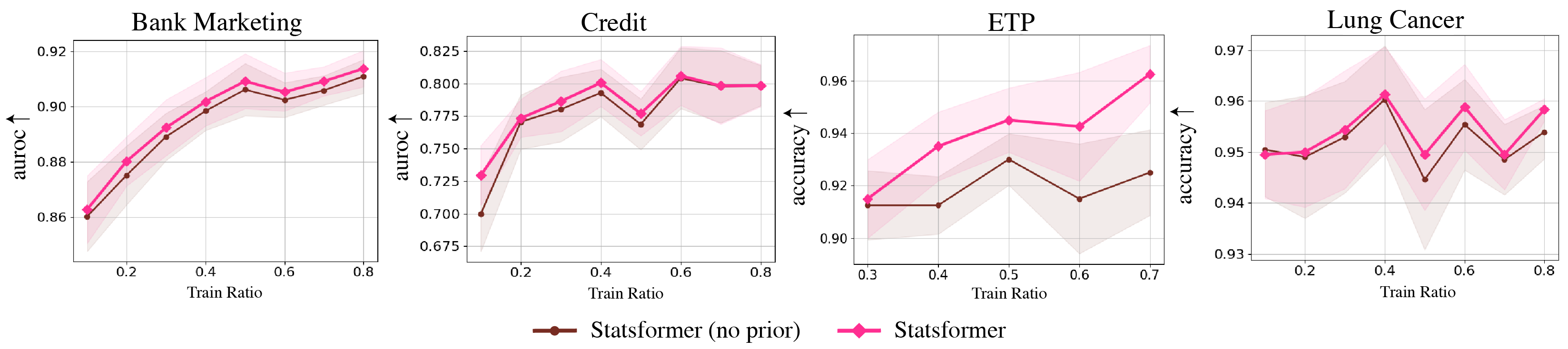}
        \caption{\scriptsize Direct accuracy and AUROC comparison of \textbf{Statsformer} to \textbf{Statsformer (no prior)} for selected datasets.
        Gains are noticeable across all four examples, and significant for ETP.
        See Figure~\ref{fig:appdx_us_vs_stacking} in the Appendix for datasets not shown here.}
        \label{fig:us_vs_stacking}
    \end{minipage}
    \vspace{1em}
    
    \begin{minipage}[t]{\textwidth}
    \centering
    \tiny
    \setlength{\tabcolsep}{5pt}
    \begin{tabular}{@{}llcccccccc@{}}
    \toprule
     && \textbf{Bank} & \textbf{Breast} & \textbf{Credit} & \textbf{ETP} & \textbf{Internet} & \textbf{Lung} & \textbf{NOMAO} & \textbf{Supercond.} \\
    \midrule
    \multirow{2}{*}{\textbf{Imp. (\%)}} & \textbf{Error} 
    & 1.40 ± 0.71 
    & 4.55 ± 3.11 
    & 1.52 ± 1.22 
    & \textbf{25.9 ± 8.87} 
    & 4.09 ± 5.10 
    & 4.20 ± 3.27 
    & 0.00 ± 2.34 
    & 1.37 ± 0.81 \\
    & \textbf{AUC} 
    & 3.09 ± 0.90 
    & 2.45 ± 4.18 
    & 3.19 ± 1.69 
    & \textbf{45.4 ± 17.4} 
    & \textbf{8.53 ± 4.86} 
    & \textbf{12.2 ± 4.65} 
    & 3.52 ± 2.64 
    & --- \\
    \midrule
    \multirow{2}{*}{\textbf{Win Rate}} & \textbf{Error} 
    & 0.70 ± 0.10 
    & 0.84 ± 0.08 
    & 0.73 ± 0.10 
    & \textbf{0.92 ± 0.08} 
    & 0.69 ± 0.10 
    & \textbf{0.79 ± 0.09} 
    & 0.64 ± 0.11 
    & 0.73 ± 0.10 \\
    & \textbf{AUC} 
    & \textbf{0.78 ± 0.09} 
    & 0.60 ± 0.11 
    & 0.71 ± 0.10 
    & \textbf{0.88 ± 0.09} 
    & 0.68 ± 0.10 
    & \textbf{0.76 ± 0.09} 
    & 0.69 ± 0.10 
    & --- \\
    \bottomrule
    \end{tabular}
    \vspace{0.5em}
    \captionof{table}{\scriptsize
    \textit{Top}: Mean performance improvement of \textbf{Statsformer} over \textbf{Statsformer (no priors)}, reported with $\pm$ 95\% confidence intervals. Improvements are computed as the mean metric difference divided by the mean baseline error (i.e., Error or $1-\text{AUROC}$), and are expressed as percentages.
    \textit{Bottom}: Win rate of \textbf{Statsformer} over \textbf{Statsformer (no priors)} with $\pm$ 95\% confidence intervals.
    Win rate is defined as the proportion of splits in the dataset where Statsformer performs at least as well as the no-prior baseline.}
    \label{tab:statsformer_full_summary}
\end{minipage}
\vspace{-1em}
\end{figure*}

% \begin{figure*}[htbp]
%     \centering
%     \includegraphics[width=1.0\linewidth]{us_versus_no_llm_shading/main_no_llm_2.png}
%     \caption{\scriptsize Direct accuracy and AUROC comparison of \textbf{Statsformer} to \textbf{Statsformer (no prior)} for selected datasets.
%     Gains are noticeable across all four examples, and significant for ETP.
%     See Figure~\ref{fig:appdx_us_vs_stacking} in the Appendix for datasets not shown here.}
%     \label{fig:us_vs_stacking}
% \end{figure*}
For each base learner, we sweep $\alpha\in\{0,1,2\}$ and $\beta\in\{0.75,1\}$.
To limit computation, no additional hyperparameter tuning is performed during OOF stacking (e.g., Lasso selects its regularization internally).
The meta-learner is non-negative $\ell_2$-regularized logistic regression for classification and elastic net for regression.
For the main results, we use the OpenAI \texttt{o3} reasoning model \citep{openai2025o3o4mini} to generate feature scores.
Model ablations use additional open and proprietary LLMs (Appendix~\ref{model_ablation_append}).
Inference is run via \texttt{OpenAI} and \texttt{OpenRouter} APIs.
% \begin{table}[ht]
% \centering
% \caption{\small LLMs used in Statsformer experiments.} \label{tab:llms_used}
% \scriptsize
% \begin{tabular}{@{}llllllll@{}}
% \toprule
% \textbf{Model} &
% \textbf{o1 \cite{openai_gpt_o1}} &
% \textbf{GPT-4o \cite{openai2023gpt4}} &
% \textbf{GPT-3.5 \cite{openai2023gpt35}} &
% \textbf{DeepSeek-R1 \cite{deepseek_r1}} &
% \textbf{LlaMa-3.1 \cite{llama_405b}} &
% \textbf{LlaMA-3 \cite{llama3_8b_instruct}} &
% \textbf{Qwen \cite{qwen_models}} \\
% \midrule
% \textbf{Parameters} & 
% $-$\footnote{The official parameter counts for closed-source OpenAI models have not been disclosed.} &
% $-$ & 
% $-$ & 
% 671B & 
% 405B & 
% 8B &  
% 72B \\
% \bottomrule
% \end{tabular}
% \end{table}

\textbf{Baseline Details.}
We compare Statsformer against strong tabular learning baselines, including both classical methods and AutoML systems.\footnote{
Statsformer is not an end-to-end AutoML system and does not perform architecture or pipeline search; AutoML methods serve as widely used reference points.
}
Among AutoML approaches, we include AutoGluon~\cite{erickson2020autogluontabularrobustaccurateautoml}, a plug-and-play ensemble of neural networks and tree-based models, and AutoML-Agent~\cite{trirat2025automlagentmultiagentllmframework}, a multi-agent LLM framework for full-pipeline AutoML. We also benchmark standard statistical methods, including XGBoost~\cite{chen2016xgboost}, Random Forests~\cite{breiman2001randomforest}, LightGBM~\cite{ke2017lightgbm}, and Lasso~\cite{tibshirani1996lasso}, as well as \emph{LLM-Lasso}~\cite{zhang2025llm-lasso}, which incorporates LLM-derived feature penalties into Lasso.

\subsection{Main Results}\label{result_main}

\textbf{Comparison with Baselines.}
Figure~\ref{fig:baseline-comparison} compares Statsformer with the baselines and datasets from Section~\ref{sec:implementation-details}.
Across datasets, Statsformer is consistently the top-performing method (or a close second), while baseline performance varies widely.
In particular, Statsformer outperforms both AutoML benchmarks, i.e. AutoGluon and AutoML-Agent, highlighting its practical competitiveness and robustness.
In contrast, AutoML-Agent exhibits occasional instability; for one split of the Bank Marketing dataset, a preprocessing error led to severe overfitting and a sharp AUROC drop at \texttt{train\_ratio=0.2}. For clarity, we report either accuracy or AUROC per dataset; remaining metrics appear in Figure~\ref{fig:appdx_us_vs_baselines}, and follow the same trends.

\begin{figure*}[htb!]
    \centering
    \includegraphics[width=0.9\linewidth]{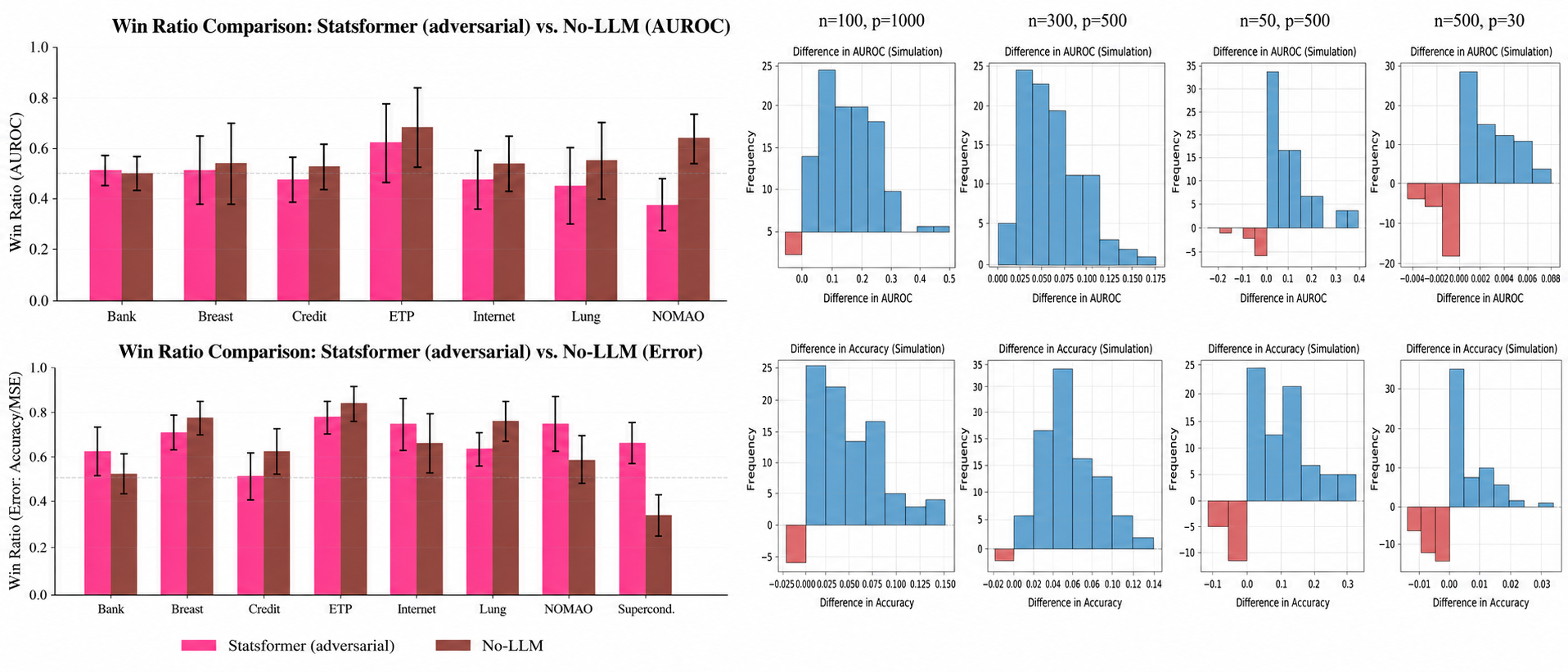}
    \caption{\scriptsize \textbf{Floor and ceiling.}
\textit{Left} (floor): Win ratios of \textbf{adversarial-prior Statsformer (pink)} versus the 
\textbf{no-prior baseline (brown)} across datasets, computed as the percentage of train--test 
splits where one method performs at least as well as the other. 
Win ratios cluster near $0.5$ across both AUROC (top) and Error (bottom), confirming that 
corrupted priors degrade gracefully to no-prior performance.
\textit{Right} (ceiling): Distributions of AUROC (top) and accuracy (bottom) differences 
between Statsformer with a perfectly informative oracle prior and the no-prior baseline, 
across $(n,p)$ regimes. 
Substantial headroom is available in underdetermined regimes ($p \gg n$); headroom shrinks 
toward zero in the well-determined regime ($n=500, p=30$), consistent with classical 
asymptotic efficiency.
    % \textcolor{red}{@danny make it clear in the plot title that this is adversarial}
    }
    \label{tab:inverted_priors_win_ratio}
\end{figure*}

\begin{figure*}[htb!]
    \centering
    \includegraphics[width=1\linewidth]{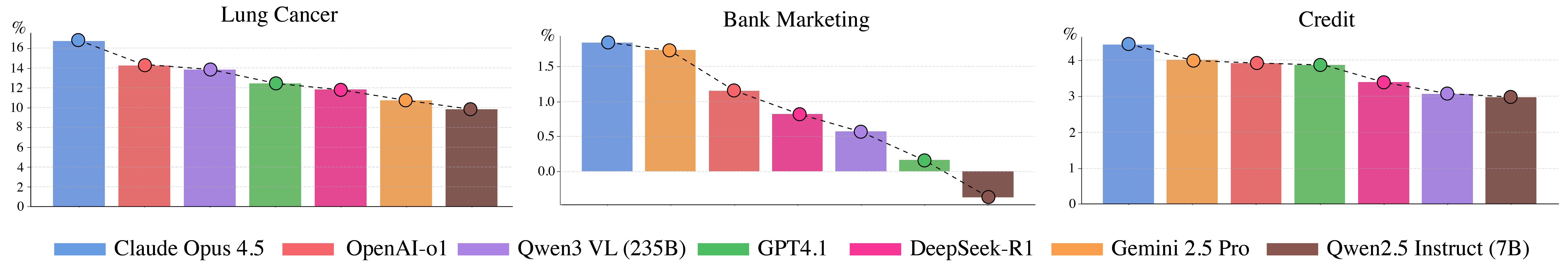}
    \caption{\scriptsize Mean performance improvement of \textbf{Statsformer} over \textbf{Statsformer (no priors)}, with prior scores generated by various LLM choices. We present more about experimental setting and additional results in Appendix \ref{model_ablation_append}. For all datasets, we plot AUROC, where higher is better. Qwen2.5 Instruct (7B) is (arguably) the weakest LLM among all choices, whereas Claude overall performs well.}
    \label{fig:model_ablation_main}
\end{figure*}

\textbf{Ablation of LLM-Generated Priors.}

We ablate LLM-generated priors to form \textsc{Statsformer (no-prior)}, a plain out-of-fold 
stacking baseline using only $\alpha=\beta=0$ configurations. 
This is a strong Super Learner-style baseline that often outperforms individual learners 
\citep{vanderlaan2007super,Polley2010super}. 
Table~\ref{tab:statsformer_full_summary} shows that Statsformer improves over it across most 
dataset-metric pairs: win-ratio lower confidence bounds exceed $50\%$ in nearly all cases, and 
relative improvements are usually positive, with the largest gains in underdetermined settings 
such as ETP and Lung Cancer. 
Representative accuracy and AUROC comparisons appear in Figure~\ref{fig:us_vs_stacking}, with 
additional results in Figure~\ref{fig:appdx_us_vs_stacking}. 
These gains are not due to extra ensembling or prompt tuning. 
We use a fixed lightweight prompting pipeline with no tuning or human feedback, and inverted 
feature priors return performance near the prior-free stacking baseline 
(Section~\ref{sec:simulation}), suggesting that Statsformer exploits genuine, though sometimes 
weak, semantic signal.

\textbf{Model Ablations.}
To evaluate the robustness of our approach to the choice of LLM used for score generation, we repeat our experiments using scores produced by a range of LLMs.
Specifically, we consider both open-source frontier models, including the Qwen series \citep{qwen3technicalreport} and DeepSeek-R1 \citep{deepseek_r1}, as well as proprietary models, including Anthropic’s Claude Opus~4.5 \citep{anthropic2025claudeopus}, OpenAI’s \texttt{o1} \citep{openai_gpt_o1}, and Google’s Gemini~2.5 \citep{google_gemini_2.5}.
Figure~\ref{fig:model_ablation_main} reports the same statistics as Table~\ref{tab:statsformer_full_summary}, with feature scores provided by each LLM.

Across model choices, Statsformer consistently improves over stacking, with occasional 
exceptions for smaller models (e.g., Qwen2.5-7B). 
Larger and more recent LLMs tend to yield stronger gains, suggesting performance is partly 
bounded by prior quality. For example, Claude exceeds \texttt{o3} on Lung Cancer and ETP, 
indicating that Figure~\ref{fig:baseline-comparison} represents current rather than ceiling 
performance. We quantify this dependence directly in Section~\ref{sec:exp-ceiling}. 
Additional experimental details and results are provided in Appendix~\ref{model_ablation_append}.

% Across model choices, Statsformer consistently improves over stacking, with the exception of occasional cases for Qwen2.5-7B.
% In general, larger and more recent LLMs yield stronger gains, suggesting that the semantic priors extracted by our framework become more informative as model quality improves.
% Moreover, in comparison to Table~\ref{tab:statsformer_full_summary}, Claude exhibits stronger performance than OpenAI \texttt{o3} on the Lung Cancer and ETP datasets.
% This observation indicates that the results reported in Figure~\ref{fig:baseline-comparison} do not necessarily represent an upper bound on the performance of our method, while indicating additional headroom from improved or better-matched priors as models advance. (see Appendix~\ref{appdx:sim} for a controlled oracle-prior simulation illustrating the effect of prior informativeness.)

\vspace{-0.5em}
\subsection{The Floor: Robustness to Corrupted Priors}
\label{sec:exp-floor}\label{sec:simulation}
\vspace{-0.5em}
We stress-test Statsformer with deliberately misleading priors to verify the safety floor in 
Theorem~\ref{thm:convex-aggregation}. 
Specifically, we invert each prior score as $s \leftarrow (s+\epsilon)^{-1}$, where $\epsilon$ 
ensures numerical stability, thereby steering learners toward irrelevant features. 
We then compare adversarial-prior Statsformer with the no-prior stacking baseline.
Figure~\ref{tab:inverted_priors_win_ratio} (left) shows that adversarial-prior Statsformer has 
win ratios near the no-prior baseline across datasets. 
Across all datasets, when priors are systematically corrupted, Statsformer reverts to stacking-level performance 
rather than failing catastrophically, suggesting that the gains in Section~\ref{result_main} come 
from informative priors rather than the injection mechanism alone.
\vspace{-0.5em}
\subsection{The Ceiling: Headroom From Informative Priors}
\label{sec:exp-ceiling}
We probe Statsformer's upside with oracle-prior simulations that separate prior quality from 
current LLM capability. 
Across underdetermined ($n=100,p=1000$; $n=300,p=500$; $n=50,p=500$) and well-determined 
($n=500,p=30$) regimes, we supply a perfectly informative feature-relevance prior from the 
ground-truth coefficient structure and compare oracle-prior Statsformer with no-prior stacking 
over $100$ task instantiations per regime (See Appendix \ref{appdx:sim}).
Figure~\ref{tab:inverted_priors_win_ratio} shows that oracle priors yield positive AUROC and accuracy 
gains in underdetermined regimes, with median AUROC gains exceeding $0.1$ in the 
highest-dimensional setting. 
As expected, the gains shrink in the well-determined regime, where sufficient data reduce the 
marginal value of prior information.\footnote{This is consistent with asymptotic efficiency under 
standard regularity conditions \citep{Vaart_1998}: in the large-$n$ limit, the 
achievable benefit of any prior diminishes.} 
Thus, Statsformer preserves a statistical floor through validated stacking while exposing a 
semantic ceiling that is largest in the high-dimensional, data-scarce regimes where 
LLM-informed learning is most valuable.

\section{Discussion and Conclusion}\label{sec:conclusion}
\vspace{-0.5em}
We introduce \emph{Statsformer}, a validated framework for learning when to trust LLM-derived 
semantic priors in supervised statistical learning. 
By injecting priors through learner-specific mechanisms and calibrating them via out-of-fold 
validation, Statsformer secures a \emph{statistical floor} (an oracle-style guarantee against 
any convex combination of in-library candidates) while leaving a \emph{semantic ceiling} that 
informative priors can reach. 
Across diverse datasets, Statsformer consistently outperforms strong baselines; adversarial 
experiments confirm the safety floor in practice, while oracle-prior simulations show that 
substantial headroom remains in high-dimensional, data-scarce regimes, pointing to further 
gains as foundation models improve. 

Natural extensions include richer base learners, broader notions of semantic prior beyond 
feature-level relevance, and applications to settings without explicit feature semantics. 
More broadly, we view validated semantic-prior integration as a step toward reliable 
LLM-informed learning, where the value of foundation-model knowledge is unlocked by mechanisms 
that determine how much it deserves to influence statistical estimation.

% Future work may extend Statsformer by incorporating a broader class of base learners and semantic priors.
% Another promising direction is to leverage semantic priors in settings where feature names lack explicit natural-language meaning, for example by inferring feature importance from metadata or structural properties.

\newpage

% Do not include acknowledgments in the anonymized submission.
% The NeurIPS ack environment is hidden in anonymized mode and shown in final/preprint modes.
\begin{ack}
E.Z., N.S., and F.Z. acknowledge support from the Stanford Graduate Fellowship (SGF) for Sciences and Engineering.
N.S. also acknowledges support from the National Science Foundation Graduate Research Fellowship Program under Grant No. DGE-2146755. M.P. acknowledges support in part by National Science Foundation (NSF) CAREER Award under Grant CCF-2236829; in
part by the U.S. Army Research Office Early Career Award
under Grant W911NF-21-1-0242; in part by the Office of
Naval Research under Grant N00014-24-1-2164. J.B. gratefully acknowledges support from DoD through the grants Air Force Office of Scientific Research under award number FA9550-20-1-0397 and ONR 1398311, also support from NSF via grants 2229012, 2312204, 2403007 is gratefully acknowledged.

In addition, we thank Professor Robert Tibshirani for his insightful perspectives on related work, which helped shape our inspiration for this paper.
\end{ack}

\bibliographystyle{plainnat}
\bibliography{main}

%%%%%%%%%%%%%%%%%%%%%%%%%%%%%%%%%%%%%%%%%%%%%%%%%%%%%%%%%%%%%%%%%%%%%%%%%%%%%%%
% APPENDIX
%%%%%%%%%%%%%%%%%%%%%%%%%%%%%%%%%%%%%%%%%%%%%%%%%%%%%%%%%%%%%%%%%%%%%%%%%%%%%%%
\appendix

\section{Comparison to Prior Work}
%%%%%%
\begin{table*}[h!]
\centering
\fontsize{5.9}{7.3}\selectfont
\setlength{\tabcolsep}{-0.05pt}
\renewcommand{\arraystretch}{1.15}

\begin{tabularx}{\textwidth}{@{}
    L{1.5cm}
    C{1.20cm}
    C{1.10cm}
    C{1.45cm}
    L{1.6cm}
    C{1.05cm}
    C{1.55cm}
    C{1.55cm}
    C{1.6cm}
    C{1.45cm}
@{}}
\toprule
&
\multicolumn{3}{c}{\textbf{General predictor structure}} &
\multicolumn{4}{c}{\textbf{LLM-specific augmentation \& robustness}} &
\multicolumn{2}{c}{\textbf{Runtime class}} \\
\cmidrule(lr){2-4} \cmidrule(lr){5-8} \cmidrule(lr){9-10}

\textbf{Method} &
\makecell[c]{\textbf{End-to-end}\\\textbf{supervised}\\\textbf{predictor?}} &
\makecell[c]{\textbf{Has}\\\textbf{stacking?}} &
\makecell[c]{\textbf{Heterogeneous}\\\textbf{learner}\\\textbf{library?}} &
\makecell[c]{\textbf{LLM role}} &
\makecell[c]{\textbf{LLM prior}\\\textbf{injection?}} &
\makecell[c]{\textbf{Empirical}\\\textbf{fallback under}\\\textbf{bad LLM}\\\textbf{signal?}} &
\makecell[c]{\textbf{Formal floor}\\\textbf{under bad}\\\textbf{LLM signal?}} &
\makecell[c]{\textbf{Base}\\\textbf{compute}} &
\makecell[c]{\textbf{LLM}\\\textbf{cost}} \\
\midrule

\textbf{XGBoost} &
\textbf{Yes} &
No &
No &
None &
No &
--- &
--- &
Single learner &
None \\

\textbf{LightGBM} &
\textbf{Yes} &
No &
No &
None &
No &
--- &
--- &
Single learner &
None \\

\textbf{Random Forests} &
\textbf{Yes} &
No &
No &
None &
No &
--- &
--- &
Single learner &
None \\

\makecell[l]{\textbf{Super Learner}\\\textbf{(Statsformer-}\\\textbf{no-prior)}} &
\textbf{Yes} &
\textbf{Yes} &
\textbf{Yes} &
None &
No &
--- &
--- &
Ensemble &
None \\

\midrule

\textbf{AutoGluon} &
\textbf{Yes} &
\textbf{Yes} &
\textbf{Yes} &
None &
No &
--- &
--- &
Ensemble &
None \\

\makecell[l]{\textbf{AutoML}\\\textbf{-Agent}} &
\textbf{Yes} &
Task-dependent &
\textbf{Yes} &
Workflow orchestration &
No &
No &
No &
\makecell[c]{Agentic\\workflow} &
\makecell[c]{Workflow\\loop} \\

\midrule

\textbf{LMPriors} &
No &
No &
--- &
Knowledge source &
Indirect &
No &
No &
--- &
Prior query \\

\textbf{LLM-Select} &
No &
No &
--- &
Knowledge source &
Indirect &
No &
No &
--- &
Prior query \\

\textbf{FeatLLM} &
No &
\textbf{Yes} &
--- &
Knowledge source &
Indirect &
Limited &
No &
--- &
Prior query loop \\

\textbf{LLM-Lasso} &
\textbf{Yes} &
No &
No &
Knowledge source &
\textbf{Yes} &
\textbf{Yes} &
No &
Single learner &
Prior query \\

\midrule
\addlinespace[0.45em]

\textbf{Statsformer} &
\textbf{Yes} &
\textbf{Yes} &
\textbf{Yes} &
Knowledge source &
\textbf{Yes} &
\textbf{Yes} &
\textbf{Yes} &
Ensemble &
Prior query \\[0.35em]

\bottomrule
\end{tabularx}

\caption{Comparison across method families.}
\label{tab:method_comparison}
\end{table*}

Table \ref{tab:method_comparison} shows a comparison of \textit{Statsformer} to methods spanning classical statistical baselines, AutoML methods, and LLM-assisted feature methods. Columns are organized into three blocks: \emph{general predictor structure}, \emph{LLM-specific augmentation and robustness}, and \emph{runtime class}. In particular, the runtime block separates two orthogonal sources of cost: \emph{base compute} (single learner, ensemble, or agentic workflow) and \emph{LLM cost} (none, one-time prior query, repeated prior-query loop, or workflow loop). This organization clarifies the appropriate comparison classes: XGBoost, LightGBM, and Random Forests are lower-bound single-model references; Super Learner and AutoGluon are closer ensemble peers; LMPriors, LLM-Select, and FeatLLM use LLMs upstream for feature or pipeline augmentation rather than as end-to-end supervised predictors; LLM-Lasso is the closest single-family semantic-prior baseline; and Statsformer uniquely combines end-to-end prediction, heterogeneous ensembling, direct semantic prior injection, empirical fallback, and a formal downstream performance floor under severe corruption of the LLM-derived signal.

\begin{table*}[h!]
\centering
\fontsize{5.8}{7.1}\selectfont
\setlength{\tabcolsep}{2.4pt}
\renewcommand{\arraystretch}{1.18}

\begin{tabularx}{\textwidth}{@{}
    L{1.72cm}
    L{2.28cm}
    C{1.18cm}
    L{1.95cm}
    C{1.18cm}
    C{1.18cm}
    C{1.28cm}
    C{1.85cm}
@{}}
\toprule
&
\multicolumn{3}{c}{\textbf{Pipeline role}} &
\multicolumn{3}{c}{\textbf{Adapter formalization}} &
\multicolumn{1}{c}{\textbf{Performance guarantees}} \\
\cmidrule(lr){2-4} \cmidrule(lr){5-7} \cmidrule(lr){8-8}

\textbf{Method} &
\makecell[c]{\textbf{How LLM signal}\\\textbf{enters learning}} &
\makecell[c]{\textbf{End-to-end}\\\textbf{supervised}\\\textbf{predictor?}} &
\makecell[c]{\textbf{Base model}\\\textbf{scope}} &
\makecell[c]{\textbf{Multiple}\\\textbf{adapter}\\\textbf{modes$^\ast$?}} &
\makecell[c]{\textbf{Unified}\\\textbf{adapter}\\\textbf{family?}} &
\makecell[c]{\textbf{Theory for}\\\textbf{adapter}\\\textbf{layer?}} &
\makecell[c]{\textbf{Formal downstream}\\\textbf{floor under}\\\textbf{bad priors?}} \\
\midrule

\textbf{LMPriors} &
Upstream feature filtering &
Limited &
Downstream-method dependent &
No &
No &
Limited &
No \\

\textbf{LLM-Select} &
Upstream feature filtering &
No &
Downstream-method dependent &
No &
No &
No &
No \\

\textbf{FeatLLM} &
Upstream feature engineering &
No &
Downstream-method dependent &
No &
No &
No &
No \\

\textbf{LLM-Lasso} &
Single-family embedded penalty adapter &
\textbf{Yes} &
Single learner family (Lasso) &
No &
Limited &
No &
No \\

\addlinespace[0.22em]
\midrule
\addlinespace[0.18em]

\textbf{Statsformer} &
Learner-level adapter family (penalty, scaling, weighting, sampling) &
\textbf{Yes} &
\textbf{Heterogeneous learner library} &
\textbf{Yes} &
\textbf{Yes} &
\textbf{Yes} &
\textbf{Yes} \\[0.12em]

\bottomrule
\end{tabularx}
\caption{Close comparison among LLM-informed statistical learning methods.}
\label{tab:close_peer_comparison}
\end{table*}

Table \ref{tab:close_peer_comparison} provides a detailed comparison among LLM-informed statistical learning methods. The key distinction is not only whether a method uses LLM-derived signal, but \emph{how} that signal enters the statistical pipeline. Columns are organized into three blocks: \emph{pipeline role}, \emph{adapter formalization}, and \emph{performance guarantees}. The table separates five progressively stronger properties: whether the LLM signal acts through upstream filtering/engineering versus a learner-level adapter; whether multiple adapter modes\footnote{Multiple adapter modes = distinct learner-level mechanisms by which LLM-derived signal enters fitting, e.g., penalty modulation, feature-level scaling, instance weighting, or subsampling-based variants.} are supported; whether these modes are unified by a reusable formal adapter family; whether that adapter layer itself is theoretically motivated and analyzed; and whether the overall predictor enjoys a formal downstream floor under bad LLM-derived signal. Here, ``Limited" denotes either task-specific formalization (e.g., feature-relevance scoring in LMPriors) or a single-family tunable adapter (e.g., penalty weighting in LLM-Lasso), rather than a reusable learner-level adapter family spanning different learner types. Relative to the closest direct supervised baseline, LLM-Lasso, Statsformer is not merely stacking: it introduces a validated learner-level adapter family that already instantiates penalty modulation, feature scaling, instance weighting, and sampling-based variants across a heterogeneous learner library, provides theory for that adapter layer, and yields a formal downstream floor under severe prior corruption.

% \section{Additional Theory}
\section{Deferred Proofs}\label{appdx:proof}
Throughout this section, we follow the notation and setup of Section~\ref{sec:theory}. For readability, we restate the relevant risk functionals; wherever applicable, these definitions are identical to those in Section~\ref{sec:theory}.

\subsection{Proof of Theorem \ref{thm:convex-aggregation}}\label{appdx-subsec:oracle}

\begin{proof}

% \medskip\noindent
% \textbf{Step 1: Prediction class and Lipschitz contraction.}
To start, fix a fold $k$ and condition on the training data used to construct
$\{f_l^{(-k)}\}_{l=1}^L$.
Define the prediction class on this fold as the convex hull of the base predictors:
\[
\mathcal{F}_k
:=
\Big\{
x \mapsto f_\pi^{(-k)}(x)
=
\sum_{l=1}^L \pi_l f_l^{(-k)}(x)
\;:\;
\pi\in\Delta^{L-1}
\Big\}.
\]
By assumption, each $f_l^{(-k)}$ satisfies $|f_l^{(-k)}(x)|\le B$,
hence $\mathcal{F}_k$ is uniformly bounded in norm by $B$. 

Let $\mathcal{G}_k := \ell \circ \mathcal{F}_k$ denote the induced loss class
\[
\mathcal{G}_k
=
\Big\{
(x,y)\mapsto \ell(f(x),y)
:\;
f\in\mathcal{F}_k
\Big\}.
\]
Since $\ell(\cdot,y)$ is $L_\ell$-Lipschitz, the vector-contraction inequality for Rademacher complexity (see Theorem 3, Corollary 4 in \citep{maurer2016vectorcontractioninequalityrademachercomplexities} ) yields
\[
\mathfrak{R}_{n_k}(\mathcal{G}_k)
\;\le\;
\sqrt{2}L_\ell\,\mathfrak{R}_{n_k}(\mathcal{F}_k).
\]

% \medskip\noindent
% \textbf{Step 2: Uniform deviation on each fold.}
Next, define the population and empirical risks restricted to the validation data in fold $k$ following the naming convention as in Section \ref{sec:theory}:
\[
R_k(\pi) = \mathbb{E}\left[\ell\left(\sum_{l=1}^L \pi_lf^{(-k)}_l(X),Y\right)\right], \qquad \hat{R}_k(\pi) = \frac{1}{n_k}\sum_{i \in I_k} \ell\left(\sum_{l=1}^L \pi_lf^{(-k)}_l(x_i),y_i\right),
\]
where the expectation in population risk is taken conditional on the training data of fold $k$.
To apply the standard generalization bound, we assume the loss is bounded by some constant $M$
\footnote{
Since $\ell(\cdot,y)$ is $L_\ell$-Lipschitz and $\ell_{\max}\coloneqq \sup_{y\in\mathcal Y}|\ell(0,y)|<\infty$, we have for any $u$ and $y$,
$
|\ell(u,y)| \le |\ell(u,y)-\ell(0,y)| + |\ell(0,y)|
\le L_\ell \|u\| + \ell_{\max}.
$
Moreover, the bounded-prediction assumption $\|f_l^{(-k)}(x)\|\le B$ for all $k,l,x$ implies that any convex combination $u\in\mathcal F_k$ also satisfies $\|u(x)\|\le B$. Hence $|\ell(u,y)|\le L_\ell B+\ell_{\max}$ uniformly over $u\in\mathcal F_k$ and $y\in\mathcal Y$, so we may take $M \le L_\ell B+\ell_{\max}=O(L_\ell B+\ell_{\max})$.
}.
By applying the standard Rademacher generalization bound (see Theorem~3.3 in \cite{mohri2018foundations}) to the scaled loss class $\{\frac{1}{M}g: g \in \mathcal{G}_k\}$ and rescaling by $M$, we obtain a uniform bound. Specifically, for any $\delta_k\in(0,1)$,
with probability at least $1-\delta_k$ over the validation fold,
\begin{equation}
\sup_{\pi\in\Delta^{L-1}}
\Big|R_k(\pi)-\hat R_k(\pi)\Big|
\;\le\;
2\,\mathfrak{R}_{n_k}(\mathcal{G}_k)
\;+\;
M\sqrt{\frac{\log(2/\delta_k)}{2n_k}}.
\label{eq:fold-rad}
\end{equation}

% \medskip\noindent
% \textbf{Step 3: Bounding the prediction class complexity.}
Since $\mathcal{F}_k$ is the convex hull of the finite class
$\mathcal{F}_{k,0}:=\{f_l^{(-k)}\}_{l=1}^L$,
and Rademacher complexity is invariant under taking the convex hull,
\[
\mathfrak{R}_{n_k}(\mathcal{F}_k)
=
\mathfrak{R}_{n_k}(\mathcal{F}_{k,0}).
\]
Moreover, since $|f_l^{(-k)}(x)|\le B$ for all $\ell$ and $x$,
Massart’s finite class lemma implies
\[
\mathfrak{R}_{n_k}(\mathcal{F}_{k,0})
\;\le\;
B\,\sqrt{\frac{2\log L}{n_k}}.
\]
Combining this with the contraction inequality from above yields
\[
\mathfrak{R}_{n_k}(\mathcal{G}_k)
\;\le\;
\sqrt{2}L_\ell B\,\sqrt{\frac{2\log L}{n_k}}.
\]

% \medskip\noindent
% \textbf{Step 4: From fold bounds to a CV oracle inequality.}
Substituting the complexity bound into \eqref{eq:fold-rad}, we obtain the following uniform deviation bound on fold $k$:
for any $\delta_k\in(0,1)$, with probability at least $1-\delta_k$,
\begin{equation}
\sup_{\pi\in\Delta^{L-1}}
\Big|R_k(\pi)-\hat R_k(\pi)\Big|
\;\le\;
2 \sqrt{2} L_\ell B \sqrt{\frac{2\log L}{n_k}}
\;+\;
M\sqrt{\frac{\log(2/\delta_k)}{2n_k}}.
\label{eq:fold-unif}
\end{equation}

Set $\delta_k=\delta/K$ and let $E_k$ denote the event that \eqref{eq:fold-unif}
holds on fold $k$. By the union bound, the probability that all folds satisfy the bound simultaneously is:
\[
\Pr\!\Big(\bigcap_{k=1}^K E_k\Big)
\;\ge\;
1-\sum_{k=1}^K \delta_k
\;=\;
1-\delta.
\]
On the event $\bigcap_{k=1}^K E_k$, for all $\pi\in\Delta^{L-1}$ and all $k$,
\[
R_k(\pi)\le \hat R_k(\pi)+\varepsilon_k,
\qquad
\hat R_k(\pi)\le R_k(\pi)+\varepsilon_k,
\]
where
\[
\varepsilon_k
:=
2 \sqrt{2} L_\ell B \sqrt{\frac{2\log L}{n_k}}
+
M\sqrt{\frac{\log(2K/\delta)}{2n_k}}.
\]

Define the sample-size weighted cross-fitted risks
\[
\hat R_{\mathrm{CF}}(\pi)
:=
\sum_{k=1}^K \frac{n_k}{n}\,\hat R_k(\pi),
\qquad
R_{\mathrm{CF}}(\pi)
:=
\sum_{k=1}^K \frac{n_k}{n}\,R_k(\pi).
\]
Averaging the fold inequalities yields, uniformly over $\pi\in\Delta^{L-1}$,
\[
R_{\mathrm{CF}}(\pi)\le \hat R_{\mathrm{CF}}(\pi)+\varepsilon,
\qquad
\hat R_{\mathrm{CF}}(\pi)\le R_{\mathrm{CF}}(\pi)+\varepsilon,
\]
where
\[
\varepsilon
:=
\sum_{k=1}^K \frac{n_k}{n}\,\varepsilon_k.
\]

% The constant $C$ absorbs factors depending on the number of folds $K$. Since $K$ is a fixed hyperparameter, we treat this as a constant factor relative to $n$.
Under the assumption of approximately balanced folds ($n_k \approx n/K$), we have $\sum_{k=1}^K \frac{n_k}{n}\frac{1}{\sqrt{n_k}} \approx \sqrt{K/n}.$ Absorbing the loss bound $M$, and the vector-contraction constants into a constant $C_1>0$, we obtain:

\[
\varepsilon
\;\le\;
C_1\,L_\ell B\,\sqrt{\frac{K\log(LK/\delta)}{n}}.
\]

Since $\hat\pi$ minimizes $\hat R_{\mathrm{CF}}$ over the simplex, we use the standard oracle decomposition:
\[
R_{\mathrm{CF}}(\hat\pi)
\;\le\;
\hat R_{\mathrm{CF}}(\hat\pi) + \varepsilon
\;\le\;
\hat R_{\mathrm{CF}}(\pi^*) + \varepsilon
\;\le\;
R_{\mathrm{CF}}(\pi^*) + 2\varepsilon,
\]
where $\pi^*$ is any reference vector in $\Delta^{L-1}$. Thus:
\[
R_{\mathrm{CF}}(\hat\pi)
\;\le\;
\inf_{\pi\in\Delta^{L-1}} R_{\mathrm{CF}}(\pi)
+
2\varepsilon.
\]

% \medskip\noindent
% \textbf{Step 5: Identifying the target risk.}
Define an auxiliary random index $k^\star\in\{1,\dots,K\}$, independent of $\mathcal D$ and $(X,Y)$, with
$\Pr(k^\star=k)=n_k/n$. Consider the randomized cross-fitted aggregate
\[
\tilde f_{\mathrm{SF}}(X)
\;:=\;
\sum_{l=1}^L \hat\pi_l\, f_l^{(-k^\star)}(X).
\]
Then, by the law of total expectation (equivalently, Lemma~\ref{lemma:risk_id}),
\[
R(\tilde f_{\mathrm{SF}})
\;=\;
\sum_{k=1}^K \frac{n_k}{n}\,
\mathbb{E}\!\left[\ell\!\left(\sum_{l=1}^L \hat\pi_l f_l^{(-k)}(X),Y\right)\right]
\;=\;
\sum_{k=1}^K \frac{n_k}{n}\, R_k(\hat\pi)
\;=\;
R_{\mathrm{CF}}(\hat\pi).
\]
Combining this identity with the cross-validation oracle inequality above yields
\[
R(\tilde f_{\mathrm{SF}})
\;\le\;
\inf_{\pi\in\Delta^{L-1}} R_{\mathrm{CF}}(\pi)
\;+\;
C\,L_\ell B\,\sqrt{\frac{K\log(LK/\delta)}{n}},
\]
for a universal constant $C>0$.
If the candidate dictionary contains a prior-free subset
$\mathcal{F}_{\mathrm{null}}$, then restricting the infimum to
$\pi$ supported on $\mathcal{F}_{\mathrm{null}}$ yields the stated
\emph{no-worse-than-null} guarantee.
\end{proof}

% % \medskip\noindent
% % \textbf{Step 5: Identifying the target risk.}
% Recall that the cross-fitted Statsformer predictor is defined by
% \[
% \tilde f_{\mathrm{SF}}(x)
% =
% \sum_{l=1}^L \hat\pi_l\, f_l^{(-k(x))}(x),
% \]
% where $k(x)$ denotes the fold index used for evaluation.
% By construction, $\tilde f_{\mathrm{SF}}$ uses the predictor set $\{f_l^{(-k)}\}$ whenever the input falls into fold $k$. Since a random draw $(X,Y)\sim P$ belongs to fold $k$ with probability $n_k/n$, the global population risk decomposes as the weighted average of the conditional risks (see Lemma \ref{lemma:risk_id}):
% \[
% R(\tilde f_{\mathrm{SF}})
% \;=\;
% \sum_{k=1}^K \frac{n_k}{n}\, R_k(\hat\pi)
% \;=\;
% R_{\mathrm{CV}}(\hat\pi).
% \]
% Combining this identity with the cross-validation oracle inequality above yields
% \[
% R(\tilde f_{\mathrm{SF}})
% \;\le\;
% \inf_{\pi\in\Delta^{L-1}}
% \sum_{k=1}^K \frac{n_k}{n}\, R_k(\pi)
% +
% C\,L_\ell B\,\sqrt{\frac{\log(LK/\delta)}{n}},
% \]
% for a universal constant $C>0$.
% %\footnote{Note that the constant $C$ scales with $\sqrt{K}$. Since $K$ is fixed and small (typically 5 or 10), we omit this dependency for asymptotic analysis.}.
% If the candidate dictionary contains a prior-free subset
% $\mathcal{F}_{\mathrm{null}}$, then restricting the infimum to
% $\pi$ supported on $\mathcal{F}_{\mathrm{null}}$ yields the stated
% \emph{no-worse-than-null} guarantee.

\subsection{Proof of Corollary~\ref{cor:refit}}\label{appdx:refit-proof}

\begin{proof}
Recall that Statsformer selects
\[
\hat\pi \in \argmin_{\pi\in\Delta^{L-1}} \hat R_{\mathrm{CV}}(\pi),
\qquad
\hat f_{\mathrm{SF}}(x) := \sum_{l=1}^L \hat\pi_l \hat f_l(x),
\]
and define the corresponding cross-fitted population risk functional
\[
R_{\mathrm{CF}}(\pi)
:=
\sum_{k=1}^K \frac{n_k}{n}\,
\mathbb{E}\!\left[\ell\!\big(f_\pi^{(-k)}(X),Y\big)\right],
\qquad
f_\pi^{(-k)}(x):=\sum_{l=1}^L \pi_l f_l^{(-k)}(x).
\]

By definition of $\Delta_{\mathrm{refit}}$ and since the supremum ranges over all $\pi$,
in particular over $\pi=\hat\pi$, we have
\begin{equation}
R(\hat f_{\mathrm{SF}})
=
R\!\left(\sum_{l=1}^L \hat\pi_l \hat f_l\right)
\le
R_{\mathrm{CF}}(\hat\pi) + \Delta_{\mathrm{refit}}.
\label{eq:refit-upper-bridge}
\end{equation}
Similarly, for every $\pi\in\Delta^{L-1}$,
\[
R_{\mathrm{CF}}(\pi)
\le
R\!\left(\sum_{l=1}^L \pi_l \hat f_l\right)
+
\Delta_{\mathrm{refit}},
\]
and hence
\begin{equation}
\inf_{\pi\in\Delta^{L-1}} R_{\mathrm{CF}}(\pi)
\le
\inf_{\pi\in\Delta^{L-1}} R\!\left(\sum_{l=1}^L \pi_l \hat f_l\right)
+
\Delta_{\mathrm{refit}}.
\label{eq:refit-lower-bridge}
\end{equation}

Now apply Theorem~\ref{thm:convex-aggregation}, which gives with probability at least $1-\delta$,
\begin{equation}
R_{\mathrm{CF}}(\hat\pi)
\le
\inf_{\pi\in\Delta^{L-1}} R_{\mathrm{CF}}(\pi)
+
C\,L_\ell B\,\sqrt{\frac{K\log(LK/\delta)}{n}}.
\label{eq:thm-cf-risk}
\end{equation}
Combining \eqref{eq:refit-upper-bridge}, \eqref{eq:refit-lower-bridge}, and \eqref{eq:thm-cf-risk} yields
\[
R(\hat f_{\mathrm{SF}})
\le
\inf_{\pi\in\Delta^{L-1}} R\!\left(\sum_{l=1}^L \pi_l \hat f_l\right)
+
C\,L_\ell B\,\sqrt{\frac{K\log(LK/\delta)}{n}}
+
2\Delta_{\mathrm{refit}}.
\]
Finally, under the assumption $\Delta_{\mathrm{refit}}\le \varepsilon_n$, we obtain \eqref{eq:oracle-refit}.
\end{proof}

\subsection{Sufficient conditions for \texorpdfstring{$\Delta_{\mathrm{refit}}\to 0$}{refit gap vanishing}}
\label{appdx:refit-suff}

We provide sufficient conditions under which the refit gap
\begin{align}
\Delta_{\mathrm{refit}}
&:=
\sup_{\pi\in\Delta^{L-1}} |R_{\mathrm{refit}}(\pi) - R_{\mathrm{CF}}(\pi)| \\
&=
\sup_{\pi\in\Delta^{L-1}}
\left|
\mathbb{E}\!\left[\ell\!\left(\sum_{l=1}^L \pi_l \hat f_l(X),Y\right)\right]
-
\sum_{k=1}^K \frac{n_k}{n}\,
\mathbb{E}\!\left[\ell\!\left(\sum_{l=1}^L \pi_l f_l^{(-k)}(X),Y\right)\right]
\right|
\end{align}
vanishes as the sample size $n$ grows.

For each learner configuration $l$, let $\hat f_l$ denote the predictor trained on the full dataset of size $n$, and let $f_l^{(-k)}$ denote the predictor trained on the dataset with fold $k$ removed (size $n-n_k$).

\paragraph{Stability vs. Consistency.}
Classical algorithmic stability arguments are well suited to Leave-One-Out cross-validation, where removing a single observation produces vanishing perturbations ($O(1/n)$) as $n\to\infty$\footnote{Statsformer applies equally to leave-one-out cross-validation; we use $K$-fold in experiments for computational efficiency, but LOO admits stronger perturbation-based guarantees in certain settings (e.g., for the Lasso \citep{homrighausen2013leaveoneoutcrossvalidationriskconsistent}).
}. In contrast, for $K$-fold cross-validation with fixed $K$, a constant fraction of the data is removed in each fold. Consequently, the distance $\|\hat f_l - f_l^{(-k)}\|$ does not vanish due to stability alone. Instead, we rely on the \emph{consistency} of the base learners: if both $\hat f_l$ and $f_l^{(-k)}$ converge to the same fixed population limit $f^*_l$, their difference must vanish.

\begin{assumption}[Lipschitz loss in prediction]
\label{ass:lipschitz-loss}
There exists $L_\ell>0$ such that for all $y\in\mathcal{Y}$ and all predictions $u,v$,
\[
|\ell(u,y)-\ell(v,y)| \le L_\ell \|u-v\|.\footnote{We note that Assumption~\ref{ass:lipschitz-loss} coincides with the Lipschitz condition imposed in Theorem~\ref{thm:convex-aggregation}.}
\]
\end{assumption}

\begin{comment}

% \begin{assumption}[$\ell_2$-consistency of base learners]
% \label{ass:consistency}
% For each learner configuration $l$, let $f^*_l$ denote the population-limit predictor associated with configuration $l$. We assume the estimator satisfies
% \[
% \mathbb{E}\!\left[\|\hat f_{l,m}-f^*_l\|_2\right]
% \le \epsilon_l(m),
% \qquad
% \epsilon_l(m)\to 0
% \quad \text{as } m\to\infty,
% \]
% where $\|g\|_2 := \big(\mathbb{E}[\|g(X)\|_2^2]\big)^{1/2}$ denotes the $\mathcal{L}^2(P_X)$ norm of prediction error.
% \end{assumption}

\end{comment}

\begin{assumption}[Concentration control for base learners]
\label{ass:consistency}
For each learner configuration $l$, let $f^*_l$ denote the population-limit predictor
associated with configuration $l$. We assume that for any $\delta\in(0,1)$ and any
sample size $m$, there exists a function $\epsilon_l(m,\delta)$ such that
\[
\mathbb{P}\!\left(\|\hat f_{l,m}-f^*_l\|_2 \le \epsilon_l(m,\delta)\right) \ge 1-\delta,
\]
where $\|g\|_2 := \big(\mathbb{E}[\|g(X)\|_2^2]\big)^{1/2}$ denotes the $\mathcal{L}^2(P_X)$ norm
of prediction error, and for each fixed $\delta$, $\epsilon_l(m,\delta)\to 0$ as $m\to\infty$.
\end{assumption}

%\jose{There is a problem with this result, the display doesn't make sense. Refit is a random quantity, the right hand side of the display is deterministic. }
% \begin{lemma}[Refit gap bound from consistency]
% \label{lem:refit-gap}
% Under Assumptions~\ref{ass:lipschitz-loss} and~\ref{ass:consistency},
% \[
% \Delta_{\mathrm{refit}}
% \le
% L_\ell
% \max_{l\in\{1,\dots,L\}}
% \max_{k\in\{1,\dots,K\}}
% \Big(
% \epsilon_l(n) + \epsilon_l(n-n_k)
% \Big).
% \]
% In particular, if $\epsilon_l(m)=O(m^{-1/2})$ for all $l$, then $\Delta_{\mathrm{refit}}=O(n^{-1/2})$.
% \end{lemma}

\begin{lemma}[Refit gap bound from consistency]
\label{lem:refit-gap}
Under Assumptions~\ref{ass:lipschitz-loss} and~\ref{ass:consistency}, for any
$\delta\in(0,1)$, with probability at least $1-\delta$,
\[
\Delta_{\mathrm{refit}}
\le
L_\ell
\max_{l\in\{1,\dots,L\}}
\max_{k\in\{1,\dots,K\}}
\Big(
\epsilon_l(n,\delta') + \epsilon_l(n-n_k,\delta')
\Big),
\]
where $\delta'=\delta/(2LK)$. In particular, if $\epsilon_l(m,\delta)=O(m^{-1/2})$
for all $l$ (for fixed $\delta$), then $\Delta_{\mathrm{refit}}=O(n^{-1/2})$ with
probability at least $1-\delta$.
\end{lemma}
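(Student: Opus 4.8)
The plan is to bound the refit gap by a quantity that does not depend on $\pi$ --- using the Lipschitz property of the loss and the triangle inequality in $\mathcal L^2(P_X)$ --- and then to control the resulting prediction-error terms by invoking Assumption~\ref{ass:consistency} once for each full-sample predictor and once for each cross-fitted predictor, combining the failure probabilities by a union bound. First I would fix $\pi\in\Delta^{L-1}$ and abbreviate $\hat f_\pi:=\sum_l\pi_l\hat f_l$ and $f_\pi^{(-k)}:=\sum_l\pi_l f_l^{(-k)}$. Since $\sum_k n_k/n=1$, one can write $R_{\mathrm{refit}}(\pi)=\sum_k(n_k/n)\,\E[\ell(\hat f_\pi(X),Y)]$, so that
\[
R_{\mathrm{refit}}(\pi)-R_{\mathrm{CF}}(\pi)=\sum_{k=1}^K\frac{n_k}{n}\,\E\!\left[\ell(\hat f_\pi(X),Y)-\ell(f_\pi^{(-k)}(X),Y)\right].
\]
Applying Assumption~\ref{ass:lipschitz-loss} inside the expectation, then Jensen's inequality (concavity of $\sqrt{\,\cdot\,}$) to pass from $\E\|\cdot\|$ to the $\mathcal L^2$ norm, gives $|R_{\mathrm{refit}}(\pi)-R_{\mathrm{CF}}(\pi)|\le L_\ell\sum_k(n_k/n)\|\hat f_\pi-f_\pi^{(-k)}\|_2$.

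Next I would collapse this per-$\pi$ estimate to a $\pi$-free one \emph{before} taking the supremum over the simplex. For each fold $k$, using that $\|\cdot\|_2$ is a norm with nonnegative weights $\pi_l$ summing to one, and inserting the population limit $f^*_l$,
\[
\|\hat f_\pi-f_\pi^{(-k)}\|_2\le\sum_l\pi_l\|\hat f_l-f_l^{(-k)}\|_2\le\sum_l\pi_l\big(\|\hat f_l-f^*_l\|_2+\|f_l^{(-k)}-f^*_l\|_2\big)\le\max_l\big(\|\hat f_l-f^*_l\|_2+\|f_l^{(-k)}-f^*_l\|_2\big).
\]
The right-hand side is independent of $\pi$, so taking $\sup_{\pi\in\Delta^{L-1}}$ and again using $\sum_k n_k/n=1$ yields $\Delta_{\mathrm{refit}}\le L_\ell\max_k\max_l\big(\|\hat f_l-f^*_l\|_2+\|f_l^{(-k)}-f^*_l\|_2\big)$.

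To finish, note that $\hat f_l$ is a configuration-$l$ fit on $n$ i.i.d.\ samples and $f_l^{(-k)}$ a configuration-$l$ fit on $n-n_k$ i.i.d.\ samples, so Assumption~\ref{ass:consistency} applies at $m=n$ and at $m=n-n_k$ respectively. There are $L$ events of the first type and $LK$ of the second, at most $2LK$ in total; applying each at level $\delta'=\delta/(2LK)$ and taking a union bound, with probability at least $1-\delta$ we have $\|\hat f_l-f^*_l\|_2\le\epsilon_l(n,\delta')$ for all $l$ and $\|f_l^{(-k)}-f^*_l\|_2\le\epsilon_l(n-n_k,\delta')$ for all $l,k$ simultaneously; substituting into the previous display gives the stated inequality. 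The rate claim then follows because approximately balanced folds give $n-n_k$ of order $n$, so $\epsilon_l(n-n_k,\delta')=O((n-n_k)^{-1/2})=O(n^{-1/2})$ whenever $\epsilon_l(m,\delta)=O(m^{-1/2})$ at fixed $\delta$, the dependence on $K$ being absorbed into the constant.

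There is no genuine obstacle here --- the argument is essentially bookkeeping --- but two points deserve care. The first is the ordering in the reduction above: one must reduce the bound to a quantity independent of $\pi$ before taking $\sup_\pi$, which is precisely what makes the supremum over the simplex harmless (the convex-combination structure is what collapses everything to $\max_l$). The second is that Assumption~\ref{ass:consistency} must be invoked for the cross-fitted predictors at the reduced sample size $n-n_k$ rather than $n$; this is legitimate because each $f_l^{(-k)}$ is itself a clean configuration-$l$ fit on $n-n_k$ i.i.d.\ points, and it is also why the union bound has to cover the $L$ full-sample predictors together with all $LK$ cross-fitted ones, giving the factor $2LK$ in the definition of $\delta'$.
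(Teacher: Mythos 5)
Your proof is correct and follows essentially the same route as the paper's: decompose the refit gap over folds, apply the Lipschitz property and Jensen to pass to the $\mathcal{L}^2(P_X)$ norm, use convexity of the norm with $f_l^*$ as an intermediary to collapse the simplex supremum to a $\max_{l,k}$, and then control the $L$ full-sample and $LK$ cross-fitted prediction errors by a union bound over $L+LK\le 2LK$ events at level $\delta'=\delta/(2LK)$. The only superficial difference is that you establish the deterministic reduction before invoking the high-probability event while the paper conditions on the event first; the mathematical content is identical.
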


\begin{proof}
Fix $\delta\in(0,1)$ and set $\delta'=\delta/(2LK)$. For each $l$, define the event
\[
A_l := \left\{ \|\hat f_{l,n}-f_l^*\|_2 \le \epsilon_l(n,\delta') \right\},
\]
and for each $l,k$ define
\[
B_{l,k} := \left\{ \|f_l^{(-k)}-f_l^*\|_2 \le \epsilon_l(n-n_k,\delta') \right\}.
\]
By Assumption~\ref{ass:consistency}, $\Pr(A_l)\ge 1-\delta'$ and
$\Pr(B_{l,k})\ge 1-\delta'$. A union bound over all $l$ and $(l,k)$ gives
\[
\Pr\!\Big(\bigcap_{l=1}^L A_l \;\cap\; \bigcap_{l=1}^L \bigcap_{k=1}^K B_{l,k}\Big)
\;\ge\; 1 - (L + LK)\delta' \;\ge\; 1-\delta.
\]
On this event, for any $\pi\in\Delta^{L-1}$, by Lipschitzness and the triangle
inequality,
\begin{align*}
\Delta_{\mathrm{refit}}
&=
\sup_{\pi}
\sum_{k=1}^K \frac{n_k}{n}
\Big|
\mathbb{E}\big[\ell(\sum_l \pi_l \hat f_l(X),Y)\big]
-
\mathbb{E}\big[\ell(\sum_l \pi_l f_l^{(-k)}(X),Y)\big]
\Big| \\
&\le
\sup_{\pi}
\sum_{k=1}^K \frac{n_k}{n} L_\ell
\mathbb{E}\left[
\left\|\sum_{l=1}^L \pi_l\big(\hat f_l(X)-f_l^{(-k)}(X)\big)\right\|
\right].
\end{align*}
Using Jensen’s inequality,
\[
\mathbb{E}\left\|
\sum_{l=1}^L \pi_l\big(\hat f_l(X)-f_l^{(-k)}(X)\big)
\right\|
\le
\left\|
\sum_{l=1}^L \pi_l\big(\hat f_l-f_l^{(-k)}\big)
\right\|_2.
\]
By convexity of the norm and the triangle inequality relative to $f_l^*$,
\[
\left\|
\sum_{l=1}^L \pi_l(\hat f_l-f_l^{(-k)})
\right\|_2
\le
\sum_{l=1}^L \pi_l\big(\|\hat f_l-f_l^*\|_2 + \|f_l^{(-k)}-f_l^*\|_2\big).
\]
On the event above, each term is bounded by
$\epsilon_l(n,\delta') + \epsilon_l(n-n_k,\delta')$, so
\[
\Delta_{\mathrm{refit}}
\le
L_\ell
\max_{l,k}
\Big(
\epsilon_l(n,\delta') + \epsilon_l(n-n_k,\delta')
\Big).
\]
This proves the high‑probability bound. The rate statement follows immediately
when $\epsilon_l(m,\delta)=O(m^{-1/2})$ for fixed $\delta$.
\end{proof}

\subsubsection{Concrete sufficient condition: strongly convex regularized ERM}

Assumption~\ref{ass:consistency} holds for standard parametric learners trained by strongly convex regularized empirical risk minimization.

\begin{proposition}[Consistency of strongly convex ERM]
\label{prop:erm-consistency}
Consider a parametric class $\{f_\theta:\theta\in\Theta\}$ and a regularized ERM estimator
\[
\hat\theta_n
\in
\argmin_{\theta\in\Theta}
\left\{
\frac{1}{n}\sum_{i=1}^n \phi(\theta;x_i,y_i)
+
\lambda r(\theta)
\right\},
\]
where $\phi(\cdot;x,y)$ is convex and $G$-Lipschitz \emph{in $\theta$}, and assume that $r(\cdot)$ is $\mu_r$-strongly convex, thus the population objective
\[
F(\theta):=\mathbb{E}[\phi(\theta;X,Y)] + \lambda r(\theta)
\]
is $\mu$-strongly convex for $\mu=\lambda \mu_r>0$ (see Definition~\ref{def:strong-convex}).
Let $\theta^*=\argmin_{\theta\in\Theta} F(\theta)$ denote the unique population minimizer. Then
\[
\mathbb{E}\!\left[\|\hat\theta_n-\theta^*\|_2\right]
\le
\frac{C\,G}{\mu\sqrt{n}},
\]
for a universal constant $C>0$. Consequently, Assumption~\ref{ass:consistency} holds with
$\epsilon_l(n)=O(n^{-1/2})$.
\end{proposition}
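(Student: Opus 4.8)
The plan is to combine two classical ingredients: $\mu$-strong convexity of the population objective $F$, which trades the excess objective value $F(\hat\theta_n)-F(\theta^*)$ for the squared parameter distance $\|\hat\theta_n-\theta^*\|_2^2$; and the uniform algorithmic stability of strongly convex regularized ERM, which controls the \emph{expected} excess objective value at rate $O(G^2/(\mu n))$. Taking a square root via Jensen then delivers the claimed $O(G/(\mu\sqrt n))$ rate with a dimension-free constant.

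First I would use strong convexity. Since $\theta^*$ minimizes the $\mu$-strongly convex $F$ over $\Theta$ (Definition~\ref{def:strong-convex}), first-order optimality, i.e.\ the variational inequality $\langle g,\theta-\theta^*\rangle\ge 0$ for some $g\in\partial F(\theta^*)$ and all $\theta\in\Theta$, combined with strong convexity yields $\tfrac{\mu}{2}\|\hat\theta_n-\theta^*\|_2^2\le F(\hat\theta_n)-F(\theta^*)$. So it suffices to bound $\mathbb{E}[F(\hat\theta_n)-F(\theta^*)]$. For this I would establish a basic inequality: writing $\hat F_n$ for the empirical regularized objective and $P_n,P$ for empirical/population averages of $\theta\mapsto\phi_\theta:=\phi(\theta;\cdot)$, optimality of $\hat\theta_n$ for $\hat F_n$ ($\hat F_n(\hat\theta_n)\le\hat F_n(\theta^*)$) together with cancellation of the shared $\lambda r$ terms gives $F(\hat\theta_n)-F(\theta^*)\le (P-P_n)\phi_{\hat\theta_n}+(P_n-P)\phi_{\theta^*}$. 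Taking expectations kills the second term because $\theta^*$ is deterministic, leaving $\mathbb{E}[F(\hat\theta_n)-F(\theta^*)]\le\mathbb{E}[(P-P_n)\phi_{\hat\theta_n}]$, the expected generalization gap of the regularized ERM estimator.

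Next I would bound that gap by uniform stability. Replacing one of the $n$ samples perturbs $\hat F_n$ by a function that is $(2G/n)$-Lipschitz in $\theta$ (by the $G$-Lipschitzness of $\phi$); comparing the two minimizers via $\mu$-strong convexity of both the original and perturbed objectives forces $\|\hat\theta_n-\hat\theta_n^{(i)}\|_2\le 2G/(\mu n)$, and a second application of the $G$-Lipschitz bound on $\phi$ gives uniform stability $\beta\le 2G^2/(\mu n)$. The standard stability-implies-generalization argument (Bousquet--Elisseeff; Shalev-Shwartz et al.) then gives $\mathbb{E}[(P-P_n)\phi_{\hat\theta_n}]\le\beta$. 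Chaining everything, $\mathbb{E}\|\hat\theta_n-\theta^*\|_2^2\le 2\beta/\mu\le 4G^2/(\mu^2 n)$, and Jensen yields $\mathbb{E}\|\hat\theta_n-\theta^*\|_2\le CG/(\mu\sqrt n)$.

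To line this up with Assumption~\ref{ass:consistency}, I would pass from the parameter norm to the $\mathcal{L}^2(P_X)$ prediction norm using Lipschitzness of the map $\theta\mapsto f_\theta$, and convert the in-expectation bound to the high-probability form by Markov, giving $\epsilon_l(n,\delta)=O(1/(\delta\sqrt n))$, still $O(n^{-1/2})$ for fixed $\delta$. The hard part will be the stability step: securing a genuinely dimension-free constant and checking that the hypothesis "$\phi$ is $G$-Lipschitz \emph{in }$\theta$" (rather than Lipschitz in the prediction) slots cleanly into the comparison of the two perturbed minimizers. A lesser subtlety is handling a constrained domain $\Theta$ in the strong-convexity step, where $\nabla F(\theta^*)$ need not vanish and one must lean on the first-order optimality inequality instead.
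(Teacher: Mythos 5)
Your proposal is correct and follows essentially the same route as the paper's proof sketch: strong convexity of $F$ converts the excess population risk $F(\hat\theta_n)-F(\theta^*)$ into $\tfrac{\mu}{2}\|\hat\theta_n-\theta^*\|_2^2$, and uniform algorithmic stability of strongly convex regularized ERM bounds $\mathbb{E}[F(\hat\theta_n)-F(\theta^*)]$ at rate $O(G^2/(\mu n))$, with Jensen delivering the $O(G/(\mu\sqrt n))$ rate. The only difference is that the paper simply cites \cite{bousquet2002stability} for the stability-based excess-risk bound, whereas you unpack it (basic inequality plus the $\beta\le 2G^2/(\mu n)$ stability estimate and the stability-implies-generalization step); you also correctly flag that on a constrained $\Theta$ one should invoke the variational inequality rather than $\nabla F(\theta^*)=0$, a point the paper elides.
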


\begin{proof}[Proof sketch]
Let $F(\theta):=\mathbb{E}[\phi(\theta;X,Y)] + \lambda r(\theta)$ and
$\hat F_n(\theta):=\frac{1}{n}\sum_{i=1}^n \phi(\theta;x_i,y_i)+\lambda r(\theta)$.
We have that $F$ is $\mu$-strongly convex with unique minimizer $\theta^*$.
Strong convexity implies
$
F(\hat\theta_n)-F(\theta^*) \ge \frac{\mu}{2}\|\hat\theta_n-\theta^*\|_2^2.
$
Moreover, for regularized ERM with $G$-Lipschitz loss (in $\theta$) and $\mu$-strongly convex
objective, uniform stability yields the excess risk bound
$
\mathbb{E}\!\left[F(\hat\theta_n)-F(\theta^*)\right] \le \frac{c\,G^2}{\mu n}
$
for a universal constant $c$ (see \cite{bousquet2002stability}).
Combining the two displays gives
$\mathbb{E}\|\hat\theta_n-\theta^*\|_2^2 \le \frac{2cG^2}{\mu^2 n}$, hence
$
\mathbb{E}\|\hat\theta_n-\theta^*\|_2 \le \frac{\sqrt{2c}\,G}{\mu\sqrt{n}}.
$
\end{proof}

% \begin{proof}[Proof sketch]
% Let $F(\theta):=\mathbb{E}[\phi(\theta;X,Y)] + \lambda r(\theta)$ denote the population objective and $\hat F_n(\theta)$ the empirical one. By assumption, $F$ is $\lambda$-strongly convex with unique minimizer $\theta^*$.
% Strong convexity implies $F(\hat\theta_n) - F(\theta^*) \ge \frac{\lambda}{2}\|\hat\theta_n-\theta^*\|_2^2$.
% Standard results for strongly convex ERM (e.g., via uniform stability) bound the expected optimization error as $\mathbb{E}[F(\hat\theta_n)-F(\theta^*)] \le \frac{c\,G^2}{\lambda n}$ \cite{bousquet2002stability}. Combining these inequalities yields the result.
% \end{proof}

\paragraph{Remark on regularization.}

Statsformer operates over a finite dictionary of learner configurations with fixed
hyperparameters. In particular, regularization parameters such as $\lambda$ are fixed
within each configuration and do not decay with $n$, ensuring that the consistency rate
in Proposition~\ref{prop:erm-consistency} applies uniformly across the ensemble. This setting covers standard strongly convex learners such as ridge regression and
regularized generalized linear models, as well as practical AutoML-style pipelines with
a finite hyperparameter grid.

\paragraph{Remark on data-dependent regularization (Lasso).}
For estimators such as the Lasso, the tuning parameter may depend on sample size,
either through a deterministic schedule $\lambda_m$ or data-adaptively via cross-validation.
As a result, classical stability arguments based on uniform strong convexity are not the most direct tool
for analyzing $K$-fold refitting.
In our framework, this is handled by treating the configuration $l$ as including the tuning rule for $\lambda$,
and letting $f_l^*$ denote the corresponding population-limit predictor.
Under standard regularity conditions, Lasso predictors tuned by leave-one-out cross-validation are risk-consistent,
and the leave-one-out and full-sample fits are asymptotically equivalent
\citep{homrighausen2013leaveoneoutcrossvalidationriskconsistent}.
Consequently, Assumption~\ref{ass:consistency} holds for the Lasso in prediction, and by
Lemma~\ref{lem:refit-gap} we obtain $\Delta_{\mathrm{refit}}\to 0$
(with rates following once $\epsilon_l(m)$ is specified).

\paragraph{Example: Linear predictors.}
For linear predictors $f_\theta(x)=\langle\theta,x\rangle$ with $\|x\|_2\le B_x$,
\[
\|\hat f_{l,n}-f^*_l\|_2
=
\Big(\mathbb{E}[|\langle \hat\theta_n-\theta^*, X\rangle|^2]\Big)^{1/2}
\le
B_x\|\hat\theta_n-\theta^*\|_2
= O(n^{-1/2}),
\]
verifying Assumption~\ref{ass:consistency}.

\paragraph{Remark on Tree-Based Methods.}
While Proposition~\ref{prop:erm-consistency} focuses on convex parametric models,
Assumption~\ref{ass:consistency} (consistency in $\ell_2$) can also be satisfied by
several nonparametric learners in the Statsformer dictionary, under standard
conditions and with appropriate regularization or tuning.
\begin{itemize}
    \item \emph{Random Forests:}
Under additive regression models, the (infinite) Breiman random forest is
$\mathcal{L}^2$-consistent for the conditional mean when tree complexity is suitably
controlled (e.g., the number of leaves grows appropriately with the subsample
size), and for fully grown trees under additional technical conditions,
including a vanishing subsampling rate \citep{scornet2015consistency}.
    \item \emph{Gradient Boosting:}
Stage-wise boosting can be made consistent via early stopping, often paired with
shrinkage (small step sizes), which implicitly controls the complexity of the
additive model in a manner analogous in spirit to the penalty in Proposition~\ref{prop:erm-consistency} \citep{zhang2005boosting, bartlett2007adaboost}.

\end{itemize}
Thus, although tree-based learners need not satisfy the strong convexity
conditions of Proposition~\ref{prop:erm-consistency}, they can be configured to
satisfy the consistency requirement in Assumption~\ref{ass:consistency},
ensuring the refit gap vanishes under the assumptions of our analysis.

\subsection{Extensions and Relaxations}
\label{subsec:extension}

The oracle guarantees in Theorem~\ref{thm:convex-aggregation} are stated under
boundedness and Lipschitz assumptions that enable a clean Rademacher-based analysis
and isolate the statistical role of validated prior integration.
These assumptions are standard in aggregation theory and can be relaxed in several
directions. We briefly outline the most relevant extensions.

\paragraph{Multiclass classification.}
Theorem~\ref{thm:convex-aggregation} applies directly to $k$-class classification,
where predictors output score or probability vectors in $\mathbb{R}^k$.
For standard losses such as softmax cross-entropy, the loss is Lipschitz on bounded
prediction domains, and the boundedness assumption holds automatically when predictions
lie in the probability simplex or are uniformly clipped.

\paragraph{Unbounded or locally Lipschitz losses.}
The bounded-loss assumption in Theorem~\ref{thm:convex-aggregation} can be relaxed
to cover unbounded losses under standard tail conditions.
In particular, suppose $\ell(\cdot,y)$ is locally Lipschitz and the induced losses
$\ell(f_l^{(-k)}(X),Y)$ are uniformly sub-Gaussian.
Then, by using concentration inequalities for sub-Gaussian variables, one obtains
an oracle inequality for \emph{model selection} (competing with the best single predictor in-dictionary).
See proposition below.

\begin{proposition}[Oracle Model Selection under sub-Gaussian losses]
\label{prop:subg-selection} 
Assume the setup of Theorem~\ref{thm:convex-aggregation}, but relax the boundedness assumption.
Assume instead that for every fold $k$ and every configuration $l \in \{1, \dots, L\}$, conditional on the training data used to construct $f_l^{(-k)}$,
the centered random variable \footnote{Recall that the expectation is conditional on the training data for an independent copy $(X,Y)$ sampled from the population distribution. This assumptions is satisfied, for example, if There exists $\sigma>0$ such that for every predictor $f$ in the (data-dependent) dictionary,
$
\ell(f(X),Y) - \mathbb{E}\!\left[\ell(f(X),Y)\right]
$
is $\sigma^2$-sub-Gaussian.
}
\[
Z_{k,\ell} := \ell(f_l^{(-k)}(X),Y) - \mathbb{E}[\ell(f_l^{(-k)}(X),Y)]
\]
is $\sigma^2$-sub-Gaussian.\footnote{This assumptions is satisfied, for example, if There exists $\sigma>0$ such that for every predictor $f$ in the (data-dependent) dictionary,
$
\ell(f(X),Y) - \mathbb{E}\!\left[\ell(f(X),Y)\right]
$
is $\sigma^2$-sub-Gaussian.
} Let $\tilde f_{\mathrm{Sel}}$ be the selector that minimizes the empirical cross-validation risk over the finite set of configurations $\{1, \dots, L\}$.
Then there exists a universal constant $C>0$ such that for any $\delta\in(0,1)$,
with probability at least $1-\delta$,
\[
R(\tilde f_{\mathrm{Sel}})
\le
\min_{l \in \{1, \dots, L\}}
\sum_{k=1}^K \frac{n_k}{n}\,
\mathbb{E}\!\left[\ell\!\big(f_l^{(-k)}(X),Y\big)\right]
+
C\,\sigma\,
\sqrt{\frac{K \log(LK/\delta)}{n}}.
\]
\end{proposition}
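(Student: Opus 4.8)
The plan is to follow the skeleton of the proof of Theorem~\ref{thm:convex-aggregation}, but to replace the Rademacher/vector-contraction machinery — which crucially needs a uniform norm bound over the convex hull $\mathcal F_k$ — by a plain union bound over the finite index set $\{1,\dots,L\}$. This is why the conclusion is a \emph{model-selection} oracle (competing with $\min_l$) rather than the convex-aggregation oracle: without boundedness we cannot pass to the convex hull, so only the $L$ vertices are controlled.

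First I would fix a fold $k$ and condition on the training data $\mathcal D\setminus\mathcal D_k$ used to construct $\{f_l^{(-k)}\}_{l=1}^L$. Since the fold partition is chosen independently of the data, conditional on that training data the validation points $\{(x_i,y_i):i\in I_k\}$ are i.i.d.\ copies of $(X,Y)\sim P$ and independent of the conditioning. Writing $R_k(l):=\mathbb E[\ell(f_l^{(-k)}(X),Y)]$ and $\hat R_k(l):=\frac{1}{n_k}\sum_{i\in I_k}\ell(f_l^{(-k)}(x_i),y_i)$, the residuals $\ell(f_l^{(-k)}(x_i),y_i)-R_k(l)$ are i.i.d.\ copies of $Z_{k,l}$, which is $\sigma^2$-sub-Gaussian by hypothesis; their empirical mean is therefore $\sigma^2/n_k$-sub-Gaussian, so the one-sided/two-sided sub-Gaussian tail bound gives $\mathbb P(|\hat R_k(l)-R_k(l)|>t)\le 2\exp(-n_k t^2/(2\sigma^2))$. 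Choosing $t=t_k:=\sigma\sqrt{2\log(2LK/\delta)/n_k}$ makes this probability at most $\delta/(LK)$; a union bound over all $LK$ pairs $(k,l)$ then yields that, with probability at least $1-\delta$, $|\hat R_k(l)-R_k(l)|\le t_k$ for all $l$ and all $k$ simultaneously. Note this per-pair bound holds conditionally on the training fold and hence also unconditionally, so there is no circularity when $\hat l$ is later plugged in.

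On this event I would average the per-fold bounds with weights $n_k/n$. Since $\hat R_{\mathrm{CV}}(l)=\sum_k (n_k/n)\hat R_k(l)$ and the target is $R_{\mathrm{CF}}(l)=\sum_k (n_k/n)R_k(l)$, the triangle inequality gives, uniformly over $l$,
\[
\bigl|\hat R_{\mathrm{CV}}(l)-R_{\mathrm{CF}}(l)\bigr|\;\le\;\sum_{k=1}^K \frac{n_k}{n}\,t_k\;=\;\sigma\sqrt{2\log(2LK/\delta)}\,\sum_{k=1}^K\frac{\sqrt{n_k}}{n}\;\le\;C\,\sigma\sqrt{\frac{K\log(LK/\delta)}{n}}=:\varepsilon,
\]
where the last step uses $\sum_k \sqrt{n_k}/n \asymp \sqrt{K/n}$ for approximately balanced folds and absorbs numerical constants into $C$. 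The standard oracle decomposition applied to the empirical minimizer $\hat l\in\argmin_l \hat R_{\mathrm{CV}}(l)$ then gives, for any reference configuration $l^\star$, $R_{\mathrm{CF}}(\hat l)\le \hat R_{\mathrm{CV}}(\hat l)+\varepsilon\le \hat R_{\mathrm{CV}}(l^\star)+\varepsilon\le R_{\mathrm{CF}}(l^\star)+2\varepsilon$, hence $R_{\mathrm{CF}}(\hat l)\le \min_l R_{\mathrm{CF}}(l)+2\varepsilon$. Finally, defining $\tilde f_{\mathrm{Sel}}$ as the randomized cross-fitted selector (draw $k$ with probability $n_k/n$ and output $f_{\hat l}^{(-k)}$), the law-of-total-expectation identity of Lemma~\ref{lemma:risk_id} gives $R(\tilde f_{\mathrm{Sel}})=R_{\mathrm{CF}}(\hat l)$, and re-absorbing the factor $2$ into $C$ yields the stated inequality.

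The only delicate point — more bookkeeping than obstacle — is the conditioning structure: one must be explicit that the sub-Gaussian assumption is a conditional statement given the training fold, that the data-independence of the fold partition makes the validation residuals i.i.d.\ copies of $Z_{k,l}$ conditionally, and that the uniform bound is over the \emph{fixed} index set $\{1,\dots,L\}$ even though the predictors $f_l^{(-k)}$ are themselves data-dependent. Everything else is the routine sub-Gaussian-maximal-inequality-plus-oracle-decomposition argument.
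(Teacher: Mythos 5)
Your proposal is correct and follows essentially the same route as the paper: fix a fold, condition on the training data, apply a sub-Gaussian concentration inequality to the validation-fold average, union bound over the $LK$ (fold, configuration) pairs, combine the fold bounds with weights $n_k/n$, and close with the standard oracle decomposition $R_{\mathrm{CF}}(\hat l)\le R_{\mathrm{CF}}(l^\star)+2\sup_l|\hat R_{\mathrm{CV}}(l)-R_{\mathrm{CF}}(l)|$. The only cosmetic differences are that the paper invokes the general Hoeffding inequality via the $\psi_2$-norm formulation (Vershynin) whereas you use the equivalent raw sub-Gaussian tail, and the paper writes the oracle decomposition before the concentration step whereas you do the reverse; your explicit remark about the conditioning structure and the absence of circularity when plugging in $\hat l$ is a useful clarification that the paper leaves implicit.
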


\begin{proof}
% \textbf{Step 1: Excess Risk Decomposition.}
Let $\hat l$ be the index selected by minimizing $\hat R_{\mathrm{CV}}(\cdot)$,
and let $l^*$ be the index minimizing the true CV risk $R_{\mathrm{CF}}(\cdot)$.
Note that $R(\tilde f_{\mathrm{Sel}}) = R_{\mathrm{CF}}(\hat l)$.\footnote{For the analysis in this proof, we slightly abuse notation by using the configuration index $l$ as the argument for the risk functions $R_{\mathrm{CF}}(\cdot)$ and $\hat R_{\mathrm{CV}}(\cdot)$ for brevity.}
By adding and subtracting empirical terms, we have
\begin{align}\label{eq:inequality1}
\begin{split}
R_{\mathrm{CF}}(\hat l) - R_{\mathrm{CF}}(l^*)
&=
\big[R_{\mathrm{CF}}(\hat l) - \hat R_{\mathrm{CV}}(\hat l)\big]
+ \big[\hat R_{\mathrm{CV}}(\hat l) - \hat R_{\mathrm{CV}}(l^*)\big]
+ \big[\hat R_{\mathrm{CV}}(l^*) - R_{\mathrm{CF}}(l^*)\big] \\
&\le
\big|R_{\mathrm{CF}}(\hat l) - \hat R_{\mathrm{CV}}(\hat l)\big|
+ \big|\hat R_{\mathrm{CV}}(l^*) - R_{\mathrm{CF}}(l^*)\big| \\
&\le
2 \sup_{l' \in \{1,\dots,L\}}
\big| \hat R_{\mathrm{CV}}(l') - R_{\mathrm{CF}}(l') \big|,
\end{split}
\end{align}
where the first inequality uses the optimality of $\hat l$ for $\hat R_{\mathrm{CV}}$,
which implies $\hat R_{\mathrm{CV}}(\hat l) \le \hat R_{\mathrm{CV}}(l^*)$.

% \textbf{Step 2: Concentration on folds.}
Fix a fold $k$ and an index $l$. Condition on the training data used to construct $f_l^{(-k)}$.
The validation losses $Z_i := \ell(f_l^{(-k)}(x_i), y_i) - \mathbb{E}[\ell]$ for $i \in I_k$ are independent, mean-zero, $\sigma^2$-sub-Gaussian variables.
Recall that the sub-Gaussian parameter $\sigma$ is equivalent to the sub-Gaussian norm $\|\cdot\|_{\psi_2}$ up to an absolute constant, i.e., $\|Z_i\|_{\psi_2} \lesssim \sigma$.
Applying the general Hoeffding inequality (Thm 2.7.3 in \cite{vershynin:hdp2018}) to the sum of these variables:
\[
\Pr\left( \left| \sum_{i \in I_k} Z_i \right| \ge t \right)
\le
2 \exp\left( - \frac{c t^2}{\sum_{i \in I_k} \|Z_i\|_{\psi_2}^2} \right)
\le
2 \exp\left( - \frac{c' t^2}{n_k \sigma^2} \right).
\]
Setting $t = n_k \epsilon$ to convert the sum to an average $\hat R_k(l) - R_k(l)$:
\[
\Pr\left( \left| \hat R_k(l) - R_k(l) \right| \ge \epsilon \right)
\le
2 \exp\left( - \frac{c' (n_k \epsilon)^2}{n_k \sigma^2} \right)
=
2 \exp\left( - \frac{c' n_k \epsilon^2}{\sigma^2} \right).
\]

% \textbf{Step 3: Union Bound.}
We apply a union bound over all $L$ models and all $K$ folds. Set the failure probability for a single pair $(k, l)$ to $\delta' = \delta / (LK)$.
Solving the probability bound for $\epsilon$, we find that with probability at least $1-\delta$, simultaneously for all $k, l$:
\[
\left| \hat R_k(l) - R_k(l) \right|
\le
C_1 \sigma \sqrt{\frac{\log(2LK/\delta)}{n_k}},
\]
where $C_1$ is a constant depending only on $c'$.

% \textbf{Step 4: Aggregation.}
The global deviation is the weighted sum of fold deviations. Using $n_k \approx n/K$:
\[
\sup_{l} \big| \hat R_{\mathrm{CV}}(l) - R_{\mathrm{CF}}(l) \big|
\le
\sum_{k=1}^K \frac{n_k}{n} C_1 \sigma \sqrt{\frac{\log(2LK/\delta)}{n_k}}
\le
C_2 \sigma \sqrt{\frac{K \log(LK/\delta)}{n}},
\]
where $C_2$ absorbs $C_1$ and numerical factors from the logarithm approximation.
Substituting this bound back into Inequality \eqref{eq:inequality1} yields:
\[
R_{\mathrm{CF}}(\hat l) - R_{\mathrm{CF}}(l^*) \le 2 \left( C_2 \sigma \sqrt{\frac{K \log(LK/\delta)}{n}} \right).
\]
The result follows by defining the final universal constant $C = 2 C_2$.
\end{proof}

\paragraph{Example: squared-error regression.}
Consider the squared loss $\ell(u,y)=(u-y)^2$.
If $|f_l^{(-k)}(x)|\le M$ and $|Y|\le M$ almost surely, then for any $y\in[-M,M]$
the map $u\mapsto \ell(u,y)$ is $4M$-Lipschitz on $[-M,M]$.
Thus Theorem~\ref{thm:convex-aggregation} applies with $L_\ell=4M$ and $B=M$.

More generally, suppose $Y_i$ and $f_l^{(-k)}(X_i)$ are $\sigma^2$-sub-Gaussian
(uniformly over $i,k,l$). By the sub-Gaussian tail bound and a union bound over
$N \asymp nLK$ variables, with probability at least $1-\delta$,
\[
|f_l^{(-k)}(X_i)| \;\vee\; |Y_i|
\le
C\sigma\sqrt{\log(nLK/\delta)}
\qquad \text{for all } i,k,l,
\]
for a universal constant $C$ \cite{vershynin:hdp2018}.
On this event, the squared loss is $L_{\mathrm{loc}}$-Lipschitz in its first argument with
$L_{\mathrm{loc}} = 4C\sigma\sqrt{\log(nLK/\delta)}$.
Applying Theorem~\ref{thm:convex-aggregation} on the localized event and controlling the
complement via a standard truncation argument yields analogous oracle guarantees up to
logarithmic factors; see, e.g., \cite{bousquet2002concentration,catoni2012challenging,mendelson2022learning}.

\subsection{Additional Deferred Definitions and Lemmas}

\begin{definition}[$\lambda$-strong convexity]\label{def:strong-convex}
Let $f:\mathcal{W}\to\mathbb{R}$ be a function and let $\|\cdot\|$ be a norm on $\mathcal{W}$.
We say that $f$ is \emph{$\lambda$-strongly convex} for some $\lambda>0$ if for all
$w,w'\in\mathcal{W}$ and all $t\in[0,1]$,
\[
f(tw+(1-t)w')
\le
t f(w) + (1-t) f(w')
-
\frac{\lambda}{2} t(1-t)\|w-w'\|^2.
\]
\end{definition}

\begin{remark}[First-order characterization]
If $f$ is differentiable, then $f$ is $\lambda$-strongly convex with respect to
$\|\cdot\|$ if and only if for all $w,w'\in\mathcal{W}$,
\[
f(w') \ge f(w)
+ \langle \nabla f(w), w'-w\rangle
+ \frac{\lambda}{2}\|w'-w\|^2.
\]
\end{remark}

\begin{lemma}[Risk Identification]\label{lemma:risk_id}
Let $\hat{\pi}$ be the weight vector obtained via $K$-fold cross-validation. The population risk of the cross-fitted aggregate $\tilde f_{\mathrm{SF}}$, denoted $R(\tilde f_{\mathrm{SF}})$, is identically the population cross-fit risk functional $R_{\mathrm{CF}}(\pi)$ evaluated at $\pi = \hat{\pi}$:
\[
R(\tilde f_{\mathrm{SF}}) = R_{\mathrm{CF}}(\hat{\pi}).
\]
\end{lemma}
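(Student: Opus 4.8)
The plan is to establish the identity by a direct application of the law of total expectation, exploiting the independence of the fold-selection randomization from the test point. Throughout we work conditionally on the training sample $\mathcal{D}$, so that the cross-fitted predictors $\{f_l^{(-k)}\}_{l,k}$ and the cross-validated weight vector $\hat\pi$ are treated as fixed (deterministic) objects, and the only sources of randomness are the independent test point $(X,Y)\sim P$ together with the auxiliary fold index $k^\star$ used to define the randomized aggregate $\tilde f_{\mathrm{SF}}$.

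First I would recall the construction of $\tilde f_{\mathrm{SF}}$: let $k^\star\in\{1,\dots,K\}$ be drawn with $\Pr(k^\star=k)=n_k/n$, independently of $(X,Y)$ and of $\mathcal{D}$, and set $\tilde f_{\mathrm{SF}}(X)=\sum_{l=1}^L \hat\pi_l f_l^{(-k^\star)}(X)$. Conditioning on $k^\star$ and applying the tower property gives
\[
R(\tilde f_{\mathrm{SF}})
=\mathbb{E}\!\left[\ell\!\left(\tilde f_{\mathrm{SF}}(X),Y\right)\right]
=\sum_{k=1}^K \Pr(k^\star=k)\;\mathbb{E}\!\left[\ell\!\left(\textstyle\sum_{l=1}^L \hat\pi_l f_l^{(-k)}(X),\,Y\right)\,\Big|\,k^\star=k\right].
\]
Since $k^\star$ is independent of $(X,Y)$ and the predictors $f_l^{(-k)}$ are $\mathcal{D}$-measurable (hence unaffected by conditioning on $k^\star$), each conditional expectation equals the corresponding unconditional fold-$k$ population risk $\mathbb{E}[\ell(\sum_l \hat\pi_l f_l^{(-k)}(X),Y)]$. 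Substituting $\Pr(k^\star=k)=n_k/n$ then yields $\sum_{k=1}^K \frac{n_k}{n}\,\mathbb{E}[\ell(\sum_l \hat\pi_l f_l^{(-k)}(X),Y)]=R_{\mathrm{CF}}(\hat\pi)$, which is the claimed identity.

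I do not expect a genuine obstacle: the statement is a bookkeeping identity rather than a probabilistic estimate, so no concentration, stability, or complexity argument is needed. The only points requiring care are formal: (i) making explicit that the expectation defining $R(\tilde f_{\mathrm{SF}})$ integrates over both $(X,Y)$ and the independent randomizer $k^\star$; (ii) noting that $\hat\pi$, although data-dependent, is a fixed vector once $\mathcal{D}$ is conditioned on, so it commutes with the conditioning on $k^\star$; and (iii) observing that integrability of each $\ell(\sum_l \hat\pi_l f_l^{(-k)}(X),Y)$ is ensured by the boundedness and Lipschitz hypotheses already in force in Theorem~\ref{thm:convex-aggregation}, so the interchange of the finite sum and the expectation is justified.
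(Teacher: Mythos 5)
Your proposal is correct and follows essentially the same route as the paper: condition on $\mathcal{D}$, introduce the auxiliary fold index $k^\star$ with $\Pr(k^\star=k)=n_k/n$ independent of $(X,Y)$, and apply the law of total expectation to decompose $R(\tilde f_{\mathrm{SF}})$ into the weighted sum of per-fold population risks, which is by definition $R_{\mathrm{CF}}(\hat\pi)$. Your added remarks on measurability of $\hat\pi$ given $\mathcal{D}$ and on integrability are consistent with, and slightly more explicit than, the paper's argument.
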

\begin{proof}
Let $\mathcal{D} = \{(x_i, y_i)\}_{i=1}^n$ denote the training dataset and let $(X,Y) \sim P$ be a test point independent of $\mathcal{D}$. We define a random index $k^\star$ independent of both $\mathcal{D}$ and $(X,Y)$, such that $P(k^\star = k) = n_k/n$ for $k = 1, \dots, K$. The randomized predictor is defined as $\tilde f_{\mathrm{SF}}(X) = \sum_{l=1}^L \hat\pi_l f_l^{(-k^\star)}(X)$.

We evaluate the population risk of $\tilde f_{\mathrm{SF}}$ by conditioning on the observed data $\mathcal{D}$ and applying the law of total expectation over the auxiliary variable $k^\star$:
\begin{align*}
R(\tilde f_{\mathrm{SF}}) 
&= \mathbb{E}\!\left[ \ell(\tilde f_{\mathrm{SF}}(X), Y) \mid \mathcal{D} \right] \\
&= \mathbb{E}_{k^\star} \!\left[ \mathbb{E}_{(X,Y)} \!\left[ \ell\left( \sum_{l=1}^L \hat\pi_l f_l^{(-k^\star)}(X), Y \right) \Bigg| \mathcal{D}, k^\star \right] \right] \\
&= \sum_{k=1}^K \frac{n_k}{n} \, \mathbb{E}_{(X,Y)} \!\left[ \ell\left( \sum_{l=1}^L \hat\pi_l f_l^{(-k)}(X), Y \right) \Bigg| \mathcal{D} \right].
\end{align*}

Given $\mathcal{D}$, the weight vector $\hat{\pi}$ and the predictors $\{f_l^{(-k)}\}_{l=1}^L$ are fixed constants (i.e., they are $\sigma(\mathcal{D})$-measurable). Because $(X,Y)$ is independent of $\mathcal{D}$, the inner expectation for a fixed $k$ is equivalent to the population risk functional $R_k$ evaluated at the cross-validated weights:
\[
\mathbb{E}_{(X,Y)} \!\left[ \ell\left( \sum_{l=1}^L \hat\pi_l f_l^{(-k)}(X), Y \right) \Bigg| \mathcal{D} \right] = R_k(\hat\pi),
\]
where $R_k(\pi) \coloneqq \mathbb{E}_{(X,Y)\sim P} \!\left[ \ell\left( \sum_l \pi_l f_l^{(-k)}(X), Y \right) \right]$. Substituting this back into the summation, we obtain:
\[
R(\tilde f_{\mathrm{SF}}) = \sum_{k=1}^K \frac{n_k}{n} R_k(\hat\pi) = R_{\mathrm{CF}}(\hat\pi).
\]
This completes the proof.
\end{proof}

\section{Injection Mechanism}
\section{Modes of Prior Injection}\label{appdx:prior-injection}

This appendix provides the formal instantiations of the general prior-injection principle described in the main text. 
These forms are not exhaustive, but they cover a large fraction of commonly used learning architectures and illustrate how the same abstract principle yields concrete, implementable objectives across model families.

\paragraph{Penalty-based injection.}
For linear predictors $b_m(x;\theta_m)=\langle x,\theta_m\rangle$ or generalized linear models, the prior modulates the regularization geometry via feature-specific penalty weights $w_j(\alpha)=\tau_\alpha(v_j)$:
\begin{align*}
\begin{split}
\mathcal{L}_m(\theta_m;\alpha)
&=
\frac{1}{n}\sum_{i=1}^n \ell\big(b_m(x_i;\theta_m),y_i\big)
+ \lambda_m \sum_{j=1}^p w_j(\alpha)\,\phi(\theta_{m,j}),
\end{split}
\end{align*}
where $\phi(\cdot)$ denotes a coordinate-wise regularizer (e.g., $\ell_1$, $\ell_2$, or elastic-net penalties).
The null condition $\alpha=0$ yields uniform penalization and recovers the prior-free learner.

\paragraph{Feature-reweighting injection.}
For nonlinear learners whose training depends on the feature representation, the prior is injected by reweighting input coordinates,
\begin{align}
\label{eq:scale-injection}
\tilde X_{ij}(\alpha) = s_j(\alpha)\,X_{ij},
\qquad
s_j(\alpha)=\tau_\alpha(v_j),
\end{align}
after which training proceeds by minimizing
\[
\mathcal{L}_m(\theta_m;\alpha)
=
\frac{1}{n}\sum_{i=1}^n \ell\big(b_m(\tilde x_i(\alpha);\theta_m),y_i\big)
+
\lambda_m \Omega_m(\theta_m).
\]

This rescaling emphasizes directions associated with large $v_j$ in kernel evaluations or gradient flow, while recovering the original model when $\alpha=0$.

Tree-based methods are typically insensitive to feature scaling but can incorporate feature priors through feature subsampling. Methods such as XGBoost support feature weights directly, while other implementations can inject priors by sampling features in proportion to $s_j(\alpha)$.\footnote{Most tree-based learners perform random feature subsampling at each split; when explicit feature weights are unavailable, this behavior can be emulated by oversampling features according to their prior-induced weights.}

\paragraph{Instance-weight injection.}
For learners that accept observation weights, the feature prior induces sample weights
$\rho_i(\beta)$, leading to the objective
\begin{align*}
% \label{eq:instance-injection}
\mathcal{L}_m(\theta_m;\beta)
=
\frac{1}{n}\sum_{i=1}^n \rho_i(\beta)\,
\ell\big(b_m(x_i;\theta_m),y_i\big)
+
\lambda_m \Omega_m(\theta_m),
\end{align*}
which concentrates training on samples where LLM-identified features are active. We use the formulation
\[\rho_i(\beta)
=
(1-\beta)+\beta\,\frac{\tilde{\rho}_i-\min_k \tilde{\rho}_k}{\max_k \tilde{\rho}_k-\min_k \tilde{\rho}_k + \varepsilon}, \]
where $\epsilon$ is a small constant, $\beta\in[0,1]$ interpolates between uniform ($\beta=0$) and
prior-aware weight ($\beta=1$), and
\begin{align}
\tilde{\rho}_i
\;=\;
\sum_{j=1}^p v_j\,|x_{ij}|^q,
\qquad q\geq 1,
\end{align}
This aggregates the activation of semantically relevant features in the sample.
\footnote{Appendix~\ref{subsec:kl-instance} derives this formula via a KL projection of the empirical distribution onto a density induced by the prior $V$.}

Across all cases, the external prior $V$ influences learning only through the monotone family $\tau_\alpha$, and every transformation satisfies $\tau_0(v_j)=1$.
Hence, each learner family contains a prior-free baseline, yielding a built-in in-library ``no-worse-than-null'' safeguard up to statistical errors once models are selected via out-of-fold validation.

% \section{Principled Monotone Map Families}
\section{Principled Monotone Map Families}\label{appdx:monotone_map_families}

Statsformer injects semantic priors into statistical learners through monotone transformations $\tau_\alpha:\cV\to\R_{>0}$ that modulate penalties, feature reweighting (e.g., scaling or subsampling weights), or sample weights. To move beyond ad hoc engineering, we can demand that each family of maps arises from an optimality principle: the transformation should be the unique solution of a probabilistic calibration or divergence minimization problem under the prior information supplied by the LLM. Once the principle is fixed, the resulting map is automatically monotone, satisfies $\tau_0(v)=1$, and inherits a clear semantic interpretation. This section develops two concrete families, which are used in the implementation for Statsformer.

\subsection{Bayesian Interpretation of Penalty Weights}
\label{subsec:map-penalty}

Consider the weighted Lasso estimator
\[
\hat\beta
=
\arg\min_{\beta\in\mathbb{R}^p}
\frac{1}{2\sigma^2}\|Y-X\beta\|_2^2
+
\lambda \sum_{j=1}^p w_j |\beta_j|,
\qquad
w_j=\tau_\alpha(v_j).
\]
This objective corresponds to the MAP estimator under a Gaussian likelihood
$Y\mid\beta\sim\mathcal{N}(X\beta,\sigma^2 I)$
and independent Laplace priors
$p(\beta_j\mid b_j)\propto \exp(-|\beta_j|/b_j)$,
with the identification $w_j = 1/b_j$.

To connect $b_j$ to the LLM-derived prior $v_j$, we adopt a spike-and-slab perspective.
Let
\[
\beta_j\mid\gamma_j \sim (1-\gamma_j)\delta_0 + \gamma_j\,\text{Laplace}(0,b_{\max}),
\qquad
\gamma_j\sim\text{Bernoulli}(\pi_j),
\]
where $\pi_j=\rho(v_j)$ is a monotone calibration of the semantic score.
Following the adaptive Lasso approximation \citep{zou2006adaptive},
the discrete spike is replaced by a continuous penalty with effective scale
\[
b_j = \frac{b_{\max}}{\max(\pi_j,\epsilon)}, \qquad \epsilon>0.
\]
The resulting penalty weight is
\[
w_j = \frac{1}{b_j} = \frac{\max(\pi_j,\epsilon)}{b_{\max}},
\]
which is monotone increasing in $v_j$ whenever $\rho$ is monotone.
Thus, semantically relevant features (large $v_j$) receive smaller penalties.

This derivation provides a Bayesian motivation for the monotone penalty families
used in Statsformer, without requiring exact posterior computation.

\subsection{Instance Weighting via KL Projection}
\label{subsec:kl-instance}

Appendix~\ref{appdx:prior-injection} defines sample weights $\rho_i(\beta)$ by hand. We can recover the same structure from an information projection that aligns the empirical distribution with a prior density over $(X,Y)$ implied by the LLM feature scores.

Let $P_n$ be the empirical measure that assigns mass $1/n$ to each observation $(X_i,Y_i)$.
Assume the LLM provides a nonnegative signal $s_i = \psi(v,X_i)$, such as the feature-importance score
$s_i(V,q) = \sum_{j=1}^p v_j|x_{ij}|^q$.
Define the target tilted distribution
\begin{align}
    Q^\star = \argmin_{Q\ll P_n}
        \left\{
            \mathrm{KL}(Q \Vert P_n)
            \ \text{s.t.}\ \E_Q[\phi(X,Y)] = c(v)
        \right\},
\end{align}
where $\phi$ is a sufficient statistic chosen to emphasize high-prior regions (e.g., $\phi(X,Y)=s(X)$) and $c(v)$ encodes the desired shift derived from $V$.
Standard exponential-family duality implies that $Q^\star$ is an exponential tilt of $P_n$:
\begin{align}
    \frac{dQ^\star}{dP_n}(X_i,Y_i)
    = \frac{\exp(\eta^\top \phi(X_i,Y_i))}{Z(\eta)}, \qquad
    \eta = \eta(v),
\end{align}
with partition function $Z(\eta) = \tfrac{1}{n}\sum_{k=1}^n \exp(\eta^\top \phi(X_k,Y_k))$.
Setting $\phi(X_i,Y_i)=s_i$ and $c(v)=\bar{s} + \beta(\bar{s}_{\text{prior}} - \bar{s})$, where $\bar{s}$ is the empirical mean and $\bar{s}_{\text{prior}}$ the prior-implied target, yields the moment equation
\begin{align}
    \frac{1}{n}\sum_{i=1}^n s_i \frac{\exp(\eta s_i)}{Z(\eta)}
    = (1-\beta)\bar{s} + \beta\, \bar{s}_{\text{prior}},
    \qquad \beta\in[0,1].
\end{align}
Solving for $\eta=\eta(\beta)$ produces weights
\begin{align}
    \rho_i(\beta)
    = \frac{dQ^\star}{dP_n}(X_i,Y_i)
    = \frac{\exp(\eta(\beta) \, s_i)}{Z(\eta(\beta))}.
\end{align}
Because $\eta(\beta)$ increases with $\beta$ (by the implicit function theorem under mild regularity), the map
\begin{align}
    \tau_\beta(v_i)
    = \rho_i(\beta)
\end{align}
is monotone in $\beta$ and in $s_i$, hence monotone in $v$. Linearizing near $\beta=0$ gives
\begin{align}
    \rho_i(\beta)
    \approx 1 + \beta \cdot \frac{s_i - \bar{s}}{\Var_{P_n}(s)},
\end{align}
which matches the affine blend in the main text (after rescaling and clipping to keep weights positive).
Thus the ``engineering'' formula is the first-order approximation of the exponential-family solution. Using the exact tilt instead yields a principled weight with the same computational complexity—up to the one-dimensional root-finding required to determine $\eta(\beta)$.

If the LLM provides a full density $r(X)$ rather than feature scores, we simply replace $s_i$ by $\log r(X_i)$ and set $c(v)$ to match the expected log-likelihood under $Q^\star$. The optimal weights become
\begin{align}
    \rho_i(\beta) = \frac{\exp(\beta \log r(X_i))}{\frac{1}{n}\sum_{k=1}^n \exp(\beta \log r(X_k))},
\end{align}
interpolating between the empirical distribution ($\beta=0$) and the LLM density ($\beta=1$).

\paragraph{Heuristics.}
The mapping from semantic scores to sample weights is not unique; Statsformer supports a family of monotone transformations, including power-law and exponential weighting schemes, that can be applied to $\rho_i(\beta)$ to modulate the strength of the instance weights.
In practice, these choices can be fixed heuristically or selected via out-of-fold validation, ensuring that stronger weighting is applied only when it yields empirical gains and is otherwise automatically attenuated.

\subsection{Summary}
Both constructions anchor the monotone map in an optimization principle: MAP estimation for penalties, KL projection for sample weights. They remain compatible with the guardrail condition $\tau_0=1$, expose interpretable hyperparameters ($\alpha$ or $\beta$ as prior trust levels), and integrate seamlessly with the Statsformer stacking pipeline. To conclude:
\begin{itemize}
    \item The MAP-derived penalty family applies to any learner whose objective decomposes as empirical loss plus a sum of coordinate-wise penalties. Beyond the Lasso archetype, this includes GLMs, logistic/Poisson regression, kernel methods with diagonal priors, and neural networks equipped with group-wise $\ell_1/\ell_2$ regularization.
    \item The KL-projection weights require only that the base learner accept instance weights. Most practical models (e.g. generalized linear models, SVMs, tree ensembles, boosting algorithms, and many neural networks) already expose this interface.
    \item If a learner lacks the necessary hook (separable penalties or instance weights), Statsformer simply omits the corresponding prior family for that learner; the guardrail persists because the null path $(\alpha,\beta)=(0,0)$ always remains in the dictionary.
\end{itemize}

\paragraph{Remarks.}
\textit{
The aggregation guarantees that motivate the ``no-worse-than-null'' story belong to a classical convex aggregation lineage (e.g., Nemirovski, Yang, Lecué, Tsybakov). They only require that the dictionary contain a baseline element, so in the abstract Bayesian picture the prior can be any noisy but potentially informative source interpolated against a non-informative guardrail. The novelty of Statsformer lies in pairing that old backbone with foundation-model priors: we turn LLM-derived feature scores into the principled map families above, let the ensemble validate their usefulness, and interpret the theory as a safety certificate for data-driven LLM guidance.}

% \section{Deferred Experiments}
\section{Computational Analysis}\label{app:comp_analysis}
\subsection{Score Collection}
The score collection involves a one-time cost per dataset.
To reduce the LLM context length required for large feature sets, we divide the feature names into at most $\sqrt{p}$ batches and perform one API query per batch.
Up to several thousand features, the length of the prompt is dominated by content other than the list of features, so the number input tokens approximately scales with $\sqrt{p}$.
As the number of features grows further, the number of input tokens is $\mathrm{O}(p)$.

One score is output per feature, so the number of output tokens scales with $\mathrm{O}(p)$.

\subsection{Statistical Model}
Let $\mathcal{T}^{(T)}_m(n, p)$ be the time complexity of training base learner $b_m$ on a data matrix $\in \mathbb{R}^{n\times p}$ and $\mathcal{T}^{(I)}_m(p)$ be the time complexity of prediction for a $p$-dimensional datapoint.
Let $\mathcal{M}^{(T)}(n, p)$ be the time complexity of training the meta-learner.

\paragraph{Training.}
Training Statsformer consists of the following components: initial run of the base learners to generate out-of-fold predictions, training the meta-learner, and refitting the methods with non-zero coefficients.
The time complexity of the initial run is
\[\mathrm{O}\left(\sum_{m=1}^M k |\Theta_m|\mathcal{T}^{(T)}_m(n(k-1)/k, p)\right) ,\]
where $k$ is the number of CV folds.
In the worst case, we have to run every base learner-hyperparameter combination for each cross-validation fold.
As some base learners (e.g., Lasso) have internal cross-validation that automatically produces out-of-fold predictions, we may not need to explicitly run all base learners $k$ times for each $\theta \in \Theta_m$.

Training the meta-learner has complexity
\[\mathcal{M}^{(T)}\left(n, {\sum}_{m \in [M]} |\Theta_m|\right),\]
as the input to the meta-learner has one feature per model-hyperparameter combination.

Finally, the time complexity of retraining the base learners has complexity
\[\mathrm{O}\left(\sum_{m=1}^M |\Theta_m|\mathcal{T}^{(T)}_m(n, p)\right).\]
If the meta-learner assigns coefficients of $0$ to many base learners, then the retraining runtime will be lower by a factor proportional to the number of base learners with coefficient $0$.

Combining all steps, the training complexity of Statsformer is
\[\mathrm{O}\left(\sum_{m=1}^M k |\Theta_m|\mathcal{T}^{(T)}_m(n/k, p) + \mathcal{M}^{(T)}\left(n, {\sum}_{m \in [M]} |\Theta_m|\right)\right).\]

\paragraph{Inference.}
Performing inference on Statsformer involves performing inference on each base model-hyperparameter combination with a nonzero weight, resulting in a time complexity of
\[\mathrm{O}\left(\sum_{m=1}^M |\Theta_m|\mathcal{T}^{(I)}_m(p)\right).\]

\subsection{Controlling Computational Overhead}

Both the training and inference runtime of Statsformer is proportional to the number of models in the ensemble, or $\sum_m |\Theta_m|$, or $M$ times the average number of hyperparameters in $\Theta_m$.

In the case that $M$ and the average $\Theta_m$ are both relatively, we can reduce the computational cost via model preselection.
I.e., we can run each base model and use cross validation to choose the top $\hat{M} \ll M$ to add to the out-of-fold stacking model.

Perhaps more effective in reducing computational cost is simply reducing the size of $\Theta_m$.
In terms of model hyperparameters, the regularization strength for Lasso, e.g., is optimized separately from the out-of-fold stacking procedure (i.e., a regularization strength is fixed for each value of $\alpha$).
In addition, for models such as XGBoost, the hyperparameters can be optimized once for the base learner (with $\alpha=0$) and then fixed for the remainder of the configurations and out-of-fold stacking procedure.
In terms of $\alpha$ and $\beta$ sweeps, in practice we find that including $\approx 3$ values of $\alpha$ (and $\approx 2$-$3$ values of $\beta$, if sample weights are used) achieves sufficiently good accuracy.

To reduce the score querying cost, we can perform feature preselection: use a relatively good feature selector (e.g., mutual information) to select $\hat{p} < p$ features prior to running Statsformer.
The remaining features can be removed from the model, or assigned a default score.

\subsection{Empirical Compute and Cost}\label{appdx:empirical_compute}
Querying the LLM for feature scores is a one-time cost that is approximately linear in the number of features.
For the Lung Cancer dataset, which has 1,000 features and about 1,000 instances, the overall querying cost with the OpenAI \texttt{o3} model and five trials was \$2.80, with 50 cents for input tokens and \$2.30 for output tokens.
The overall querying time with 5 threads was 15 minutes, and could be further reduced with more parallelism.

For the smaller-dimensional German Credit dataset, which has 20 features and 1,000 instances, the overall querying cost was about 4 cents, with 1.5 cents for input tokens and 2.5 cents for output tokens.
The overall querying time was 40 seconds, without parallelism used.

All experiments were run on a shared server equipped with AMD EPYC 7713 64-core processors.
Although the CPU has 64 physical cores (128 threads), experiments were limited to 8 threads due to concurrent usage by other users.
See Table~\ref{tab:computation} for a comparison of our computational speed compared to baseline methods for three different problems.
By nature, Statsformer incurs an overhead over efficient statistical baselines like XGBoost and Lasso, due to the fact that we ensemble multiple methods.
However, we are consistently faster than AutoGluon, with a larger margin in higher dimensions.
The AutoML-Agent is much slower due to the interaction of LLM agents and the need for frequent retries upon agent failure.

\begin{table}[ht]
\centering
\scriptsize
\setlength{\tabcolsep}{4pt}

\begin{tabular}{@{}lcc|cccccccc@{}}
\toprule
& & &
\multicolumn{8}{c}{\textbf{Training Time}} \\
\cmidrule(l){4-11}
\textbf{Dataset} &
$n$ &
$p$ &
\textbf{XGBoost} &
\textbf{Lasso} &
\textbf{LightGBM} &
\textbf{LLM-} &
\textbf{Rand-} &
\textbf{Auto-} &
\textbf{AutoML-} &
\textbf{Stats-} \\
&
&
&
(Untuned) &
&
(Untuned) &
\textbf{Lasso} &
\textbf{Forests} &
\textbf{Gluon} &
\textbf{Agent} &
\textbf{former} \\
\midrule
\textbf{Bank} &
1000 & 16 &
2s &
12s & 1m 13s &
36s & 10s & 9m 42s & $\approx$5h & 8m49s \\

\textbf{ETP} &
189 & 1000 &
3s & 7s & 17s &
21s & 3s & 19m 2s & $\approx$5h & 2m 10s \\

\textbf{Lung} &
1017 & 1000 &
31s & 17s & 3m 47s &
45s & 20s & 1h 46m & $\approx$5h & 17m 50s \\
\bottomrule
\end{tabular}

% \end{table}
\caption{\scriptsize Runtime of Statsformer (after score collection) and baselines for 80 different training and test splits (with training ratios ranging from 10\% to 80\%, and 10 different random seeds per training ratio), for three datasets of varying size.
For ETP, there were only 50 splits; due to the low-sample and imbalanced nature of the datasets, we only tested training ratios from 30\% to 70\%.
Note that the AutoML-Agent times are estimates due to the script occasionally hanging, requiring it to be killed and resumed.
As the time is dominated by agent interactions and not fitting of statistical models, this estimate is constant across datasets.} \label{tab:computation}
\end{table}

\section{Base Learner Curation}\label{appdx:base_learner}
We curate a collection of strong statistical base learners that naturally admit at least one of the three adapter mechanisms introduced in Section~\ref{sec:framework}: penalty-based, feature-importance-based, or instance-weighting-based prior injection. For each adapter, the corresponding prior-integrated objective is defined in Section~\ref{sec:framework}. Table~\ref{tab:base_learner} summarizes the base learners considered in this work together with the adapter(s) they support.

Importantly, the admissible adapters for a given base learner are determined by the structural properties of the learner itself. In particular, learners that support explicit regularization terms or feature-level coefficients naturally admit penalty-based or feature-importance-based adapters. For learners that do not allow such forms of modification, instance weighting remains universally applicable.

When a base learner admits multiple adapter mechanisms, we consider two complementary strategies. One may either include all corresponding prior-integrated variants as distinct candidates within the ensemble, or conduct an isolated base-learner study and select the best-performing adapter for that learner based on a held-out experimentation dataset.
In this section, we include plots from such experiments showing the improvement that scores can induce on each base learner.

\begin{table}[ht]
\centering
\caption{\small Base Learners Summary.} \label{tab:base_learner}
\scriptsize
\begin{tabular}{@{}llllllll@{}}
\toprule
\textbf{Learners} &
\textbf{Lasso} &
\textbf{XGBoost} &
\textbf{Random Forests} &
\textbf{Weighted Kernel SVM} &
% \textbf{BART} &
\\
\midrule
\textbf{Adapters} & 
Penalty-weighting&
Feature-weighting & 
Sample-weighting, Feature-weighting
& Feature-weighting &
% Feature-weighting
\\
\midrule
\textbf{References} & 
\citep{tibshirani1996lasso}&
\citep{chen2016xgboost} & 
\citep{breiman2001randomforest}
& \citep{Zhang2011WeightedSVM} &
% \citep{chipman2012bart}
\\
\bottomrule
\end{tabular}
\end{table}

In the following, we introduce each learner with its noted adapter in Table \ref{tab:base_learner}.

\paragraph{Penalty-weighted Lasso.}
We begin with the Lasso \citep{tibshirani1996lasso}, which serves as a canonical example of a linear model admitting penalty-based prior injection. Let $(x_i, y_i)_{i=1}^n$ denote the training data with $x_i \in \mathbb{R}^d$. The standard Lasso estimator solves
\[
\hat{\beta}
\;=\;
\arg\min_{\beta \in \mathbb{R}^d}
\frac{1}{n}\sum_{i=1}^n \ell(y_i, x_i^\top \beta)
\;+\;
\lambda \|\beta\|_1 ,
\]
where $\ell(\cdot,\cdot)$ denotes a convex loss and $\lambda>0$ is a regularization parameter.

To incorporate feature-level prior information, we adopt a penalty-weighted variant in which the $\ell_1$ penalty is reweighted according to a nonnegative importance vector $p \in \mathbb{R}^d_{\ge 0}$ derived from the prior. Specifically, we consider the objective
\[
\hat{\beta}_p
\;=\;
\arg\min_{\beta \in \mathbb{R}^d}
\frac{1}{n}\sum_{i=1}^n \ell(y_i, x_i^\top \beta)
\;+\;
\lambda \sum_{j=1}^d w_j |\beta_j|,
\]
where $w_j = g(p_j)$ for a monotone decreasing function $g(\cdot)$, so that features deemed more important by the prior are penalized less heavily. This formulation coincides with the adaptive Lasso when $w_j$ are data-driven, and reduces to the standard Lasso when all $w_j$ are equal.

\begin{figure}[htb]
    \centering
    \includegraphics[width=0.92\linewidth]{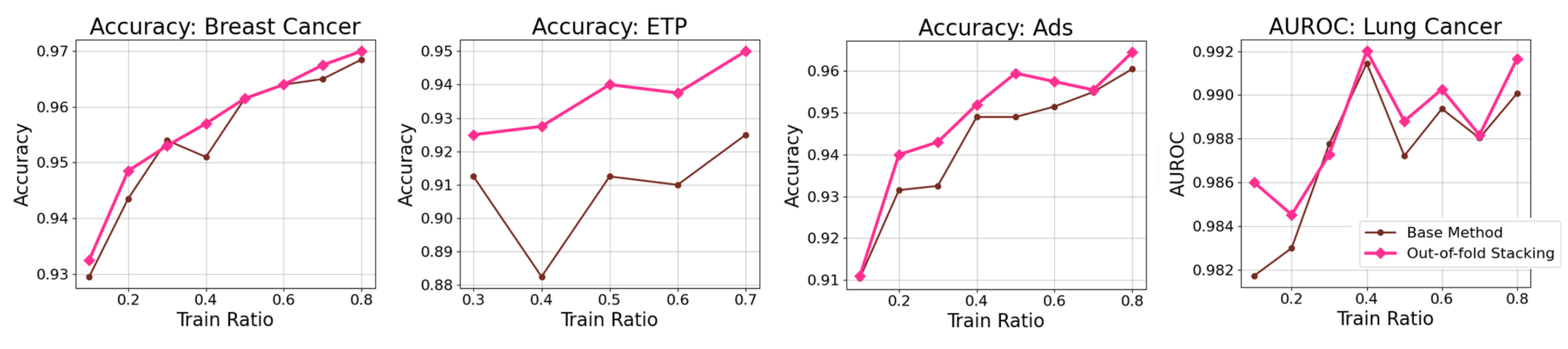}
    \caption{\scriptsize Single-learner study on selected datasets for prior injection into weighted Lasso (using the \texttt{adelie} Python package).}
    \label{fig:single-learner-lasso}
\end{figure}

Penalty-weighted Lasso naturally fits within our framework as a feature-level adapter, allowing the prior to modulate sparsity structure while preserving convexity and interpretability.
As a result, the model more reliably recovers prior-aligned sparse solutions, yielding improved performance (Figure~\ref{fig:single-learner-lasso}).

\paragraph{XGBoost.}
We consider gradient-boosted decision trees as implemented in XGBoost \citep{chen2016xgboost}, a strong nonlinear baseline widely used in tabular learning. While tree-based models do not admit explicit coefficient-level regularization in the same sense as linear models, they naturally allow feature-level prior injection through feature subsampling at decision tree nodes.

In particular, when subsampling is enabled, XGBoost samples a fixed proportion of features at each split.
Rather than sampling uniformly, we draw features according to a distribution proportional to the feature score $s_i(\alpha)$.

This modification biases split selection toward features assigned higher prior importance, while leaving the underlying boosting procedure unchanged. Feature reweighting inhection therefore serves as a simple and principled feature-level adapter compatible with tree-based learners.
Although boosting already incorporates iterative feedback, the prior-guided feature sampling further improves performance (Figure~\ref{fig:single-learner-xgboost}).

\begin{figure}[htb]
    \centering
    \includegraphics[width=0.92\linewidth]{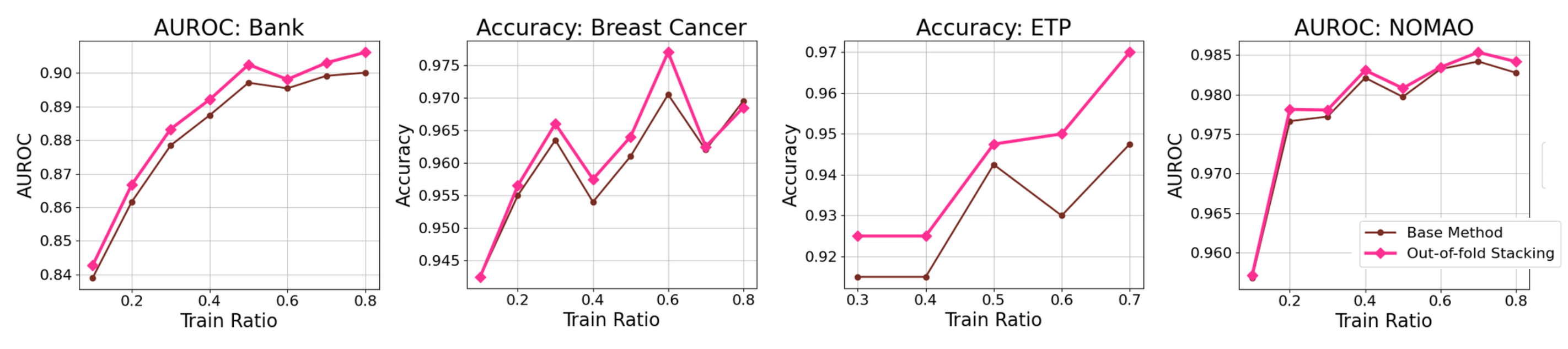}
    \caption{\scriptsize Single-learner study on selected datasets for prior injection into XGBoost (using the \texttt{xgboost} Python package). Feature weights control feature subsampling probabilities at decision tree nodes.}
    \label{fig:single-learner-xgboost}
\end{figure}

\paragraph{Random Forests.}
Random Forests \citep{breiman2001randomforest} are ensemble tree methods that do not naturally admit penalty-based regularization or continuous feature-level modification through their optimization objective.
However, like XGboost, they allow prior injection through the stochastic components of training, including instance weighting and feature subsampling.

Given nonnegative instance weights $\alpha_i$ derived from the prior, we train weighted Random Forests by modifying the bootstrap sampling distribution and split criteria to account for $\alpha_i$.
In practice, this is implemented by passing sample weights to the training procedure, so that observations deemed more informative by the prior exert greater influence on tree construction.

In addition, we incorporate feature-level priors by biasing the feature subsampling step at each split.
Specifically, features are oversampled with probabilities proportional to their prior importance scores, making semantically relevant features more likely to be considered during split selection while preserving the randomized structure of the forest.

Together, instance weighting and feature oversampling provide complementary mechanisms for prior injection in Random Forests when direct feature- or parameter-level regularization is unavailable.
See Appendix~\ref{appdx:prior-injection} for the formal construction of instance weights from feature priors, and Appendix~\ref{subsec:kl-instance} for their theoretical justification.
Empirically, we observe improved performance relative to standard Random Forests (Figure~\ref{fig:single-learner-rf}).

\begin{figure}[htb]
    \centering
    \includegraphics[width=0.92\linewidth]{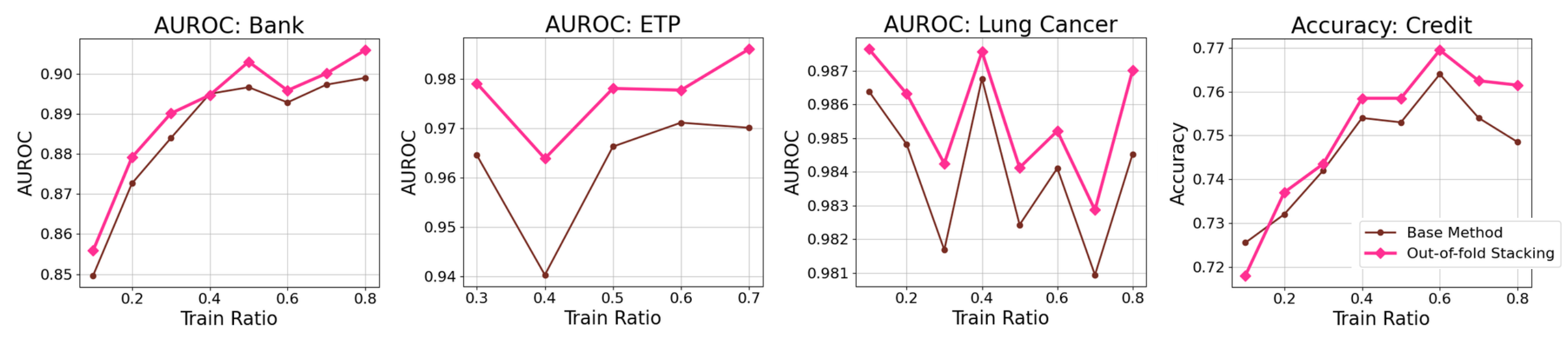}
    \caption{\scriptsize Single-learner study on selected datasets for prior injection into Random Forests, using the \texttt{scikit-learn} Python implementation.
    Instance weights are set according to Appendix~\ref{appdx:prior-injection}.
    Feature weights are introduced via oversampling features with replacement such that the feature space is doubled.}
    \label{fig:single-learner-rf}
\end{figure}

\paragraph{Weighted Kernel SVM.}
Support Vector Machines with nonlinear kernels provide another example of a learner admitting feature-level prior injection. Let $k(\cdot,\cdot)$ denote a positive-definite kernel. A weighted kernel SVM incorporates feature importance by modifying the kernel-induced metric.

Given a feature-importance vector $p \in \mathbb{R}^d_{\ge 0}$ and a monotone map $g(\cdot)$, we define a diagonal scaling matrix $D = \mathrm{diag}(g(p))$ and use the kernel
\[
k_p(x,z) = k(Dx, Dz).
\]
For example, under the Gaussian kernel, this corresponds to an anisotropic distance in feature space. In practice, this weighted kernel SVM can be implemented by scaling input features with $D$ and then applying a standard kernel SVM solver.

This construction allows the prior to shape the geometry of the reproducing kernel Hilbert space, biasing the classifier toward directions deemed important by the prior while retaining the convex optimization structure of SVM training.
As a result, the classifier prioritizes prior-aligned directions, leading to improved performance in Figure~\ref{fig:single-learner-kernel}.

\begin{figure}[htb]
    \centering
    \includegraphics[width=0.92\linewidth]{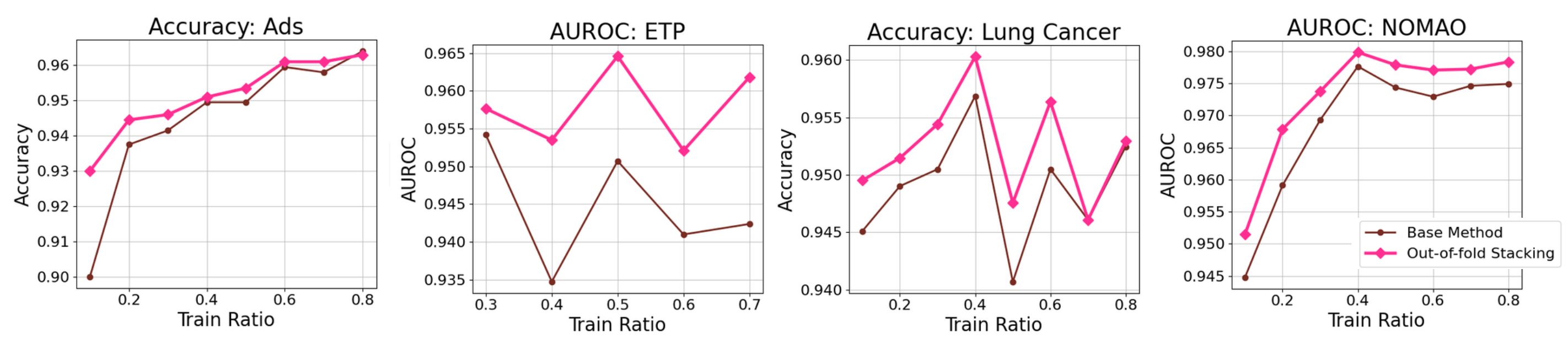}
    \caption{\scriptsize Single-learner study on selected datasets for prior injection into Kernel SVMs, using the \texttt{scikit-learn} implementation of Kernel SVMs under the radial basis function (RBF) kernel.
    Input features are scaled by $s_i(\alpha)$ before being passed into the SVM solver.}
    \label{fig:single-learner-kernel}
\end{figure}

% \paragraph{BART.}
% Bayesian Additive Regression Trees (BART) \citep{chipman2012bart} provide a Bayesian nonparametric ensemble of regression trees. BART naturally supports feature-level prior injection through its splitting probability prior.

% Specifically, we modify the prior over splitting variables so that features with higher prior importance are more likely to be selected when proposing tree splits. This corresponds to reweighting the feature selection distribution in the tree-growing process, while leaving the likelihood and posterior inference procedures unchanged.

% Feature-weighted BART thus offers a Bayesian instantiation of feature-level adaptation, allowing prior information to influence model structure without constraining predictive flexibility.

\section{Deferred Experimental Details}\label{appdx:experimental-details}
\subsection{Data Splitting and Metrics}
To study performance as a function of training set size, we subsample the training data, sweeping the training ratio over ${0.1, 0.2, \ldots, 0.8}$ while fixing the test ratio at $0.2$. For each ratio, we generate 10 random splits using different random seeds (stratified for classification).
For classification datasets, we adjust the training-ratio sweep and test set size to ensure sufficient minority-class representation. Specifically, we require that each split contain at least four samples from each class and that the maximum training ratio be at most $1$ minus the test ratio.

We report accuracy and/or AUROC for classification tasks and MSE for regression, averaged across splits, as well as 95\% confidence intervals.

\subsection{Datasets}\label{appdx:datasets}
We evaluate our methods on a diverse collection of benchmark datasets spanning gene expression, marketing, finance, web, etc., ranging from $\approx\!\!10$ to $1000+$ features.

\paragraph{Breast Cancer.}
The GEMLeR repository \cite{stiglic2010gemler} provides a collection of gene expression datasets designed for benchmarking machine learning algorithms on microarray classification problems.
Specifically, we use the ``OVA Breast Cancer vs. Other'' dataset, which has 1545 samples and 10,936 features.
Each sample originates from the expO (Expression Project for Oncology) repository, which collects clinically annotated tumor tissue samples processed under standardized conditions using the same microarray platform, with associated clinical outcomes publicly available.

We map the original probe identifiers to gene symbols using the GPL570 annotation table from GEO, de-duplicate genes by averaging expression values across probes mapping to the same gene, and select the 1,000 genes with highest variance across samples to form the final feature set.

\paragraph{Bank Marketing.}
The UCI Bank Marketing dataset \cite{bank_marketing_222} records the outcomes of direct marketing campaigns (telephone calls) conducted by a Portuguese banking institution, where multiple contacts were often required to assess customer interest.
The task is binary classification: predicting whether a client subscribes to a term deposit based on demographic, financial, and campaign-related attributes.
The dataset comprises 45,211 instances with 16 features, including job type, marital status, education, and account balance.

Because semantic priors are more relevant in data-limited settings, we evaluate this dataset in a lower-sample regime.
Specifically, we use stratified sampling to subsample the dataset to 1,000 instances, preserving the class distribution.
This subsampling procedure is repeated with a different random seed for each train-test split evaluated.

\paragraph{ETP.}
Early T-cell precursor T-lineage acute lymphoblastic leukemia (ETP T-ALL) is a high-risk subtype of T-lineage acute lymphoblastic leukemia (T-ALL) that is challenging to diagnose due to its distinct immunophenotypic profile.
Using a dataset of 1000 gene expression levels from 189 T-ALL samples \cite{liu2017genomic}, we classify tumor samples into ETP T-ALL and non-ETP T-ALL.

\paragraph{German Credit.}
The UCI Statlog (German Credit) dataset \cite{uci_german_credit} is a benchmark dataset for classification of good or bad credit risk.
It contains 1,000 instances describing individuals applying for credit, with 20 features capturing demographic, financial, and credit-related information (e.g., age, credit history, employment status, and loan purpose). 

\paragraph{Internet Advertisements.}
The UCI Internet Ads dataset \cite{internet_advertisements_51} is a publicly available benchmark for binary classification, containing 3,279 instances with 1,555 features representing attributes of web pages (e.g., textual, visual, and structural characteristics). 
We perform mean imputation for missing values and use stratified sampling to subsample the dataset to 1,000 instances, preserving the class distribution as described for the Bank Marketing dataset.
The final task is to predict whether a web page contains an advertisement.

\paragraph{Lung Cancer.}
Data for the lung cancer dataset are obtained The Cancer Genome Atlas Program (TCGA) \cite{weinstein2013cancer}, a publicly available database of human tumors
Our sample consists of bulk RNA sequencing data from 516 samples from patients with lung adenocarcinoma (LUAD) and 501 samples from patients with lung squamous cell carcinoma (LUSC).
Only primary tumor samples are used.
The data are normalized, variance-stabilized, and transformed using DESeq2 \cite{love2014moderated}.
If more than one sequencing data is available for a patient, the average of the counts are taken.
Genes with fewer than 10 counts are filtered out.
We use the top 1000 most variable protein-coding genes in the downstream analyses.

\paragraph{Nomao.} 
The UCI Nomao dataset \cite{uci_nomao} is a benchmark for location de-duplication, where the task is to determine whether a pair of locations corresponds to the same place. 
It contains 34,465 instances, each representing a pair of locations, described by 120 features comparing attributes such as name, phone number, and geographic information. 
We use stratified sampling to subsample the dataset to 1,000 instances, preserving the class distribution as described for the Bank Marketing dataset.

\paragraph{Superconductivity.}
The UCI Superconductivity dataset \cite{uci_superconductivity} contains 21,263 instances of superconductors with 81 real-valued features extracted from chemical and structural properties. 
The prediction task is regression: estimating the critical temperature of each superconductor. 
We use subsampling to select 1,000 instances.

\subsection{Implementation}\label{appdx:implementation}
Statsformer is implemented in Python, and all code will be open-sourced on Github.
\paragraph{Base learners.}
The Random Forest and SVM base learners are implemented using \texttt{scikit-learn} \cite{scikit-learn}, XGBoost via its Python package, and Lasso via the \texttt{adelie} library for generalized linear models \cite{yang2024fastscalablepathwisesolvergroup}.
For Random Forests and XGBoost, we do not perform per-dataset hyperparameter tuning in order to limit overhead.
Since we outperform AutoGluon (Figure~\ref{fig:baseline-comparison}), which does perform hyperparameter tuning, this choice appears sufficient for the problems considered.
Details on each learner, including hyperparameters, are as follows:
\begin{itemize}
    \item \textit{Lasso}: \texttt{adelie} natively supports weighted Lasso, and we pass feature weights via the \texttt{penalty} argument of \texttt{grpnet}.
    The regularization strength $\lambda$ is selected prior to out-of-fold (OOF) stacking using 5-fold cross-validation and is held fixed during downstream ensembling.
    This procedure is adapted from \texttt{adelie}'s \texttt{cv\_grpnet}, with modifications to support stratified cross-validation and storage of OOF predictions.
    We sweep $\lambda$ from $\lambda_{\max}$ (the smallest value yielding an intercept-only model) to $10^{-2}\lambda_{\max}$ using 100 logarithmically spaced values.
    The value of $\lambda$ minimizing validation loss is selected and fixed for OOF stacking.

    \item \textit{Random Forests}: we set the number of trees to be 50, and otherwise use default parameters.
    Instance weights are passed directly into the \texttt{fit} or \texttt{fit} function as the \texttt{sample\_weight} argument.
    Features are oversampled with replacement by a factor of $1$ (resulting in $2p$ total columns after oversampling), with probabilities proportional to $s_i(\alpha)$. 

    \item \textit{XGBoost}: Feature weights are directly in the instantiation of a \texttt{DMatrix}, and are used for subsampling features at each decision tree node.
    To take advantage of feature weighting, we set \texttt{colsample\_bynode} to $\max(0.2, \min(1, 30/p))$ and \texttt{tree\_method} to ``hist''.
    We set the number of boosting rounds to 50 and otherwise use default parameters.

    \item \textit{Kernel SVM}: we use the Radial Basis Function (RBF) kernel and default hyperparameters.
    Feature importance is applied by first standardizing the data such that each feature has zero mean and unit variance, and then scaling each feature by score $s_i(\alpha)$ before calling \texttt{SVC.fit} or \texttt{SVR.fit}.
\end{itemize}

\paragraph{Meta-learner.}
Whereas our framework described in Section~\ref{sec:framework} uses a simplex-constrained meta-learner, in practice we find that non-negativity constraints are easier to optimize and achieve similar performance.

For regression, we use \texttt{scikit-learn}'s \texttt{ElasticNetCV} implementation with a non-negativity constraint. 
We employ 5-fold cross-validation to select the regularization strength, as automatically supported by \texttt{scikit-learn}, and otherwise follow its default settings.

For classification, \texttt{scikit-learn}'s logistic regression implementation does not natively support non-negativity constraints. 
To address this, we modify \texttt{LogisticRegressionCV} so that the logistic regression optimization is performed with non-negativity constraints using \texttt{scipy.optimize} and the L-BFGS-B solver. 
The cross-validation procedure for selecting regularization strength remains unchanged.

For multi-class problems, we adopt a one-versus-rest strategy, fitting a separate binary classifier for each class versus the remaining classes.

\paragraph{LLM Querying.}
All LLMs are accessed via web APIs: we use the OpenAI API for OpenAI models and OpenRouter for all others.
Each LLM is prompted to assign floating-point importance scores in the range $[0.1, 1]$ to features, reflecting their relevance to the classification or regression task.

When supported, we set the generation temperature to $0$ (greedy decoding).
Otherwise, to reduce stochasticity, we query each model for five independent trials and average the resulting feature scores.

We request structured outputs in JSON format of the form
\texttt{\{``scores'': \{``FEAT\_NAME\_01'': float, ``FEAT\_NAME\_02'': float, \dots\}\}},
where \texttt{FEAT\_NAME\_NN} corresponds to dataset-specific feature names.
Because not all models natively support structured outputs, we perform manual output validation as follows:
\begin{enumerate}
    \item Some models emit additional text alongside the JSON, so we extract all valid JSON objects from the model output.
    \item Each extracted JSON is validated using \texttt{Pydantic} to ensure it conforms to the schema
    \texttt{\{``scores'': dict[str, float]\}}.
    If validation fails, we proceed to the next extracted JSON.
    \item For validated outputs, all feature names are lowercased, and we explicitly check that the JSON contains the complete set of expected feature keys.
\end{enumerate}

To mitigate context length limitations and performance degradation for long prompts, features are queried in batches of $40$ by default.
If a querying or validation error occurs, we automatically retry the request up to five times per batch, prefixing the prompt with the retry index.
Batches within each trial are parallelized using a thread pool with five worker threads.

See Figures~\ref{fig:system_prompt_example} and \ref{fig:prompt_example} (in Appendix~\ref{sec:prompts}) for our system prompt and user prompt template, respectively.
For biological datasets, we use the OMIM RAG system developed in \cite{zhang2025llm-lasso}.

\subsection{Baseline Details}\label{appdx:baseline}
The statistical baselines used as Statsformer base learners (XGBoost, Random Forests, Lasso, Kernel SVMs) employ the same implementations and hyperparameter settings as in Statsformer (see Section~\ref{appdx:implementation}).
We also include LightGBM \cite{ke2017lightgbm}, using the \texttt{scikit-learn} implementation with default parameters.

For the LLM-Lasso baseline, we use our \texttt{adelie}-based Lasso, selecting the ``power of 1/importance'' transformation on the LLM scores via AUROC cross-validation \cite{zhang2025llm-lasso}.

In addition to classical statistical methods, we utilize two AutoML baselines, AutoGluon~\cite{erickson2020autogluontabularrobustaccurateautoml} and AutoLM-Agent~\cite{trirat2025automlagentmultiagentllmframework}. Details on these baselines are as follows:

\paragraph{AutoGluon.}
We use AutoGluon's \texttt{TabularPredictor} \cite{erickson2020autogluontabularrobustaccurateautoml} with a time limit of 120 seconds per dataset.
% For all datasets we evaluated, AutoGluon completed execution without timing out. % @naomi: I don't think this was actually true---I saw it taking exactly 120 seconds on the high-scale datasets. I just think it doesn't print out when it times out
AutoGluon automatically performs hyperparameter tuning and model selection across multiple base learners (including neural networks, gradient boosting, and linear models) and ensembles them.
The problem type (binary classification, multiclass classification, or regression) is automatically inferred from the target variable.
AutoGluon's internal cross-validation and ensemble selection procedures are used without modification.

\paragraph{AutoML-Agent.} AutoML-Agent \cite{trirat2025automlagentmultiagentllmframework} is a multi-agent LLM framework that generates full-pipeline AutoML code.
It uses a \texttt{Data Agent} for preprocessing, a \texttt{Model Agent} for model selection and hyperparameter tuning, and an \texttt{Operation Agent} for code generation and execution.
LLM-based planning designs pipelines, which are then executed to produce trained models.
We run AutoML-Agent via the OpenAI interface with default settings, disabling web search and Kaggle API functions since the datasets are public.
The user prompt is modified to avoid manual train-test splitting (full prompt in Figure~\ref{fig:user_prompt_example_automl}, Appendix~\ref{sec:prompts}).
Performance is measured by evaluating the generated models on held-out test sets.

\subsection{Language Models Used}
We use OpenAI \texttt{o3} reasoning model for our main experiments. To demonstrate the robustness of our method across various LLM choices, we here present additional experimental results using different LLMs for score collection. For a comprehensive comparison, we experiment with both popular open-source frontier LLMs involving Qwen3 \citep{qwen3technicalreport}, DeekSeek-R1 \citep{deepseek_r1}, and proprietary LLMs involving Anthropic's Claude Opus 4.5 \citep{anthropic2025claudeopus}, OpenAI's \texttt{o1} \citep{openai_gpt_o1}, and Google's Gemini 2.5 \citep{google_gemini_2.5}. 

% \section{Additional Simulation Details}
\section{Oracle Prior Simulation and Mechanistic Intuition}\label{appdx:sim}

In this section, we supplement results in Section~\ref{sec:experiment}
and provide more details on the oracle priors.
While Section~\ref{sec:experiment} demonstrates Statsformer's efficacy with current frontier LLMs, the framework’s performance is naturally limited by the quality of the semantic signal. To start, we decouple the architectural mechanism from prior quality by simulating an ``oracle prior.''
The goal of this simulation is to validate the Statsformer framework’s behavior with perfectly informative priors, providing a glimpse of its potential performance ceiling as LLM reasoning improves. 

We first provide mechanistic intuition for how highly informative priors can improve finite-sample behavior under monotone prior injection in Section \ref{appdx:oracle-gap}, and then validate these effects empirically through a controlled simulation with perfectly aligned priors in Section \ref{appdx:oracle-setup}-\ref{appdx:oracle-implication}.

\subsection{Mechanism Intuition of Highly Informative priors}\label{appdx:oracle-gap}
This section provides intuition for the types of performance improvements that
may be achievable when the external semantic prior is highly informative.
The goal is to clarify how monotone prior injection can improve the conditioning of
learning, e.g., by proposing a better regularization geometry, an improved input metric,
or a more favorable emphasis over samples, and to discuss which effects are plausible
within each adapter class.
A formal characterization of optimality is beyond the scope of this work; our purpose
is to sketch mechanisms and limitations in an accurate, model-agnostic way.

\paragraph{Setup.}
We call a feature-level prior $V=(v_1,\dots,v_p)$ \emph{highly informative} for a base
learner family if it is aligned with structural aspects of the target prediction
problem that materially affect finite-sample estimation for that family.
Depending on the learner, this alignment may correspond to (i) identifying relevant
coordinates or anisotropy for linear or GLM-type models, (ii) inducing an appropriate
input metric or margin geometry for distance-, kernel-, or margin-based methods, or
(iii) emphasizing samples that are especially informative under heteroskedasticity or
mild distributional shift.

In all cases, prior injection introduces no new labeled information; instead, it modifies
the empirical objective through a monotone transformation (e.g., rescaling or
reweighting), which can improve conditioning and finite-sample behavior without
expanding the underlying hypothesis class.

\subsubsection{Penalty-Based Adapters: Geometry Selection Within a Family}
For linear predictors and generalized linear models, penalty-based injection modifies
the regularization geometry through feature-specific weights.
When the weights align with the problem structure (e.g., approximate sparsity or
coordinate-wise signal strengths), weighted $\ell_1$-type penalties can improve
adaptivity and constants in oracle inequalities and, in some settings, support recovery
\citep{zou2006adaptive,bickel2009simultaneous}.
Importantly, such improvements remain constrained by the minimax limits of the
underlying statistical problem: weighting can help approach the performance of
well-tuned regularization within the same model family, but it cannot beat
information-theoretic limits for the target class.

From this perspective, Statsformer can be viewed as proposing a collection of
candidate regularization geometries via the prior and selecting among them via
validated aggregation, yielding gains when the prior is informative while retaining
prior-free baselines when it is not.

\subsubsection{Feature-Scaling Adapters: Metric Refinement}
Feature-scaling adapters inject priors by rescaling input coordinates before training.
For learners whose behavior depends on the input metric (e.g., kernel methods,
nearest-neighbor models, and margin-based classifiers), such rescaling can act as a
simple form of metric refinement.
When geometry is a dominant source of error, a well-aligned scaling can improve
conditioning and favorably affect complexity- or margin-based generalization
quantities \citep{bartlett2002rademacher,shawe2004kernel}.
These effects are typically most relevant in finite-sample regimes and should be
interpreted as improved conditioning rather than an expansion of model expressivity.

\subsubsection{Instance-Weight Adapters: Reweighting the Training Measure}
Instance-weight adapters modify the empirical risk by reweighting samples.
When weights emphasize observations that are more informative for the target risk
(e.g., under heteroskedastic noise or mild covariate shift), reweighting can improve
robustness and efficiency, affecting bias--variance tradeoffs and finite-sample
constants \citep{shimodaira2000improving,sugiyama2007covariate}.
For a fixed target distribution and hypothesis class, the main role of this weighting is to improve stability and estimation efficiency when uniform weighting is poorly matched to the data-generating process.

%such reweighting does not typically change minimax rates; its main role is to improve stability and estimation efficiency when uniform weighting is poorly matched to the data-generating process.

\paragraph{Implications for Statsformer.}
Highly informative priors can reduce effective statistical difficulty by improving
regularization geometry, metric conditioning, or sample emphasis. These benefits are
adapter-dependent and are limited to what can be achieved within the candidate family
of monotone transformations.
Statsformer is designed to recover such improvements when they are supported by data. In our oracle-prior simulations (Section Section \ref{appdx:oracle-setup}-\ref{appdx:oracle-implication} below), we observe large gains in regimes where the prior is highly accurate, highlighting meaningful headroom and motivating future work on improving the reliability of semantic priors.

\subsection{Setup and Experimental Details}\label{appdx:oracle-setup}
We define an \emph{oracle-informative} prior as a perfectly informative instance of a
\emph{highly informative} prior, aligned with the true data-generating structure.
In this simulation, oracle informativeness corresponds to exact identification of the
informative feature subset and their relative signal strengths for a linear
classification problem.

To this extent, we generate a series of random binary classification problems as follows.
First, fix training size $n$, dimension $p$, number of informative features $\hat{p}$, and target balance coefficient $c$ (e.g., $c=0.5$ means the dataset is balanced, and $c=0.1$ means 10\% of the samples are positive).

\begin{itemize}
    \item \textit{Data matrix}: Generate feature means $\mu_i \stackrel{\text{i.i.d.}}{\sim} \mathrm{Unif}([-10, 10])$ and standard deviations $\sigma_i \stackrel{\text{i.i.d.}}{\sim} \mathrm{Unif}([0.5, 5])$ for $i \in [p]$.
    Let $\mu$ and $\sigma^2$ be the corresponding $p$-dimensional vectors.
    Generate data matrix $X \in \mathbb{R}^{n\times p}$, with rows i.i.d. sampled from $x_j \sim \mathcal{N}(\mu, \mathrm{diag}(\sigma^2))$.

    \item \textit{Oracle prior} ($V^\ast$): Uniformly sample informative indices $I \subset [p]$ over all $\hat{p}$-dimensional sets of indices.
    For all $i \in I$, set $V_i^\ast \sim \mathrm{Unif}([-5, -0.5] \cup [0.5, 5]) + \mathcal{N}(0, 0.1)$.
    For all other indices, sample i.i.d. $\mathcal{N}(0, 0.1)$.

    \item \textit{Output classes}:  Sample signal $\hat{y} = \tanh(X) V$.
    Let $\hat{\sigma}(\hat{y})$ be the empirical standard deviation of $\hat{y}$ and $Q_c$ be its empirical $c$\textsuperscript{th} quantile.
    The output classes are then $y_j = \mathbf{1}\{\hat{y}_j +\mathcal{N}(0, 0.1 \hat{\sigma}(\hat{y})) > Q_c\}$.
\end{itemize}

We divide $\mathcal{D}(X, y)$ into train and test data via a 50-50 stratified split.
$V^\ast$ is passed into Statsformer as the feature prior.

We compare the performance of the Statsformer (Oracle) against the standard Stacking (No Prior) baseline on test set accuracy and AUROC, over four choices of dataset parameters:
\begin{itemize}
    \item $n=100, p=1000, \hat{p}=20, c=0.2$: this simulates a high-dimensional, underdetermined setting like the ETP dataset (where we see the highest performance gains over plain ensembling, as per Table~\ref{tab:statsformer_full_summary}).
    \item $n=300, p=500, \hat{p}=50, c=0.3$: slightly lower-dimensional, higher-data, less sparse, and more balanced.
    \item $n=50, p=500, \hat{p}=5, c=0.5$: very sparse and underdetermined.
    \item $n=500, p=30, \hat{p}=15, c=0.5$: a high-sample, low-feature regime where pure data-driven learning can suffice, decreasing the utility of prior information.
\end{itemize}
For each setting, we randomly sample $100$ datasets and compute the improvement of Statsformer over the no-prior ensemble (i.e., ``Statsformer (no prior)'' from Section~\ref{sec:experiment}), in terms of test AUROC and accuracy.

\subsection{Simulation Results}\label{appdx:oracle-results}
Figure~\ref{fig:oracle} presents histograms of the improvement defined above (Statsformer minus the no-prior stacking baseline) for each of the four simulation settings.
Table~\ref{tab:perfect_prior_featurewise} summarizes these distributions by reporting the mean improvement and corresponding 95\% confidence intervals.

In high-dimensional, underdetermined regimes (large $p$ relative to $n$), the gains are both consistent and substantial, with the improvement distribution exhibiting significant positive mass.
This behavior closely mirrors the regimes in which we observe the largest empirical gains, such as the ETP and Lung Cancer datasets in Table~\ref{tab:statsformer_full_summary} and Figure~\ref{fig:us_vs_stacking}.
In these settings, limited sample sizes make it difficult for purely data-driven methods to reliably identify the relevant structure, creating an opportunity for semantic priors to guide the base learners toward more informative regions of the hypothesis space.
The oracle-prior simulation confirms that when such guidance is accurate, Statsformer can consistently translate it into measurable performance gains.

By contrast, in the high-sample, low-dimensional setting ($n=500$, $p=30$), the average improvement is markedly smaller.
Here, the data alone is typically sufficient to estimate the underlying relationships, reducing the marginal utility of additional semantic information.
While Statsformer continues to outperform stacking on average, the gains are less uniform than in underdetermined regimes, illustrating the framework’s ability to appropriately attenuate the influence of priors when they provide limited additional signal.

\begin{table}[t]
\centering
\small
\begin{tabular}{lcc}
\toprule
\textbf{Setting} $(n, p, \hat{p}, c)$ & \textbf{AUROC improvement} & \textbf{Accuracy improvement} \\
\midrule
$(100, 1000, 20, 0.2)$ & $0.1473 \pm 0.0200$ & $0.0464 \pm 0.0079$ \\
$(300, 500, 50, 0.3)$  & $0.0629 \pm 0.0066$ & $0.0544 \pm 0.0061$ \\
$(50, 500, 5, 0.5)$   & $0.0910 \pm 0.0217$ & $0.0764 \pm 0.0203$ \\
$(500, 30, 15, 0.5)$  & $0.0014 \pm 0.0005$ & $0.0014 \pm 0.0018$ \\
\bottomrule
\end{tabular}
\caption{\scriptsize Performance improvements under oracle semantic priors.
We report mean improvement over the no-prior (stacking) baseline, with $\pm$95\% confidence intervals, for both AUROC and accuracy.
Note that these are raw metric differences and not the percentage metrics presented in Table~\ref{tab:statsformer_full_summary}.} \label{tab:perfect_prior_featurewise}
\end{table}

\begin{figure}[htbp]
    \centering
    \includegraphics[width=0.95\linewidth]{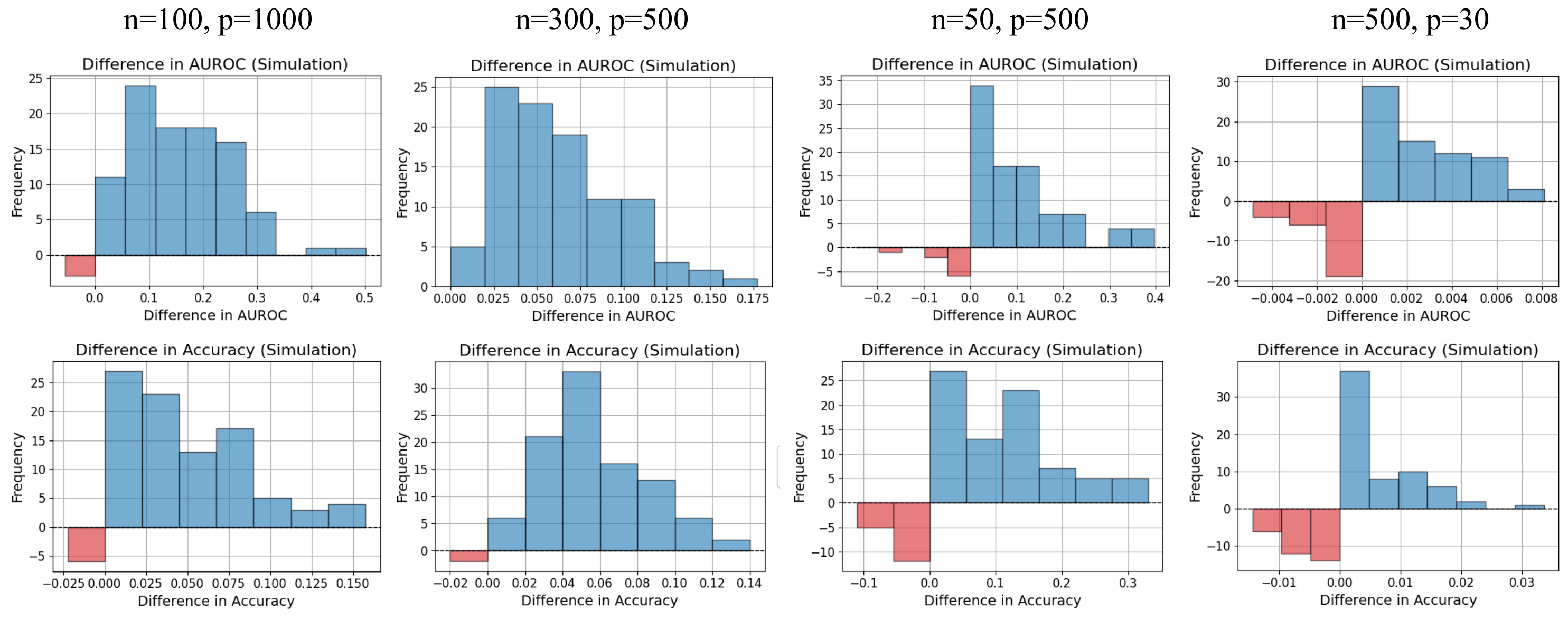}
    \caption{\scriptsize Results from the oracle prior simulation: histograms of Statsformer test $\{\text{AUROC}, \text{accuracy}\}$ improvements, computed as the metric difference (Statsformer minus no-prior stacking).
Each histogram is computed over 100 randomly generated datasets for each of the four regimes described above.
    }
    \label{fig:oracle}
\end{figure}

\subsection{Implications}\label{appdx:oracle-implication}
This simulation serves two key purposes.
First, it validates the underlying mechanism: the monotone adapters (Section~\ref{sec:framework}) are capable of exploiting high-quality semantic signals when they are available.
This is evidenced by the substantial gains over the no-prior stacking baseline, particularly in underdetermined regimes.
Second, it highlights the performance headroom of the framework: by contrasting the oracle results (Appendix Figure~\ref{fig:oracle}) with those obtained using current frontier LLMs (Figure~\ref{fig:baseline-comparison} in the main text), we illustrate the potential for further gains as the quality of semantic priors improves.

% As a side effect, this study provides somewhat of a diagnostic for what settings Statformer can be expected to yield larger improvements.

% While current LLMs yield gains in specific regimes (Section 5), the Oracle simulation proves that the \emph{Statsformer} architecture is ready to ingest the superior reasoning capabilities of next-generation models without modification.

% \section{Deferred Results}
\section{Deferred Experimental Results}
\subsection{Deferred Comparisons with Baselines}
Figure~\ref{fig:appdx_us_vs_baselines} shows Statsformer compared to baselines for metrics not included in Figure~\ref{fig:baseline-comparison} (we compute both accuracy and AUROC for classification but display only one per dataset in the main figure). 
The results are consistent with the main paper: Statsformer is consistently the top performer or among the top two methods.

\begin{figure}[htbp!]
    \centering
    \includegraphics[width=1.0\linewidth]{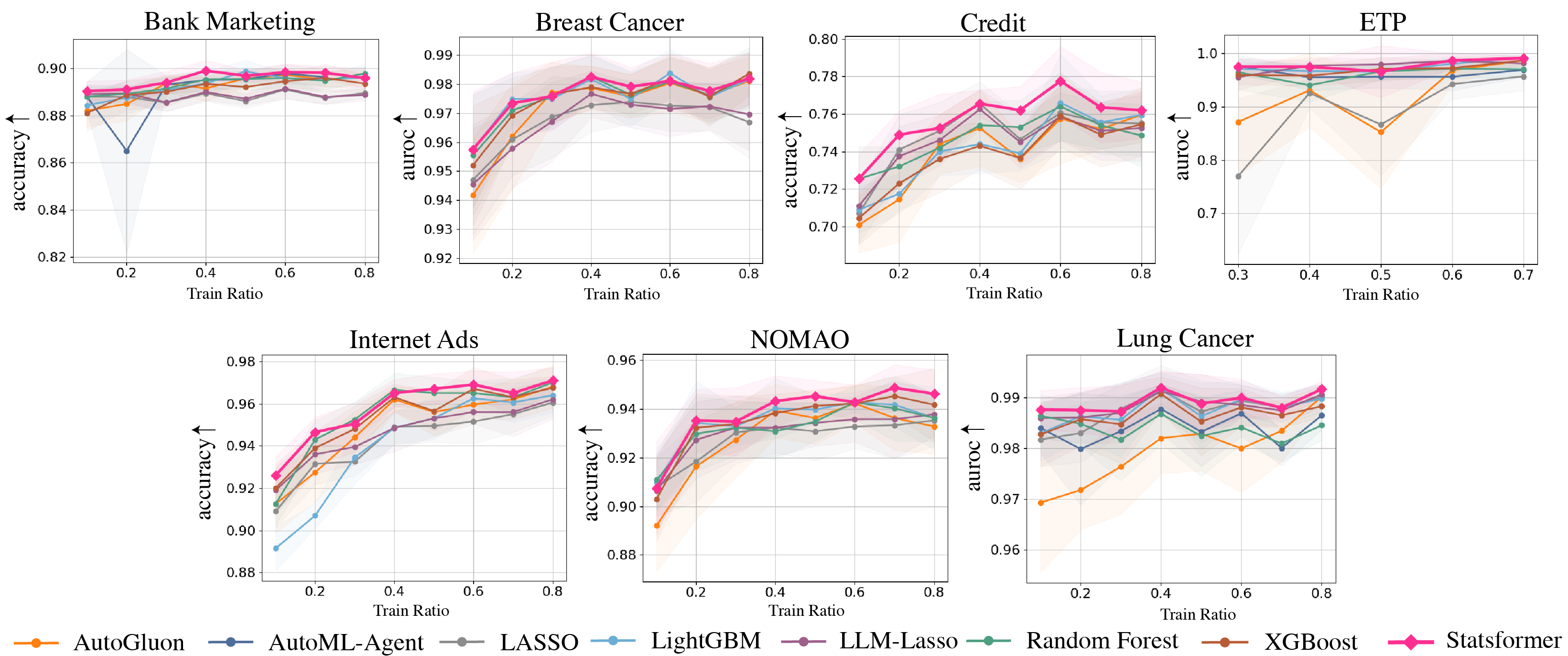}
    \caption{\scriptsize Comparison of Statsformer to baselines for the metrics not provided in Figure~\ref{fig:baseline-comparison} in the main paper.
    As in Figure~\ref{fig:baseline-comparison}, we plot the mean of the selected metrics 10 different train-test splits (selected via stratified splitting), as well as 95\% confidence intervals.
    The AutoML-Agent comparison is only included in three of the datasets due to computational constraints.} \label{fig:appdx_us_vs_baselines}
\end{figure}

\subsection{Deferred Comparison with Statsformer (No Prior)}

Figure~\ref{fig:appdx_us_vs_stacking} compares Statsformer to the no-prior variant (plain out-of-fold stacking) for datasets not shown in Figure~\ref{fig:us_vs_stacking}. 
While the gains are modest, they are consistent.
This consistent improvement is also reflected in the percentage improvements and win ratios reported in Tables~\ref{tab:statsformer_full_summary} of the main paper.

\begin{figure}[htbp!]
    \centering
    \includegraphics[width=1.0\linewidth]{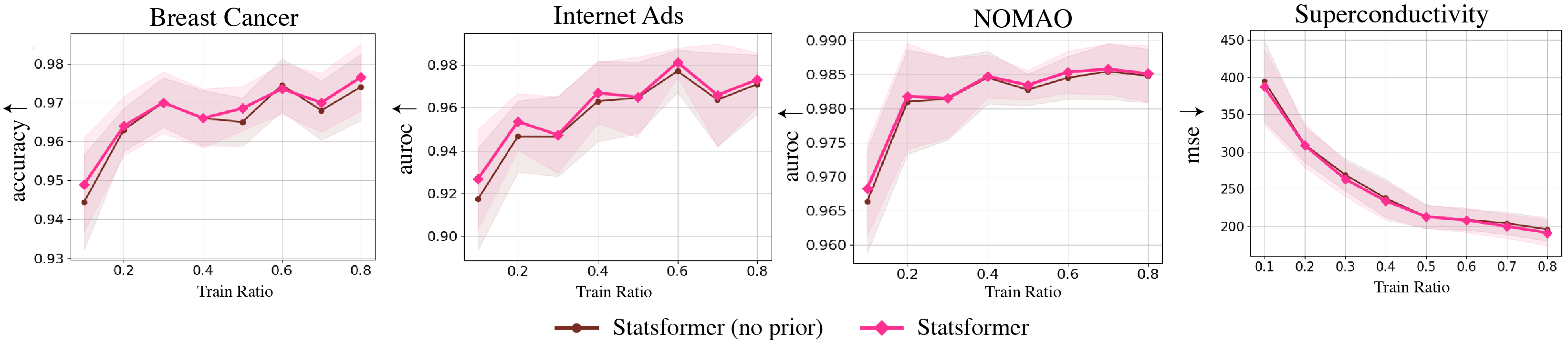}
    \caption{\scriptsize Comparison of Statsformer to the no-prior version of Statsformer, for datasets not shown in the main paper (Figure~\ref{fig:us_vs_stacking}).
    Due to less informative priors, the gains are smaller than for the datasets selected in Figure~\ref{fig:us_vs_stacking}, but they are nonetheless consistent.}
    \label{fig:appdx_us_vs_stacking}
\end{figure}
\begin{figure}[htbp!]
    \centering 
    \includegraphics[width=1.0\linewidth]{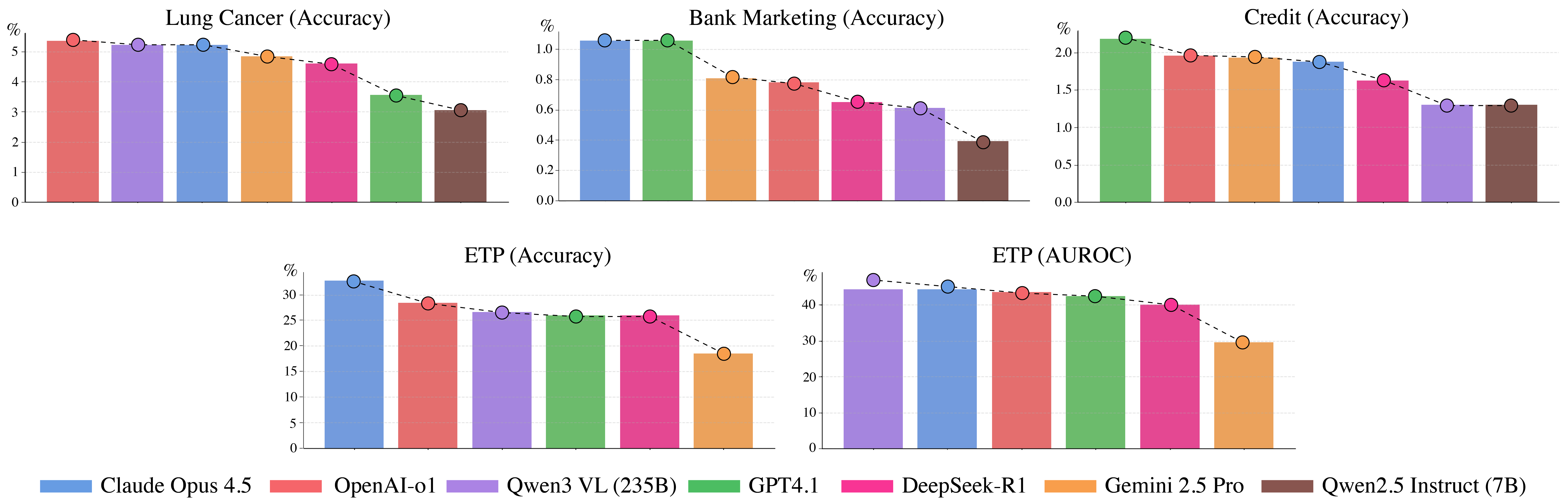}
    \caption{\scriptsize Additional experimental results for LLM model ablation, plotting mean percentage improvement.
    The improvement is calculated as the mean metric difference of Statsformer and the no-prior version, divided by the mean baseline error (i.e., Error or $1-\text{AUROC}$), and expressed as a percentage.
    As Qwen2.5 Instruct (7b) failed to produce results for the ETP dataset (due to difficulties parsing some of the gene names), it has been omitted from the ETP comparison.}
    \label{fig:model_ablation_append}
\end{figure}

\subsection{Deferred LLM Ablations}\label{model_ablation_append}

For model ablation study, we access all LLMs via either \texttt{OpenAI} API or \texttt{OpenRouter}. For non-OpenAI models, we consider Qwen series involving Qwen2.5 Instruct (7B) and Qwen3 VL (235B).
We consider Claude Opus 4.5, DeepSeek-R1, and Gemini 2.5 Pro.
We compute exactly the same statistics as in Figure~\ref{fig:model_ablation_main}.
Figure \ref{fig:model_ablation_append} presents additional experimental results, which is consistent with our observation discussed in main Section \ref{result_main}.
Note that for the ETP dataset, we do not report Qwen2.5 Instruct (7B) results, as the model consistently failed to produce valid scores (i.e., it was unable to parse some of the feature names).

\section{Prompts}\label{sec:prompts}

In Figures~\ref{fig:system_prompt_example}, \ref{fig:prompt_example}, and \ref{fig:task_description_example}, we show system and user prompts used in producing LLM-generated scores for Statsformer.
Figure~\ref{fig:user_prompt_example_automl} shows an example prompt for the AutoML-Agent baseline.

\begin{figure}[p]
    \centering
    \begin{tcolorbox}[
        width=0.92\linewidth,
        colback=gray!5,
        colframe=gray!50,
        boxrule=0.5pt,
        arc=2pt,
        left=6pt,
        right=6pt,
        top=6pt,
        bottom=6pt
    ]
    \scriptsize %\ttfamily
    You are an expert statistical reasoning assistant trained in both machine learning and scientific literature interpretation. Your task is to estimate feature importance for predictive modeling problems in high-dimensional, low-sample-size settings (such as biomedical prediction, genomics, or other sparse data domains). \\

    Your goals are: \\
    - **Scientific caution**: Prioritize well-established knowledge and plausible domain-specific/statistical rationale. Do not invent evidence or cite specific studies unless they are well-known and generalizable. \\
    - **Analytical rigor**: Reason about each feature in the context of the task using mechanistic, statistical, or empirical justification. Avoid overconfidence. \\
    - **Score calibration**: Assign scores conservatively. It is acceptable (and often desirable) that many features receive low scores if evidence of importance is weak or unclear. \\
    - **Faithful formatting**: Output strictly valid JSON in the exact format requested by the user prompt. \\
    
    Avoid: \\
    - Speculation or fabricated evidence. \\
    - Extraneous commentary or explanations outside of the JSON output. \\
    - Mentioning uncertainty explicitly in text (reflect uncertainty through the magnitude of scores). \\
    
    Your goal is to produce reasoned, evidence-based feature importance scores that can serve as priors for statistical modeling.
    \end{tcolorbox}
    \caption{\scriptsize System prompt used for querying feature-importance scores from LLMs. }
    \label{fig:system_prompt_example}
\end{figure}

\begin{figure}[htbp]
    \centering
    \begin{tcolorbox}[
        width=0.92\linewidth,
        colback=gray!5,
        colframe=gray!50,
        boxrule=0.5pt,
        arc=2pt,
        left=6pt,
        right=6pt,
        top=6pt,
        bottom=6pt
    ]
    \scriptsize %\ttfamily
    **Context**: \{\{context\}\}

    **Prediction Task**: \{\{task\}\} \\
    
    You are asked to assign importance scores to a set of features for use in a statistical prediction model (e.g., Lasso, XGBoost, or logistic regression). The data are high-dimensional with limited samples, so parsimony and caution are critical. \\
    
    **Objective**: \\
    For each feature in the provided list, assign an importance score between 0.1 and 1.0 (inclusive).  \\
    - A score closer to 1.0 indicates strong, well-established relevance or a robust mechanistic rationale for predicting "\{\{task\}\}". \\
    - A score near 0.1 indicates weak, uncertain, or unsupported relevance. \\
    - Most features may appropriately receive low scores. \\
    
    **Reasoning Guidelines**: \\
    1. Base your assessment on established knowledge, logical domain reasoning, or widely accepted statistical principles.  \\
    2. Avoid speculation or over-interpretation. \\
    3. You may reason internally but must output only the final scores. \\
    4. Do not skip any features. \\
    
    **Output Requirements**: \\
    - Output strictly valid JSON and nothing else.  \\
    - Use the format provided below. \\
        
    **Output Format**: \\
    \{"scores": \{ \\
            "FEATURE\_NAME\_01": floating\_point\_score\_value, \\
            "FEATURE\_NAME\_02": floating\_point\_score\_value, \\
            ...one score per feature name. \\
        \}\} \\
    
    **Features**:
    \{\{features\}\}
    \end{tcolorbox}
    \caption{\scriptsize User prompt format used to elicit feature-importance scores from LLMs. \texttt{Context} and \texttt{Task} are specified for each task; see Figure~\ref{fig:task_description_example} for an example, and refer to our codebase for the remainder of the datasets.
    \texttt{Features} is replaced with the list of feature names in the current batch, with Python array syntax.}
    \label{fig:prompt_example}
\end{figure}

\begin{figure}[htbp]
    \centering
    \begin{tcolorbox}[
        width=0.92\linewidth,
        colback=gray!5,
        colframe=gray!50,
        boxrule=0.5pt,
        arc=2pt,
        left=6pt,
        right=6pt,
        top=6pt,
        bottom=6pt
    ]
    \scriptsize %\ttfamily
    \textbf{Context}: We have bulk RNA sequencing data (microarray analysis) derived from a clinically annotated set of deidentified tumor samples. Each feature is the expression level for a given gene. We wish to build a statistical model that classifies samples into the categories "Early T-cell precursor Acute Lymphoblastic Leukemia/Lymphoma (ETP)" and "non-ETP".

    \textbf{Task}: classifying ETP (early T-cell precursor Acute Lymphoblastic Leukemia/Lymphoma) vs. non-ETP.
    \end{tcolorbox}
    \caption{\scriptsize Example task description for the ETP dataset. }
    \label{fig:task_description_example}
\end{figure}

\begin{figure}[htbp]
    \centering
    \begin{tcolorbox}[
        width=0.92\linewidth,
        colback=gray!5,
        colframe=gray!50,
        boxrule=0.5pt,
        arc=2pt,
        left=6pt,
        right=6pt,
        top=6pt,
        bottom=6pt
    ]
    \scriptsize
    Build a binary classification, multiclass classification, regression model to predict the 'target' column from the provided dataset. The dataset is located at the data path provided. \\

    CRITICAL REQUIREMENTS - FOLLOW EXACTLY: \\
    1. The dataset provided contains ONLY training data. DO NOT perform any train/test splits. \\
    2. DO NOT import or use \texttt{train\_test\_split()} from \texttt{sklearn.model\_selection} -- it is FORBIDDEN. \\
    3. Use ALL rows in the CSV file for training -- load the entire dataset and train on it. \\
    4. The data will have columns named 'feature\_0', 'feature\_1', etc. and a 'target' column. \\
    5. You may create a preprocessing pipeline (e.g., \texttt{ColumnTransformer}) for handling missing values and encoding. \\
    6. DO NOT use feature selection techniques that remove features (like RFE) -- keep all original features. \\
    7. DO NOT use resampling techniques (like SMOTE) -- use the data as-is. \\
    8. Train your model on the preprocessed data. \\
    9. Save BOTH the model AND the preprocessor together to 'model.pkl' using \texttt{joblib.dump()}. \\
    10. Save them as a dictionary: \texttt{\{'model': trained\_model, 'preprocessor': preprocessor\}}. \\
    11. The preprocessor must be able to transform new data with the same column structure (feature\_0, feature\_1, ..., target). \\
    12. The model must have \texttt{.predict()} and \texttt{.predict\_proba()} methods for classification (or \texttt{.predict()} for regression). \\
    13. Do not create validation or test sets -- only train on the full dataset. \\
    14. Do not evaluate the model in the code -- just train and save it. \\

    EXAMPLE CODE STRUCTURE: \\
    \texttt{```python} \\
    \texttt{import pandas as pd} \\
    \texttt{import joblib} \\
    \texttt{from sklearn.preprocessing import StandardScaler, OneHotEncoder} \\
    \texttt{from sklearn.compose import ColumnTransformer} \\
    \texttt{from sklearn.ensemble import RandomForestClassifier} \\
    \texttt{~} \\
    \texttt{\# Load ALL data (no splitting)} \\
    \texttt{df = pd.read\_csv('data\_path/data.csv')} \\
    \texttt{X = df.drop('target', axis=1)} \\
    \texttt{y = df['target']} \\
    \texttt{~} \\
    \texttt{\# Create preprocessor} \\
    \texttt{preprocessor = ColumnTransformer([...])} \\
    \texttt{X\_processed = preprocessor.fit\_transform(X)} \\
    \texttt{~} \\
    \texttt{\# Train model on ALL data} \\
    \texttt{model = RandomForestClassifier(...)} \\
    \texttt{model.fit(X\_processed, y)} \\
    \texttt{~} \\
    \texttt{\# Save model AND preprocessor together} \\
    \texttt{joblib.dump(\{'model': model, 'preprocessor': preprocessor\}, 'model.pkl')} \\
    \texttt{```} \\

    Remember: NO \texttt{train\_test\_split}, NO feature selection that removes features, NO resampling. Train on ALL data and save model + preprocessor together.
    \end{tcolorbox}
    \caption{\scriptsize Dynamic user prompt generated for the coding agent.}
    \label{fig:user_prompt_example_automl}
\end{figure}

\section{Societal Impact}\label{sec:broader-impact}
We note that the use of large language models to generate semantic priors may reflect biases present in their training data, and the use of closed-source models can raise transparency and reproducibility considerations.

% \clearpage
% \newpage
% \input{checklist.tex}

\end{document}